\documentclass{article}

    \PassOptionsToPackage{numbers, compress}{natbib}

\usepackage[final]{neurips_2024}




\usepackage[utf8]{inputenc} 
\usepackage[T1]{fontenc}    
\usepackage{hyperref}       
\usepackage{url}            
\usepackage{booktabs}       
\usepackage{amsfonts}       
\usepackage{nicefrac}       
\usepackage{microtype}      
\usepackage{xcolor}         
\usepackage{hyperref}




\usepackage{amsmath}
\usepackage{amssymb}
\usepackage{mathtools}
\usepackage{amsthm}
\usepackage{subcaption}
\usepackage[toc,page,header]{appendix}
\usepackage{minitoc}
\usepackage[capitalize]{cleveref}

\definecolor{asparagus}{rgb}{0.53, 0.66, 0.42}
\usepackage{thm-restate}
\theoremstyle{plain}
\newtheorem{theorem}{Theorem}[section]

\newtheorem{corollary}[theorem]{Corollary}
\theoremstyle{definition}

\theoremstyle{remark}
\newtheorem{remark}[theorem]{Remark}

\newcommand{\add}[1]{#1}
\newcommand{\mpar}[1]{\left[ #1 \right]}
\newcommand{\algoname}{\textsf{VLG-CBM}}
\newcommand{\inputspace}{\mathcal{X}}
\newcommand{\labelspace}{\mathcal{Y}}
\newcommand{\image}{x_i}
\newcommand{\clslabel}{y_i}
\newcommand{\backbone}{\phi}
\usepackage[textsize=tiny]{todonotes}

\begin{document}
\doparttoc 
\faketableofcontents 


\title{VLG-CBM: Training Concept Bottleneck Models \\ with Vision-Language Guidance}
\author{Divyansh Srivastava\thanks{Equal contribution}, Ge Yan$^{*}$, Tsui-Wei Weng \\ \{ddivyansh, geyan, lweng\}@ucsd.edu \\UC San Diego}
\maketitle

\begin{abstract}
Concept Bottleneck Models (CBMs) provide interpretable prediction by introducing an intermediate Concept Bottleneck Layer (CBL), which encodes human-understandable concepts to explain models' decision. Recent works proposed to utilize Large Language Models and pre-trained Vision-Language Models to automate the training of CBMs, making it more scalable and automated. However, existing approaches still fall short in two aspects: First, the concepts predicted by CBL often mismatch the input image, raising doubts about the faithfulness of interpretation. Second, it has been shown that concept values encode unintended information: even a set of random concepts could achieve comparable test accuracy to state-of-the-art CBMs. To address these critical limitations, in this work, we propose a novel framework called Vision-Language-Guided Concept Bottleneck Model (\algoname) to enable faithful interpretability with the benefits of boosted performance. Our method leverages off-the-shelf open-domain grounded object detectors to provide visually grounded concept annotation, which largely enhances the faithfulness of concept prediction while further improving the model performance. In addition, we propose a new metric called Number of Effective Concepts (NEC) to control the information leakage and provide better interpretability. Extensive evaluations across five standard benchmarks show that our method, \algoname, outperforms existing methods by at least 4.27\% and up to 51.09\% on \textit{Accuracy at NEC=5} (denoted as ANEC-5), and by at least 0.45\% and up to 29.78\% on \textit{average accuracy} (denoted as ANEC-avg), while preserving both faithfulness and interpretability of the learned concepts as demonstrated in extensive experiments\footnote{Our code is available at \href{https://github.com/Trustworthy-ML-Lab/VLG-CBM}{https://github.com/Trustworthy-ML-Lab/VLG-CBM}}.    

\end{abstract}

   
  

\begin{figure*}[b!]
\includegraphics[width=\textwidth]{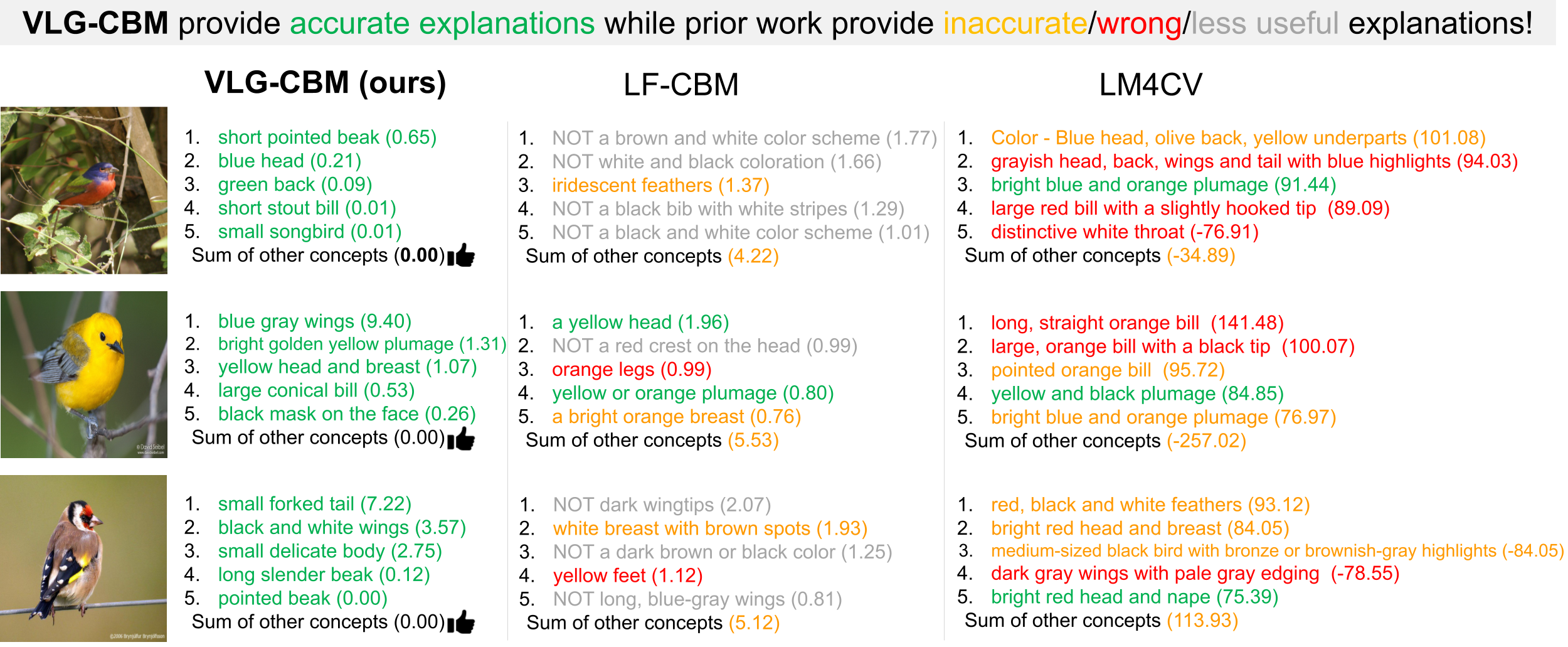}
\centering
    \caption{We compare the decision explanation of \algoname~with existing methods by listing top-5 contributions for their decisions. Our observations include: (1)  \algoname~provides \textit{concise} and \textit{accurate} concept attribution for the decision; (2) LF-CBM~\cite{oikarinen2023label} frequently uses negative concepts for explanation, which is less informative; (3) LM4CV\cite{yan2023learning} attributes the decision to concepts that do not match the images, a reason for this is that LM4CV uses a limited number of concepts, which hurts CBM's ability to explain diverse images; (4) Both LF-CBM and LM4CV have a significant portion of contribution from non-top concepts, making decisions less transparent. Full figure is in Appendix Fig. \ref{fig1:zoomed_image}.}
    \label{fig:CaseStudyMain}
\end{figure*}

\section{Introduction}
\label{sec:intro}
As deep neural networks become popular in real-world applications, it is crucial to understand the decision of these black-box models. One approach to provide interpretable decisions is the Concept Bottleneck Model (CBM) \cite{koh2020concept}, which introduced an intermediate concept layer to encode human-understandable concepts. The model makes final predictions based on these concepts. Unfortunately, one major limitation of this approach is that it requires concept annotations from human experts, making it expensive and less applicable in practice as concept labels may not always be available. 

Recently, a line of works utilized the powerful Vision-Language Models (VLMs) to replace manual annotation \cite{oikarinen2023label,yang2023language, yan2023learning}. They used Large Language Models (LLMs) to generate set of concepts, and then trained the models in a post-hoc manner under the guidance of VLMs or neuron-level interpretability tool~\cite{oikarinen2023clipdissect}. By eliminating the expensive manual annotations, some of these CBMs ~\cite{oikarinen2023label} could be scaled to large datasets such as ImageNet \cite{russakovsky2015imagenet}. However, these CBMs \cite{oikarinen2023label,yang2023language,yan2023learning} still face two critical challenges:
\begin{enumerate}
    \item \textbf{Challenge \#1: Inaccurate concept prediction.} The concept predictions in these CBMs often contain factual errors i.e. the predicted concepts do not match the image. Moreover, as concepts are generated by LLMs, there are some non-visual concepts, for example "loud music" or "location" used in LF-CBM~\cite{oikarinen2023label}, which further hurt the faithfulness of concept prediction. 
    \item \textbf{Challenge \#2: Information leakage.} Recently, \cite{margeloiu2021concept,mahinpei2021promises} observed the information leakage in CBMs through empirical experiments -- they found that the concept prediction encodes unintended information for downstream tasks, even if the concepts are irrelevant to the task. 
\end{enumerate}

In this paper, we propose a new framework called \textbf{V}ision-\textbf{L}anguage-\textbf{G}uided \textbf{C}oncept \textbf{B}ottleneck \textbf{M}odel (\algoname) to address these two major challenges. Our contributions are summarized below:
\begin{enumerate}
    \item To address \textbf{Challenge \#1}, we propose to use the open-domain grounded object detection model to generate localized, visually recognizable concept annotations in \cref{sec:method}. This approach automatically filters the non-visual concepts. Furthermore, the location information is utilized to augment the data. As far as we know, our \algoname~is the first end-to-end pipeline to build CBM with vision guidance from open-vocabulary object detectors.
    
    \item To address \textbf{Challenge \#2}, we provide the first rigorous theoretical analysis which proves that CBMs have serious issues on information leakage in \cref{subsec:random_cbl_theory}, whereas previous study on information leakage \cite{mahinpei2021promises,yan2023learning} only provides empirical explanations. Building on our theory, we further propose a new metric called the Number of Effective Concepts (NEC) in Section~\ref{subsec:nec}, which facilitates fair comparison between different CBMs. We also show that using NEC can help to effectively control information leakage and enhance interpretability in our \algoname. 
    
    \item We conduct a series of experiments in Section~\ref{sec:exp} and demonstrate that our \algoname~outperforms existing methods across 5 standard benchmarks by at least 4.27\% and up to 51.09\% on \textit{Accuracy at NEC=5} (denoted as ANEC-5), and by at least 0.45\% and up to 29.78\% on \textit{average accuracy across different NECs} (denoted as ANEC-avg). Our learned CBM achieves a high sparsity of 0.2\% in the final layer even on large datasets including Places365, preserving interpretability even with a large number of concepts. Additionally, we qualitatively demonstrate that our method provides more accurate concept attributions compared to existing methods.
    
\begin{table*}[t!]
\centering
\scalebox{0.77}{
\begin{tabular}{@{}l||cc|cc|cc@{}}
\toprule
& \multicolumn{2}{c|}{\textbf{Evaluation}} & \multicolumn{2}{c|}{\textbf{Flexibility}} & \multicolumn{2}{c}{\textbf{Interpretability}} \\ \cmidrule{2-7}
Method 
& Control on & Unlimited concept & Flexible & Accurate concept& Vision-guided & Interpretable  \\ 
 & information leakage & numbers & backbone & prediction & concept filtering & decision \\  \midrule
\textbf{Baselines:} & & & & & & \\ 
LF-CBM\cite{oikarinen2023label} & $\triangle$ & $\checkmark$ & $\checkmark$ & $\triangle$ & $\times$ & $\triangle$ \\
LaBo\cite{yang2023language} & $\times$ & $\checkmark$ & $\times$ & $\triangle$ & $\times$ & $\triangle$ \\
LM4CV\cite{yan2023learning} & $\checkmark$ & $\times$ & $\times$ & $\triangle$ & $\triangle$ & $\triangle$ \\
\midrule 
\textbf{This work:} & & & & & & \\
VLG-CBM  & $\checkmark$ & $\checkmark$ & $\checkmark$ & $\checkmark$ & $\checkmark$ & $\checkmark$ \\ \bottomrule
\end{tabular}
}
\caption{Comparative analysis of methods based on evaluation, flexibility, and interpretability. Here, $\checkmark$ denotes the method satisfies the requirement, $\triangle$ denotes the method partially satisfies the requirement, and $\times$ denotes the method does not satisfy the requirement. We compare with SOTA methods including LF-CBM~\cite{oikarinen2023label}, Labo~\cite{yang2023language} and LM4CV~\cite{yan2023learning}.}
\label{tab:comparison_methods}
\end{table*}
\end{enumerate}

\section{Related work}
\textbf{Concept Bottleneck Model (CBM).} The seminal paper \cite{alvarez2018towards} first proposed self-explaining models by leveraging the idea of autoencoder to learn interpretable basis concepts in an unsupervised manner without pre-specified concepts. Later, \cite{koh2020concept} proposed to learn interpretable concepts with human specifications (labels) in the concept bottleneck layer (CBL), and coin the term 
Concept Bottleneck Models (CBM). CBL is followed by a linear prediction layer, which maps concepts to classes, enabling interpretable final decisions. Formally, let feature representation generated by a frozen backbone represented by $z = \backbone(x)$, CBL concept prediction as $g(z) = W_cz$, and the final prediction layer as $h(\cdot) = W_{F}g(z) + b_{F}$. The final class prediction of the CBM is given by $\hat{y} = h(g(z)) = h \circ g \circ \backbone(x)$. 

Under this setting, the key in training a CBM is obtaining an annotated \{(image, concept)\} paired dataset for training concept bottleneck layer $g$. In \cite{koh2020concept}, the authors used human-specified labels to train the CBL in a supervised way. However, obtaining labels with human annotators could be very tedious and costly. Recently, \cite{oikarinen2023label}, \cite{yan2023learning}, and \cite{yang2023language} proposed to utilize Large Language Models (LLM) to generate a set of concepts $S$, then train CBL by aligning image and concepts with the guidance of vision language models (e.g. CLIP). For example, \citet{oikarinen2023label} proposed LF-CBM to train CBM by directly learning a mapping from the embedding space of backbone to concept values in the CLIP space using cosine cubed loss function with the neuron interpretability tool\cite{oikarinen2023clipdissect}, and then mapping concepts to classes using sparse linear layer. \cite{yan2023learning} proposed LM4CV, a task-guided concept searching method that learns text embeddings in the CLIP space, and then maps the learned embeddings to concepts obtained from LLM using nearest neighbor. \citet{yang2023language} proposed LaBo, using submodular optimization to reduce the concept set, followed by using CLIP backbone for obtaining concept values. However, as we show in \cref{Performance comparison,sec:CaseStudy} , these methods suffer from multiple issues: (i) The concept prediction is often incorrect and does not capture the visual attributes required for downstream class prediction (e.g. see \cref{fig:CaseStudyMain} b   )(ii) VLMs like CLIP suffer from modality gap between image and text embeddings \cite{liang2022mind} resulting in encoding unintended information, and even random concepts can achieve high accuracy \cite{yan2023learning}. To address these issues, we explicitly ground the concepts on the training dataset using an open-domain object detection model and then using the obtained concepts for learning CBL -- this can ensure a more faithful representation of fine-grained concepts and avoids the modality gap issues introduced by VLMs. \cref{tab:comparison_methods} demonstrates the superiority of \algoname~over existing methods \cite{oikarinen2023label, yan2023learning, yang2023language} on properties including controlling information leakage, flexibility to use any backbone, and accurate concept prediction.       

There are some recent works aim at addressing the challenges of CBMs. Similar to us, \citet{pham2024peeb} uses an open-vocabulary object detection model to provide an explainable decision. However, their model is directly adapted from an OWL-ViT model, while our \algoname~uses an open-vocabulary object detection model to train a CBL over any base model, providing more flexibility. Additionally, their model requires pretraining to get best performance, while our \algoname~could be applied post-hoc to any pretrained model. \citet{kim2023conceptbottleneckvisualconcept} proposed to filter non-visual concepts by adding a vision activation term to the concept selection step, whereas \algoname{} uses an open-vocabulary object detectors in multiple stage of CBM pipeline: for filtering non-visual concepts and the  guiding training of concept bottleneck layer. \citet{sun2024eliminating} aims at eliminating the information leakage, and the authors evaluate the information leakage by measuring the performance drop speed after removing top-contributing concepts. This metric can be controlled by our proposed NEC metric, because the performance reach minimum after removing all contributing concepts.  \citet{roth2023wafflingperformancevisualclassification} demonstrate that random words and characters achieve comparable CLIP zero-shot performance on visual classification tasks. However, their work does not address information leakage problem and is a very different setting from our work. To date, most of the CBMs focused on vision domains, including this work. There are some recent work applying CBM approach to different domains and different tasks, e.g. interpretable language models for text classifications~\cite{Sun2024crafting,c3m,tbm} and for continual learning~\cite{yang2024cdcl}. We refer the interested readers to their papers for more details.



\textbf{Open Domain Language Grounded Object Detection.} Recent works, including GLIP \cite{li2022grounded}, GLIPv2 \cite{zhang2022glipv2}, and GroundingDINO \cite{liu2023grounding} detect objects in images in an open-vocabulary manner conditioned on natural language queries. In this work we propose to utilize open-vocabulary object detectors for automatically generating grounded concept dataset for training CBMs. This removes the need for human labelers, which is costly, tedious, and does not scale to large datasets. Further, the detected objects provide necessary vision-guidance for CBMs training as demonstrated in our experiments.

\begin{figure*}[!t]
  \centering
  \includegraphics[height=.5\linewidth]{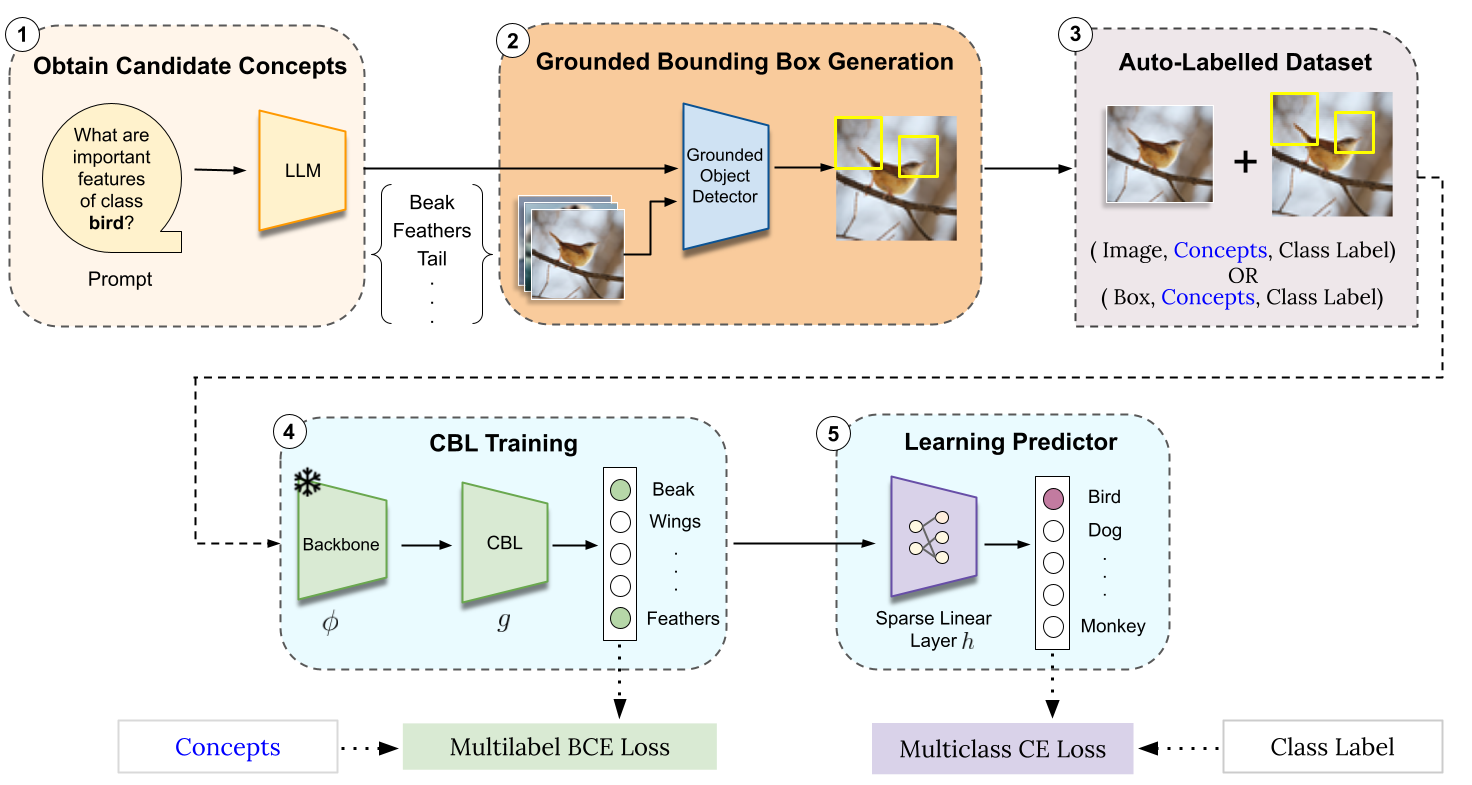}
  \caption{\algoname~ pipeline: We design automated Vision+Language Guided approach to train Concept Bottleneck Models.}
  \label{fig:Overall-Pipeline-for-VLGCBM}
\end{figure*}

\section{Method}
\label{sec:method}
In this section, we describe our novel automated approach to train a CBM with both Vision and Language Guidance to ensure faithfulness, which is currently lacking in the field. Our approach, abbreviated as \algoname~in the paper, generates an auxiliary dataset grounded on fine-grained concepts present in images for training a sequential CBM. Section \ref{Automated generation of CBM dataset} describes our approach to generating an auxiliary dataset used in training CBM, Section \ref{Training Concept Bottleneck Layer} describes our approach to training concept bottleneck layer, and Section \ref{Mapping Concept to Classes} describes the training of sparse layer to obtain class labels from concepts in an interpretability-preservable manner. The overall pipeline is shown in \cref{fig:Overall-Pipeline-for-VLGCBM}.

\subsection{Automated generation of auxiliary dataset}
\label{Automated generation of CBM dataset}
Here we describe our novel automated approach for generating labeled datasets for training CBMs. Let $f:\inputspace \rightarrow \labelspace$ be the neural network mapping images to corresponding class labels, where $\inputspace=\mathbb{R}^{H \times W \times 3}$ denotes the input image space and $\labelspace = \{1, 2, \ldots, C\}$ denotes the label space, $C$ is the number of classes.  Denote $D = \{(\image, \clslabel)\}, \image \in \inputspace, \clslabel \in \labelspace $ the dataset used for training $f$, where $\image$ is the $i$-th image and $\clslabel$ is the corresponding label. Let $S$ be a set of natural-language concepts describing the fine-level visual details from which classes are composed. We propose to generate a modified and auxiliary dataset $D'$ from $D$ such that each image contains finer-grained concepts that are useful in predicting the classes, along with the target class. The overall process of obtaining the modified dataset $D'$ can be divided into two steps:
\begin{itemize}

    \item \textbf{Language supervision from LLMs to generate a set of candidate concepts}: We follow the steps proposed in LF-CBM \cite{oikarinen2023label} for generating candidate concepts $S_{c}$ for each class $c$ by prompting LLM to obtain visual features describing the class.
    
    \item \textbf{Vision supervision from Open-domain Object Detectors to ground candidate concepts to spatial information}: We propose using Grounding-DINO\cite{liu2023grounding} Swin-B, current state-of-the-art grounded object detector, for obtaining bounding boxes of candidate concepts in the dataset. For each image $\image$ with class label $c$ and candidate concepts $S_{c}$, we prompt Grounding DINO model with $S_{c}$ and obtain $K_{i}$ bounding boxes:
    \begin{equation}
        B_i = \{(b_{j}, t_j, s_j)\}_{j=1}^{K_i},
    \end{equation}
    where $b_j \in \mathbb{R}^{4 \times 2}$ is the $j$-th bounding box coordinates, $t_j \in \mathbb{R}$ is the corresponding confidence given by the model and $s_j \in S_{c}$ is the concept of this bounding box.
    We define a confidence threshold $T$ and remove bounding boxes with confidence less than $T$ to get filtered bounding boxes for each image:
    \begin{equation}
        \label{Eq:FilterBoundingBox}
        \tilde{B}_{i}  =  \{(b, t, s) \in B_i \mid  t > T\}.
    \end{equation}

\end{itemize}

After collecting bounding boxes for every image, we filter out the concepts that do not appear in any bounding box, and get our final concept set $\tilde{S}$: 
\begin{equation}
    \tilde{S} = \{ s \in S \mid \exists (\cdot, \cdot, s) \in \cup_{i=1}^{|D|} \tilde{B}_{i} \}.
\end{equation}
The one-hot encoded concept label vector $o_i \in \{0, 1\}^{|\tilde{S}|}$ for image $\image$  is thus defined as:
\begin{equation}
    (o_i)_j = \begin{cases}
        1,& \text{if $s_j$ appears in $\tilde{B}_i$},\\
        0,& \text{otherwise}.
    \end{cases}
\end{equation}
Our final concept-labeled dataset $D'$ for training CBM can be written as:
\begin{equation}
    \begin{split}
        D'  = \{(\image, o_i, \clslabel) \}_{i=1}^{|D|}
    \end{split}
\end{equation}

\subsection{Training Concept Bottleneck Layer}
\label{Training Concept Bottleneck Layer}
After constructing the concept-labeled dataset $D'$, we now define our approach to train the concept bottleneck layer for predicting the fine-grained concepts in the input image in a multi-label classification setting. Let $\backbone: \inputspace \rightarrow \mathbb{R}^{d} $ be a backbone that generates $d$-dimensional embeddings $z = \backbone(x)$ for input image $x$. Note that $\backbone(x)$ can be a pre-trained backbone or trained from scratch. Define $g$ to be the Concept Bottleneck Layer (CBL) which maps embeddings to concept logits. We train a sequential CBM \cite{koh2020concept, margeloiu2021concept} $g(\backbone(x))$ to predict concepts  in an image using Binary Cross Entropy (BCE) loss for multi-label prediction. Additionally, to improve the diversity of the concept-labeled dataset $D'$, we augment the training dataset by cropping images to a randomly selected bounding box and modifying the target one-hot vector to predict the concept corresponding to the bounding box. Our optimization objective in terms of BCE loss can be written as:
\begin{equation}
    \begin{split}
        \min_{g} \mathcal{L}_{CBL}, \; \mathcal{L}_{CBL} & = \frac{1}{|D'|}  \sum_{i=1}^{|D^\prime|} BCE[g \circ \backbone ( \image), o_{i}]
    \end{split}
\end{equation}

\subsection{Mapping Concept to Classes}
\label{Mapping Concept to Classes}
In this section, we define our approach to training a sparse linear layer to obtain class labels from concepts in an interpretability-preservable manner. Let $h: \mathbb{R}^{d} \rightarrow \mathbb{R}^{C}$ be the sparse linear layer with weight matrix $W_F$ and bias $b_F$, which maps concept logits to class logits. We train the sparse layer using the original dataset $D$ by first obtaining concept logits from the trained CBL(frozen), normalizing the concept logits with the mean and variance on training set, and then using them to predict class logits. Our optimization objective in terms of Cross Entropy (CE) loss can be written as:
\begin{equation}
    \begin{split}
        \min_{h} \mathcal{L}_{SL}, \; \mathcal{L}_{SL} & = \frac{1}{|D|} \sum_{(x, y) \in D} CE[h\circ g \circ \backbone(x), y] + \lambda R_\alpha,
    \end{split}
\end{equation}
where $R_\alpha = (1-\alpha)\frac{1}{2}\|W_F\|^2_2 + \alpha \|W_F\|_1$ is the elastic-net regularization \cite{zou2005elasnet} on weight matrix $W_F$, $\lambda$ is a hyperparameter controlling regularization strength. We use GLM-SAGA\cite{wong2021leveraging} solver to solve this optimization problem.
\section{Unifying CBM evaluation with Number of Effective Concepts (NEC)}
\label{sec:NECevaluation}
Besides training, another important challenge for CBM is: \textit{how to evaluate the semantic information learned in the CBL?} Conventionally, the classification accuracy for final class labels is an important metric for evaluating CBMs, with the intuition that a good classification accuracy indicates that useful semantic information is learned in the CBL. However, purely using accuracy as the evaluation metric could be problematic, as it has been shown that information leakage exists in jointly or sequentially trained CBM \cite{margeloiu2021concept,mahinpei2021promises}. That is to say, the CBL could contain \textit{unintended information} that could be used for downstream classification hence achieving high classification accuracy, even if the concept is irrelevant to the task. In fact, recently \cite{yan2023learning} showed that, when increasing the number of concepts, a randomly selected concept set could even approach the accuracy of the concept set chosen with sophistication, supporting the existence of information leakage.   

To better understand this phenomenon, in section~\ref{subsec:random_cbl_theory}, we conduct a first theoretical analysis to investigate random CBL and its capability. To the best of our knowledge, this is the first formal analysis of random CBL. Next, inspired by our theoretical result, we propose a new evaluation metric for CBM, named NEC in section~\ref{subsec:nec}. NEC provides a way to control information leakage and enhance the interpretability of model decisions.


\subsection{Theoretical analysis of the Random CBL}
\label{subsec:random_cbl_theory}

We start by defining the notations. Denote $k$ the number of concepts in CBL. We assume that the CBL $g$ consists of a single linear layer: $g(z) = W_c z$, where $W_c \in \mathbb{R}^{k \times d}$ and $z \in \mathbb{R}^d$, and the final layer $h$ is also linear: $h \circ g(z) = W_F g(z) + b_F$, where $W_F \in \mathbb{R}^{C \times k}$, $b_F \in \mathbb{R}^C$. This is the common setting for CBMs. The following theorem suggests a surprising conclusion: \textit{a linear classifier upon random (i.e. untrained) CBL could accurately approximate any linear classifier trained directly on the representation, as the number of concepts in the CBL goes up}.
\begin{restatable}[]{thm}{mainthm}
    Suppose $\Sigma \in \mathbb{R}^{d \times d}$ is the variance matrix of the representation $z$ which is positive definite, $\lambda_{max}$ is the largest eigenvalue of $\Sigma$, and the weight matrix $W_c \in \mathbb{R}^{k \times d}$ is sampled i.i.d from a standard Gaussian distribution. For any linear classifier $f$ which is built directly on the representation $z$, i.e. $f: \mathbb{R}^d \rightarrow \mathbb{R}, f(z) = w^\top z + b$, it could be approximated by another linear classifier $\tilde{f}$ on concept logits $g(z) = W_cz$, i.e. $f(z) \approx \tilde{f} (z) = 
\tilde{w}^\top g(z) + \tilde{b}$, with the expected square error $E(k)$ upper-bounded by 
\begin{equation}
   E(k) \leq \begin{cases}
       \lambda_{max}(1 - \frac{k}{d}) \|w\|_2^2, & k < d;\\
       0, & k \geq d.\\
   \end{cases}
   \label{eq:ErrorUpperBound}
\end{equation}
Here $
    E(k) = \mathbb{E}_{W_c} \left[\min_{(\tilde{w}, \tilde{b})} \mathbb E_z\; \mpar{|f(z) - \tilde{f} (z)|^2} \right]$ denotes the average square error, $w \in \mathbb{R}^d$, $\tilde{w} \in \mathbb{R}^k$, $k$ is the number of concepts in CBL.  
    \label{thm:leakage}
\end{restatable}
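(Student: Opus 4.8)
The plan is to reduce the statement to a pure linear-algebra question about how well a random $k$-dimensional projection of $z$ can reconstruct the linear functional $z \mapsto w^\top z$ in the $L^2(z)$ sense. Fix the representation $z$ with covariance $\Sigma$; without loss of generality we may absorb the mean of $z$ into the bias terms, so assume $\mathbb{E}[z]=0$. For a fixed draw of $W_c$, the inner minimization $\min_{\tilde w, \tilde b}\mathbb{E}_z[|w^\top z + b - \tilde w^\top W_c z - \tilde b|^2]$ is an ordinary least-squares projection: the optimal $\tilde b$ kills the constant offset, and the residual is the squared $\Sigma$-norm of the component of $w$ orthogonal (in the $\Sigma$ inner product) to the row space of $W_c$. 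Concretely, writing $P_{W_c}$ for the $\Sigma$-orthogonal projector onto $\mathrm{row}(W_c)$, the inner optimum equals $(w - P_{W_c} w)^\top \Sigma (w - P_{W_c}w) = \|w\|_\Sigma^2 - \|P_{W_c} w\|_\Sigma^2$. So $E(k) = \|w\|_\Sigma^2 - \mathbb{E}_{W_c}\big[\|P_{W_c} w\|_\Sigma^2\big]$, and when $k \ge d$ the rows of $W_c$ almost surely span $\mathbb{R}^d$, giving $P_{W_c} = I$ and $E(k)=0$; this disposes of the second case immediately.

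For $k<d$ I would change variables to make the projection isotropic. Let $\Sigma = \Sigma^{1/2}\Sigma^{1/2}$ and set $u = \Sigma^{1/2} w$, $\tilde z = \Sigma^{-1/2} z$ (so $\tilde z$ is white). Then $w^\top z = u^\top \tilde z$, and $W_c z = (W_c \Sigma^{1/2})\tilde z$; since $W_c$ has i.i.d.\ standard Gaussian entries, the row space of $W_c \Sigma^{1/2}$ is \emph{not} rotationally invariant, which is the technical nuisance. Instead I would keep the geometry in the original $\Sigma$ inner product: the key fact is that $\mathrm{row}(W_c)$, as a random $k$-dimensional subspace of $\mathbb{R}^d$ spanned by $k$ i.i.d.\ Gaussian vectors, is distributed according to the rotation-invariant (Haar) measure on the Grassmannian $\mathrm{Gr}(k,d)$ with respect to the \emph{standard} inner product. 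The cleanest route is therefore to diagonalize differently: write everything in the standard inner product by substituting $v = \Sigma^{1/2} w$ is wrong — rather, observe directly that for a Haar-random $k$-subspace $V$ and its standard orthogonal projector $\Pi_V$, one has $\mathbb{E}[\Pi_V] = \tfrac{k}{d} I$.

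The remaining step is to convert the $\Sigma$-orthogonal projection appearing in $E(k)$ into a standard-orthogonal projection so this identity applies. Here I would use the following bound: since $P_{W_c}w$ is the $\Sigma$-orthogonal projection of $w$ onto $V:=\mathrm{row}(W_c)$, it is the minimizer over $v\in V$ of $\|w-v\|_\Sigma^2$, hence $\|w-P_{W_c}w\|_\Sigma^2 \le \|w - \Pi_V w\|_\Sigma^2 \le \lambda_{\max}\|w-\Pi_V w\|_2^2$, where $\Pi_V$ is the ordinary Euclidean projector onto $V$. Taking expectations and using $\mathbb{E}_{W_c}[\Pi_V] = \tfrac{k}{d}I$ gives $\mathbb{E}\|w-\Pi_V w\|_2^2 = \|w\|_2^2 - \tfrac{k}{d}\|w\|_2^2 = (1-\tfrac{k}{d})\|w\|_2^2$, and therefore $E(k) = \mathbb{E}\|w-P_{W_c}w\|_\Sigma^2 \le \lambda_{\max}(1-\tfrac{k}{d})\|w\|_2^2$, which is exactly \eqref{eq:ErrorUpperBound}.

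The main obstacle is the second paragraph's issue: correctly handling the interaction between the anisotropic loss geometry (the $\Sigma$ inner product coming from $\mathbb{E}_z$) and the isotropic randomness of $W_c$ (Gaussian in the standard inner product). The clean resolution is precisely the inequality sandwich above — bounding the hard-to-compute $\Sigma$-orthogonal residual by the easy-to-compute Euclidean-orthogonal residual at the cost of a factor $\lambda_{\max}$ — together with the standard Grassmannian first-moment identity $\mathbb{E}[\Pi_V]=\tfrac{k}{d}I$, which itself follows from rotation invariance (the expectation commutes with all orthogonal conjugations, hence is a scalar multiple of $I$, and the scalar is fixed by taking the trace: $\mathbb{E}[\operatorname{tr}\Pi_V] = k$). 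I would also need a short lemma verifying that a.s.\ the rows of $W_c$ are linearly independent when $k\le d$ and span $\mathbb{R}^d$ when $k\ge d$, which is immediate from absolute continuity of the Gaussian.
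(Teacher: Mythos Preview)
Your proposal is correct and follows essentially the same route as the paper: both reduce the inner minimum to the $\Sigma$-weighted least-squares distance from $w$ to $\mathrm{row}(W_c)$, upper-bound it by $\lambda_{\max}$ times the Euclidean least-squares distance, and then exploit the rotation invariance of the Gaussian row space to compute the expected Euclidean residual as $(1-\tfrac{k}{d})\|w\|_2^2$. The only cosmetic difference is that the paper phrases the last step via the QR factorization $W_c^\top = QR$ (with $Q$ Haar-orthogonal, so each coordinate of $Q^\top w$ has expected square $\|w\|_2^2/d$), whereas you use the equivalent Grassmannian first-moment identity $\mathbb{E}[\Pi_V]=\tfrac{k}{d}I$; these are two ways of stating the same symmetry computation.
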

\begin{remark}
    In \cref{thm:leakage}, we consider a 1-D regression problem where we use a linear combination of concept bottleneck neurons to approximate any linear function. The multi-class classification result could be derived by applying \cref{thm:leakage} to each class logit (see \cref{cor:multicls}). From \cref{eq:ErrorUpperBound}, we could see that the expected error goes down linearly when concept number $k$ increases, and achieves 0 when $k \geq d$, where $d$ is the dimension of backbone representation $z$. This suggests that, even with a random CBL (i.e. $W_c$ is simply drawn from a standard Gaussian distribution without any training), the classifier could still approximate the original classifier well and achieve good accuracy, when concept number $k$ is large enough. We defer the formal proof of \cref{thm:leakage} to \cref{app:ThmProof}. 
\end{remark}

\subsection{A New Evaluation Metric for CBM: Number of Effective Concepts (NEC)}
\label{subsec:nec}

\cref{thm:leakage} provides a formal theoretical explanation on  \cite{yan2023learning}'s observation. Moreover, it raises a concern on the evaluation of CBMs: \textit{model classification accuracy may not be a good metric for evaluating the semantical information learned in CBL, because a random CBL could also achieve high accuracy.} To address this concern, we need to control the concept number $k$ so that the semantically meaningful CBLs can be distinguished with random CBLs w.r.t. the final classification performance.

We notice that previous works mainly use two approaches to control $k$:
\begin{enumerate}
    \item Control the total number of concepts: \cite{yan2023learning} used a more concise concept layer, i.e. reduce the total number of concepts. However, this approach may miss some important concepts due to limitations in total concept numbers. Additionally, they used a dense final layer which is less interpretable for humans, as each decision is related to the whole concept set.
    \item \cite{oikarinen2023label,yuksekgonul2022post} suggested using a sparse linear layer for final prediction to enhance interpretability. Though sparsity is initially introduced to enhance interpretability, we note that this also reduces the number of concepts used in the decision, thus controlling the information leakage. However, the problem is, these works lack the quantification for sparsity, which is necessary for fair comparison between methods. 
\end{enumerate}
To provide a unified metric for both approaches, we propose to measure the Number of Effective Concepts (NEC) for final prediction as a sparsity metric. It is defined as
\begin{equation}
    NEC(W_F) = \frac{1}{C} \sum_{i=1}^C \sum_{j=1}^k \mathbf{1}\{(W_F)_{ij} \neq 0\}
\end{equation}
Intuitively, NEC measures the average number of concepts the model uses to predict a class. Using NEC to evaluate CBM provides the following benefits:
\begin{enumerate}
    \item A smaller NEC reduces the information leakage. As shown in \cref{fig:SparseCompare}, with large NEC, even random CBL could achieve near-optimal accuracy, suggesting potential leakage in information. However, by reducing NEC, the accuracy of random concepts drops quickly. This implies enforcing a small NEC could help to control information leakage. 
    \item A model with a smaller NEC provides more interpretable decision explanations. Humans can recognize an object with several important visual features. However, models can utilize tens or hundreds of concepts for the final prediction. By using a smaller NEC, the model's decision could be attributed mainly to several concepts, making it more interpretable to human users. 
    \item NEC enables fair comparison between CBMs. Comparing the performance of CBMs has long been a challenging problem, as different models use different numbers of concepts and different styles of final layers (sparse/dense). NEC considers both, thus providing a fair metric to compare different models. 
\end{enumerate}

Given these benefits, we suggest to control the NEC when comparing the performance of CBMs. In \cref{sec:exp}, we provide experiments with controlled NEC, where we observed our \algoname~ outperforms other baselines. 

\begin{figure*}[h!]
    \centering
    \begin{subfigure}[b]{0.24\linewidth}
        \centering
        \includegraphics[width=\linewidth]{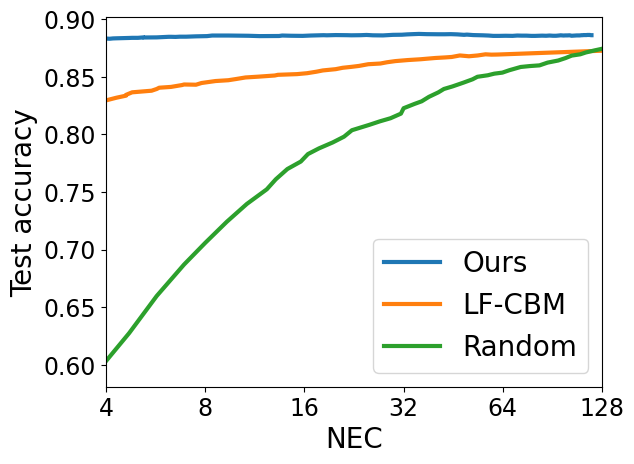}
        \caption{CIFAR10}
    \end{subfigure}
    \begin{subfigure}[b]{0.24\linewidth}
        \centering
        \includegraphics[width=\linewidth]{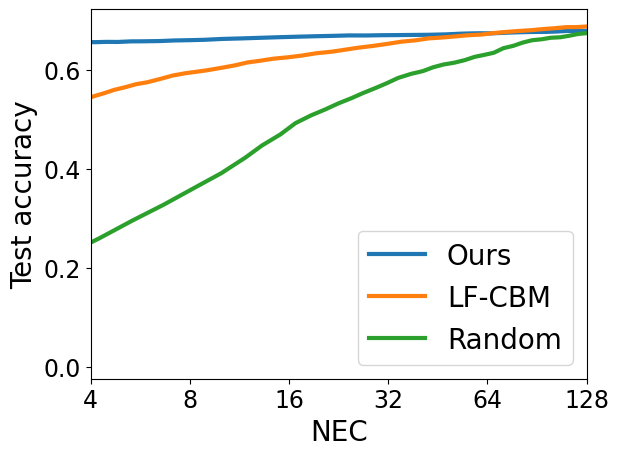}
        \caption{CIFAR100}
    \end{subfigure}
    \begin{subfigure}[b]{0.24\linewidth}
        \centering
        \includegraphics[width=\linewidth]{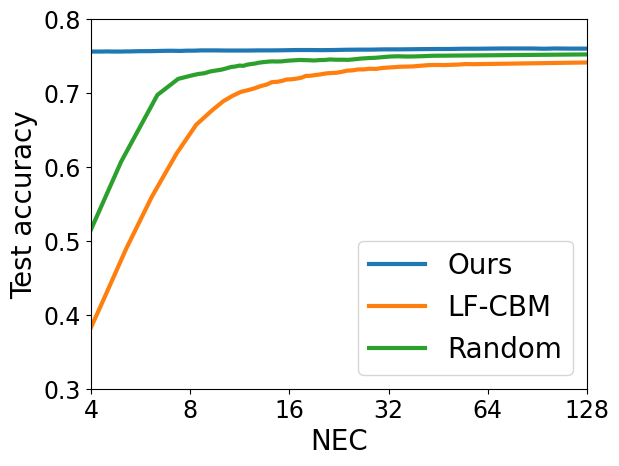}
        \caption{CUB}
    \end{subfigure}
    \begin{subfigure}[b]{0.24\linewidth}
        \centering
        \includegraphics[width=\linewidth]{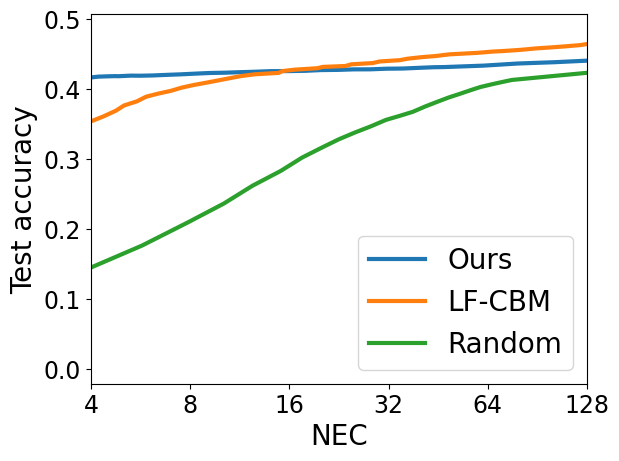}
        \caption{Places365}
    \end{subfigure}
    \caption{Accuracy comparison between our \algoname, LF-CBM\cite{oikarinen2023label} and randomly initialized concept bottleneck layer under different NEC. The experiment is conducted on the CIFAR10 dataset. From the results, we could see that (1) for NEC large enough, even a random CBL could achieve near-optimal accuracy, supporting the existence of information leakage; (2) when NEC decreases, the accuracy of LF-CBM and random weights begin to drop, while our \algoname~does not have significant decrease. 
    }
    \label{fig:SparseCompare}
\end{figure*}

\section{Experiments}
\label{sec:exp}
In this section, we conduct a series of experiments to evaluate our method, including illustrating the faithfulness of concept prediction, interpretability of model decisions, and performance with controlled NEC.

\subsection{Performance comparison}
\label{Performance comparison}
 \textbf{Setup.} Following prior work \cite{oikarinen2023label}, we conduct experiments on five image recognition datasets: CIFAR10, CIFAR100\cite{krizhevsky2009cifar}, CUB\cite{wah2011caltech}, Places365\cite{zhou2017places} and ImageNet\cite{russakovsky2015imagenet}. For the choice of backbone, we categorize the experiments into two categories:
 \begin{enumerate}
     \item CLIP backbone: For CLIP backbone, we choose CLIP-RN50 for all datasets. 
     \item non-CLIP backbobe:  We follow the choice of \cite{oikarinen2023label} to use ResNet-18\cite{he2016deep} for CUB and ResNet-50 (trained on ImageNet) for Places365 and ImageNet as the backbone.
 \end{enumerate}
The reason of this categorization is some previous works (e.g. LaBo\cite{yang2023language} only supports CLIP backbone.

 \textbf{Baselines.} We compare our method with three major baselines when applicable: LF-CBM\cite{oikarinen2023label}, LaBo\cite{yang2023language}, and LM4CV\cite{yan2023learning}. These are SOTA methods for constructing scalable CBMs, with \cite{oikarinen2023label} most flexible and \cite{yang2023language,yan2023learning} limited by specific architecture and not available for certain dataset. Additionally, we present the results from a randomly initialized CBL for comparison. 



\textbf{Metrics.}
As discussed in \cref{sec:NECevaluation}, in order to evaluate the final classification power of CBM, we should acculate the Accuracy under specified NEC(ANEC). Therefore, we measure the following two metrics:
\begin{enumerate}
    \item \textbf{Accuracy at NEC=5(ANEC-5): } This metric is designed to show the performance of CBM which could provide an interpretable prediction. We choose the number 5 so that human users could easily inspect all concepts related to the decision without much effort. 
    \item \textbf{Average accuracy(ANEC-avg): } To evaluate the trade-off between interpretability and performance, we also calculate the average accuracy under different NECs. In general, higher NEC indicates a more complex model, which may achieve better performance but also hurt interpretability. We choose six different levels: $5, 10, 15, 20, 25, 30$ and measure the average accuracy.
\end{enumerate}

\textbf{Controlling NEC.}
As we discussed in \cref{sec:NECevaluation}, there are two approaches to control NEC: (1) using a dense final layer and directly controlling the number of concepts and (2) training a sparse final layer with appropriate sparsity. 
LM4CV\cite{yan2023learning} used the first approach, where the number of concepts could be directly set as target NEC. For LF-CBM\cite{oikarinen2023label} and our \algoname, the second approach is utilized:
To achieve target sparsity, we control the regularization strength $\lambda$ in GLM-SAGA. GLM-SAGA provides a regularization path, which allows us to gradually reduce regularization strength and get a series of weight matrices with different sparsity. Specifically, we start with $\lambda_0 = \lambda_{max}$ which gives the sparsest weight. Then, we gradually reduce the $\lambda$ by $\lambda_{t+1} = \alpha \lambda_t$ until we achieve the desired NEC. We choose the weight matrix with the closest NEC to our target and prune the weights from smallest magnitude to largest to enforce accurate NEC. LaBo \cite{yang2023language} did not provide a NEC control method. Hence, we apply sparse final layer training of LaBo's concept prediction to control NEC. 

\textbf{Results.}
We summarize the test results for with backbone in \cref{tab:CLIP-CUB} and results with non-CLIP backbones in \cref{tab:MainAccResults}. From the results, we could see that:
\begin{enumerate}
    \item The accuracy at NEC $= 5$ provides a good metric for evaluating the semantic information in CBL: As shown in the table, the accuracy of random CBL is much lower with NEC $=5$, which implies the information leakage is controlled and the accuracy could better reflect the useful semantic information learned in the CBL.
    \item The performance of LM4CV is even worse than random CBL. An explanation to this is LM4CV utilizes a dense final layer, which is intrinsically inefficient to interpret as each class is connected to all the concepts, including the irrelevant ones. When limiting the NEC to a small value to provide a concise explanation, the model has to largely reduce the concept number which sacrifices the prediction power. 
    \item Our method significantly outperforms all the baselines at least 4.27\% and up to 51.09\% on accuracy at NEC=5, and by at least 0.45\% and up to 29.78\% on average accuracy across different NECs, illustrating both high performance and good interpretability. 
\end{enumerate}
\begin{table}[h]
\centering
\scalebox{0.8}{
\begin{tabular}{c|c|c|c|c|c|c|c|c}
\toprule \midrule
{ Dataset} & \multicolumn{2}{c|}{CIFAR10} & \multicolumn{2}{c|}{CIFAR100} & \multicolumn{2}{c|}{ImageNet} & \multicolumn{2}{c}{CUB}     \\ \midrule
{ Metrics }  & { ANEC-5}             & { ANEC-avg}    & { ANEC-5}             & { ANEC-avg}                  & { ANEC-5}             & { ANEC-avg}          & { ANEC-5}            & { ANEC-avg}         \\  \midrule \midrule
{ Random} & 67.55\% & 77.45\% & 29.52\% & 47.21\% & { 18.04\% } & { 39.63\%} & { 25.37\%} & { 40.13\%} \\  \midrule
{ LF-CBM} & 84.05\% & 85.43\% & 56.52\% & 62.24\%         & { 52.88\%}           & { 62.24\%}           & { 31.35\%}          & { 52.70\%}          \\ 
{ LM4CV}   & 53.72\% & 69.02\% & 14.64\% & 36.70\%        & { 3.77\%}            & { 26.65\%}           & { 3.63\%}           & { 15.25\%}          \\
{ LaBo}    & 78.69\% & 82.05\% & 44.82\% & 55.18\%         & { 24.27\%}           & { 45.53\%}           & { 41.97\%}          & { 59.27\%}          \\ 
\textbf{ \algoname (Ours)} & \textbf{88.55\% }& \textbf{88.63\% } &\textbf{ 65.73\%}   & \textbf{66.48\%} & { \textbf{59.74\%}}  & { \textbf{62.70\%}}  & { \textbf{60.38\%}} & { \textbf{66.03\%}} \\
\midrule
\bottomrule
\end{tabular}
}
\vspace{0.15cm}
\caption{Performance comparison with CLIP-RN50 backbone. We compare our method with a random baseline, LF-CBM\cite{oikarinen2023label}, LM4CV\cite{yan2023learning} and LaBo\cite{yang2023language}. The random baseline has 1024 neurons for CIFAR10 and CIFAR100, 512 for CUB and 4096  for ImageNet. }
\label{tab:CLIP-CUB}
\end{table}
\begin{table*}[th!]
\centering
\scalebox{0.9}{
\begin{tabular}{c|c|c|c|c|c|c}
\toprule \midrule
Dataset  & \multicolumn{2}{c|}{CUB} & \multicolumn{2}{c|}{Places365} & \multicolumn{2}{c}{ImageNet} \\
\midrule
Metrics & ANEC-5 & ANEC-avg& ANEC-5 & ANEC-avg & ANEC-5 & ANEC-avg \\
\midrule \midrule
Random & 68.91\% & 73.44\% & 17.57\% & 28.62\% & 41.49\% & 61.97\% \\
\midrule
LF-CBM & 53.51\% & 69.11\% & 37.65\% & 42.10\% & 60.30\% & 67.92\% \\
\textbf{\algoname{}} \textbf{(Ours)} & \textbf{75.79\%}  & \textbf{75.82\%}  & \textbf{41.92\%}& \textbf{42.55\%} & \textbf{73.15\%}& \textbf{73.98\%} \\
\midrule
\bottomrule
\end{tabular}
}
\caption{Performance comparison with non-CLIP backbone. We compare against LF-CBM\cite{oikarinen2023label}  and a random baseline, as  LM4CV\cite{yan2023learning}, LaBo\cite{yang2023language} does not support non-CLIP backbone. The random baseline has 512 neurons for CUB, 2048 for Places365, and 4096  for ImageNet. }
\label{tab:MainAccResults}
\end{table*}

\subsection{Visualization of CBL neurons}
In order to examine whether our CBL learns concepts aligned with human perception, we list the top-5 activated images for example concept neurons on the model trained on the CUB dataset in \cref{fig:Top5Visuailize}. As shown in the figure, our CBL faithfully captures the corresponding concept. We provide more visualization results in \cref{app:TopVis}.

\begin{figure}[h!]
    \begin{subfigure}{.46\textwidth}
        \centering
        \includegraphics[width=\linewidth]{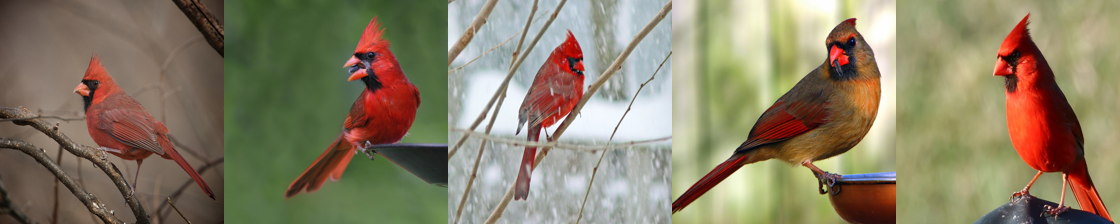}
        \caption{Concept 19: black face.}
    \end{subfigure}
    \hfill
    \begin{subfigure}{.46\textwidth}
        \centering
        \includegraphics[width=\linewidth]{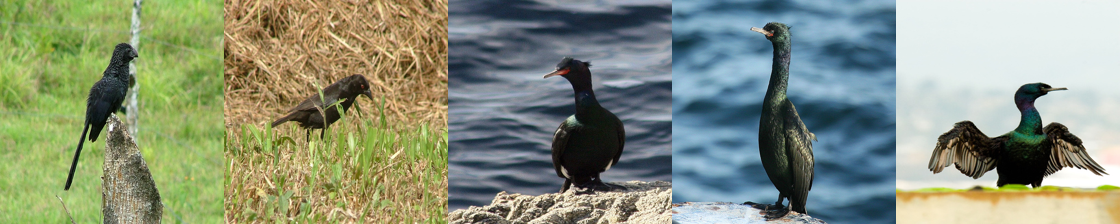}
        \caption{Concept 29: black plumage}
    \end{subfigure}
    \begin{subfigure}{.46\textwidth}
        \centering
        \includegraphics[width=\linewidth]{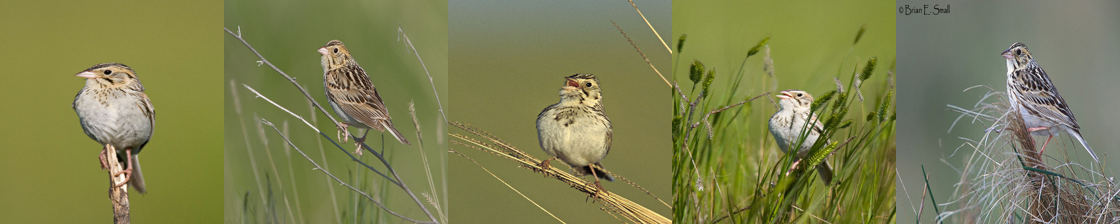}
        \caption{Concept 117: brown or gray body}
    \end{subfigure}
    \hfill
    \begin{subfigure}{.46\textwidth}
        \centering
        \includegraphics[width=\linewidth]{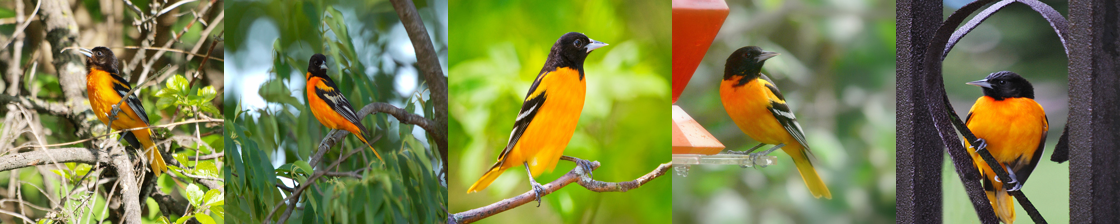}
        \caption{Concept 182: small black head}
    \end{subfigure}
    
    \caption{Top-5 activated images of example concepts neurons in \algoname~on CUB dataset.}
    \label{fig:Top5Visuailize}
\end{figure}

\subsection{Case study}
\label{sec:CaseStudy}
In this section, we conduct a case study to compare the concept prediction between our \algoname~, LF-CBM \cite{oikarinen2023label} and LM4CV\cite{yan2023learning} as shown in Fig \ref{fig:CaseStudyMain}. We provide extended results and comparison with LaBo \cite{yang2023language} in \cref{app:DecisionVis}. For our method, we use the final layer with NEC $=5$. We show that our method provides more accurate concept prediction and more interpretable decisions for users. 

\textbf{Decision interpretability.} We examine the explanation of each CBM model on example images by showing the top-5 concept contributions. The contribution of each concept is calculated as the product of the concept prediction value and corresponding weight. Formally, the contribution of $i$-th concept to $j$-th class is defined as: $\text{Contribution}(i, j) = g_i(z) \cdot (W_F)_{ji}$.
%
We pick the top-5 contributing concepts for the final predicted class and visualize it in \cref{fig:CaseStudyMain}. We could see that:
\begin{enumerate}
    \item For other CBMs, a large portion of the final decision is attributed to the "Sum of other features". This part hurts the interpretability of CBM, as it's difficult for users to manually inspect all these concepts in practice. We conduct further study on this in \cref{sec:Top5Explaination}. Our model, however, provides a concise explanation from a few concepts because we apply the constraint NEC=5. This ensures users can understand model decisions without difficulties.  
    \item Our \algoname~provides explanation more aligned with human perception. From the example, we can also see that our model explains the decision with clear visual concepts. Other CBMs attribute the decision to non-visual concepts (e.g. LaBo), concepts that do not match the image (e.g. LM4CV), or negative concepts (LF-CBM). 
\end{enumerate}
\subsection{Do Top-5 concepts fully explain the decision?}
\label{sec:Top5Explaination}
Besides training a final layer with a small NEC, another common approach to provide a concise explanation is showing only the top contribution concepts. However, we argue that this approach may not faithfully explain the model's behavior, as the non-top concepts also make a significant contribution to the decision. To verify this, we conduct the following experiment: On the CUB dataset, we prune the final weight matrix $W_F$ to leave only the top-5 concepts for each class, whose weight has the largest magnitude. Then, we use the pruned model to make predictions and compare them with the prediction results from the original model. \cref{tab:PredChange} shows results for our \algoname~which uses NEC$=5$ to control sparsity, and other three baselines, LaBo\cite{yang2023language}, LF-CBM\cite{oikarinen2023label} and LM4CV\cite{yan2023learning}, without any constraint on NEC. As shown in the table, for all three baselines, a large portion of predictions changes after pruning.
This suggests that without explicitly controlling NEC, only showing top-5 contributing concepts does not faithfully explain all of the model decisions. Hence, we recommend training the final layer with NEC controlled to obtain a concise and faithful explanation as we proposed in \cref{sec:NECevaluation}.
\begin{table}[h!]
\centering
\scalebox{0.85}{
\begin{tabular}{c|c|c|c|c}
\toprule
Method & \textbf{VLG-CBM (Ours)} & LF-CBM & LM4CV & LaBo\\
\midrule
\% changed decisions & \textbf{0.12\%} &49.21\% & 98.34\% & 81.40\%\\
\bottomrule
\end{tabular}
}
\vspace{0.15cm}
\caption{Portion of model predictions that changes after pruning. The results suggest that for existing methods (LF-CBM, LM4CV, LaBo) without NEC control, a large portion of predictions changes with top-5 concepts, implying potential risk when using top-5 contributions to explain model decisions. }
\label{tab:PredChange}
\end{table}

\vspace{-6mm}
\section{Conclusion, Potential Limitations and Future work}
\label{sec:conclusion}
In this work, we study how to improve the interpretability and performance of concept bottleneck models. We introduce a novel approach \algoname{} based on both vision and language guidance, which successfully improves both the interpretability and utility of existing CBMs in prior work. Additionally, our theoretical analysis show that information leakage may exist on even in the untrained CBLs, serving the foundations for our proposed new NEC-controlled metrics (ANEC-5 and ANEC-avg). We show that NEC not only allow fair evaluation of CBMs but also can be used to effectively control information leakage of CBM and ensure interpretability. Extensive experiments on image classification benchmarks demonstrated our \algoname~largely outperform previous baselines especially for small NEC, providing more interpretable decisions for users. 

Despite the superior performance of \algoname~over prior work as demonstrated in extensive experiments, one potential limitation is the dependence on large pretrained models (e.g. the success of open-domain grounded object detection model that we use to enforce vision guidance). However, prior work (e.g. LaBo, LM4CV, LF-CBM) also shared similar limitation on the reliance of large pre-trained models (e.g. CLIP). Nevertheless, it also means that our techniques have the potential to be further improved with the advancement of large pre-trained models. In the future, we plan to explore training CBL with even more vision guidance, such as using segmentation maps of concepts.

\section*{Acknowledgement}
The authors thank the anonymous reviewers for valuable feedback on the manuscript. The authors are partially supported by National Science Foundation awards CCF-2107189, IIS-2313105, IIS-2430539, Hellman Fellowship, and Intel Rising Star Faculty Award. The authors also thank ACCESS for support in this work.







\bibliography{ref}
\bibliographystyle{plainnat}

\newpage
\appendix
\onecolumn

\counterwithin{figure}{section}
\counterwithin{table}{section}
\counterwithin{equation}{section}
\addcontentsline{toc}{section}{Appendix} 

\part{Appendix} 
\parttoc 

\newpage
\section{Proof of \cref{thm:leakage}}
\label{app:ThmProof}
In this section, we present a formal definition of the expected square error in \cref{thm:leakage} and show the proof. First, we define the square approximation error as 
\begin{equation}
    \mathbb E_z\; \mpar{|f(z) - \tilde{f} (z)|^2},
\end{equation}
which is the average square distance between $f(z)$ and $\tilde{f}(z)$. Given a specific CBL $W_c$, we seek a final layer $\tilde{w}$ to minimize the square error:
\begin{equation}
    \min_{(\tilde{w}, \tilde{b})} \mathbb E_z\; \mpar{|f(z) - \tilde{f} (z)|^2}.
    \label{eq:G.2}
\end{equation}
For randomly Gaussian initialized $W_c$, we care about the minimal error we could achieve on average. \add{Thus, for each $W_c$, we choose $\tilde{w}$ and $\tilde{b}$ to achieve minimum approximation error, then take the expectation over $W_c$ to define the expected square error as} 
\begin{equation}
    E(k) = \mathbb{E}_{W_c} \left[\min_{(\tilde{w}, \tilde{b})} \mathbb E_z\; \mpar{|f(z) - \tilde{f} (z)|^2} \right].
\end{equation}
\paragraph{Setting}
Suppose the representation $z$ has variance $\Sigma \in \mathbb{R}^{d \times d}$ which is positive definite. The weight matrix $W_c \in \mathbb{R}^{k \times d}$ is sampled i.i.d from a standard Gaussian distribution. Here, we show that any linear classifier which is built directly on representation $z$, i.e. $f(z) = w^\top z + b$, could be approximated by a linear classifier on concept logits $g(z) = W_cz$, i.e. $f(z) \approx \tilde{f} (z) = 
\tilde{w}^\top g(z) + \tilde{b}$.
\mainthm*
\begin{proof}
Based on the value of $k$, we can consider two cases: (I) $k < d$, and (II) $k \geq d$, and derive the $E(k)$ respectively. 

\textbf{Case (I): $k < d$.}    
\textbf{First, we consider under a fixed $W_c$, what is the minimum error we could achieve.} The expected approximation error is:
\begin{equation}
    \begin{aligned}
        \mathbb E_z\; \mpar{|f(z) - \tilde{f} (z)|^2} &=  \mathbb E_z\; \mpar{|w^\top z + b -(\tilde{w}^\top W_cz + \tilde{b})|^2}\\
        &= \mathbb E_z\;\mpar{|(w^\top  - \tilde{w}^\top W_c)z + b - \tilde{b}|^2}\\
        &= \underbrace{\mathbb{V}_z[(w - W_c^\top \tilde{w})^\top z]}_{(*)}  + \underbrace{\left[\mathbb E_z [(w^\top  - \tilde{w}^\top W_c)z] + b - \tilde{b}\right]^2}_{(**)}.
    \end{aligned}
\end{equation}
\add{The last equality is from $\mathbb E(X^2) = \mathbb{V} X + (\mathbb E X)^2$.}
The second term $(**)$ takes minimum $0$ when $\tilde{b} = \mathbb E_z [ (w^\top  - \tilde{w}^\top W_c)z ] + b$. The remaining question is to choose a proper $\tilde{w}$ to minimize $(*)$. Notice that
\begin{equation}
    \begin{aligned}
        \mathbb{V}_z\mpar{(w - W_c^\top \tilde{w})^\top z}
        &= (w - W_c^\top \tilde{w})^\top \Sigma (w - W_c^\top \tilde{w})\\
        &= \|\Sigma^{\frac{1}{2}}(w - W_c^\top \tilde{w})\|_2^2\\
        &= \|\Sigma^{\frac{1}{2}}W_c^\top \tilde{w} - \Sigma^{\frac{1}{2}}w\|_2^2,
    \end{aligned}
\end{equation}
where $\Sigma$ is the covariance matrix of $z$, $\Sigma^{\frac{1}{2}}$ is the principal square root of $\Sigma$, $\Sigma \in \mathbb{R}^{d \times d}$, $\Sigma^{\frac{1}{2}} \in \mathbb{R}^{d \times d}$.
Now the problem in \cref{eq:G.2} can be reduced to a linear least square problem:
\begin{equation}
    \min_{(\tilde{w}, \tilde{b})} \mathbb E_z\; \mpar{|f(z) - \tilde{f} (z)|^2} = \min_{\tilde{w}} \|\Sigma^{\frac{1}{2}}W_c^\top \tilde{w} - \Sigma^{\frac{1}{2}}w\|_2^2
\end{equation}

Since $\Sigma$ is positive definite, so is $\Sigma^{\frac{1}{2}}$. Thus,  the eigen decomposition of $\Sigma^{\frac{1}{2}}$ satisfies the following: $\Sigma^{\frac{1}{2}} = \tilde{Q}^\top \Lambda\tilde{Q}$, where  $\tilde{Q} \in \mathbb{R}^{d \times d}$ is an orthogonal matrix and $\Lambda \in \mathbb{R}^{d \times d}$ is a diagonal matrix with positive entries. With Gram–Schmidt process, we could derive QR factorization of $W_c^\top $: $W_c^\top  = QR$, where $Q \in \mathbb{R}^{d\times d}$ is orthogonal and $R \in \mathbb{R}^{d\times k}$ is upper triangular.
Plugging above decomposition of $\Sigma^{\frac{1}{2}}$ and $W_c^\top $, now we have
\begin{equation}
\begin{aligned}
    \min_{\tilde{w}}\| \Sigma^{\frac{1}{2}}W_c^\top \tilde{w} - \Sigma^{\frac{1}{2}}w\|_2^2
    &= \min_{\tilde{w}}\| \tilde{Q}^\top \Lambda\tilde{Q}QR\tilde{w} - \tilde{Q}^\top \Lambda\tilde{Q}w\|_2^2& \\
     &= \min_{\tilde{w}}\| \Lambda\tilde{Q}QR\tilde{w} - \Lambda\tilde{Q}w\|_2^2&\text{($\tilde{Q}$ is orthogonal, thus preserves 2-norm)}\\
     &= \min_{\tilde{w}}\| \Lambda (\tilde{Q}QR\tilde{w} - \tilde{Q}w)\|_2^2 & \\
     & \leq \min_{\tilde{w}}\lambda_{max}^2\| \tilde{Q}QR\tilde{w} - \tilde{Q}w\|_2^2 &\text{(Since all entries of $\Lambda$ are positive.)}\\
     & = \lambda_{max}^2\min_{\tilde{w}}\| R\tilde{w} - Q^\top w\|_2^2 & \text{(Multiply by $Q^\top \tilde{Q}^\top $ preserves the norm)}
\end{aligned}
\end{equation}
where $\lambda_{max}$ is the largest eigenvalue of $\Sigma^{\frac{1}{2}}$.
In short, we have derived the minimum square error for a given $W_c$, which is upper bounded by 
\begin{equation}
    \min_{(\tilde{w}, \tilde{b})} \mpar{\mathbb E_z\; \mpar{|f(z) - \tilde{f} (z)|^2}} \leq \lambda_{max}^2\min_{\tilde{w}}\| R\tilde{w} - Q^\top w\|_2^2
\end{equation}
\textbf{Secondly, we consider when $W_c$ is sampled i.i.d. from standard normal distribution, and calculate the expected error.} From above derivation,
\begin{equation}
\mathbb{E}_{W_c} \left[\min_{(\tilde{w}, \tilde{b})} \mathbb E_z\; \mpar{|f(z) - \tilde{f} (z)|^2} \right]
    \leq \lambda_{max}^2\mathbb{E}_{(R, Q)}  \left[\min_{\tilde{w}}\| R\tilde{w} - Q^\top w\|_2^2\right]
    \label{eq:G.9}
\end{equation}
Note that since $W_c^\top = QR$, the randomness in $W_c$ is reflected in $Q \text{ and } R$. The matrices $Q \text{ and } R$ satisfies the following properties:
\begin{enumerate}
    \item $Q$ is a random rotation following uniform distribution. This is intuitive because standard Gaussian distribution is rotation-invarant. For a formal statement and proof, we refer to Proposition 7.2 of \citet{eaton1983multivariate}.
    \item $range(R) = span(e_1, e_2,\cdots, e_k)$ with probability 1. Since $rank(R) = rank(W_c^\top )$ and $W_c^\top $ is full rank with probability 1, $rank(R) = \min(k, d) = k$ with probability 1. From upper-triangularity of $R$, we know that 
    \begin{equation}
        range(R) \subseteq span(e_1, e_2,\cdots, e_k).
    \end{equation}
    With probability 1, $rank(R) = k$, thus we conclude 
    \begin{equation}
        range(R) = span(e_1, e_2,\cdots, e_k).
    \end{equation}
    In the following derivation, since we only cares about the expectation, we omit "with probability 1" for brevity. 
\end{enumerate}
From the above properties of $Q \text{ and } R$, the expectation term in the RHS of \cref{eq:G.9} can be derived as:
\begin{equation}
    \mathbb{E}_{(R, Q)} \left[\min_{\tilde{w}}\| R\tilde{w} - Q^\top w\|_2^2\right] = \mathbb{E}_{Q}\|(Q^\top w)_{k+1:d}\|^2_2.
\end{equation}
This is because $range(R) = span(e_1, e_2,\cdots, e_k)$, and $k < d$. Thus, $\min_{\tilde{w}}\| R\tilde{w} - Q^\top w\|_2^2$ is the squared distance from $Q^\top w$ to subspace $span(e_1, e_2,\cdots, e_k)$, which equals to the squared sum of last $d-k$ coordinates of $Q^\top w$. 

Because $Q$ is a random rotation, $Q^\top w$ is uniformly distributed on a sphere with radius $\|w\|_2$. Denote $v = Q^\top w$.
From symmetricity, we have
\begin{equation}
    \mathbb{E}\ v_1^2 = \mathbb{E}\ v_2^2 = \cdots = \mathbb{E}\ v_d^2.
\end{equation}
Furthermore, $\|v\|_2^2 = \|w\|_2^2$ gives $\sum_{i=1}^d v_i^2 = \|w\|_2^2$. \add{Take expectation of both sides gives $\sum_{i=1}^d \mathbb{E}_v\ v_i^2 = \|w\|_2^2$}, thus $\mathbb{E}\ v_1^2 = \mathbb{E}\ v_2^2 = \cdots = \mathbb{E}\ v_d^2 = \|w\|_2^2 / d$.
The target quantity becomes
\begin{equation}
\begin{aligned}
   \mathbb{E}_{Q}\|(Q^\top w)_{k+1:d}\|^2_2 
   &= \mathbb{E}_v \left( \: \sum_{i=k+1}^d v_i^2\right)\\
    &=  \sum_{i=k+1}^d \mathbb{E}_v\ v_i^2\\
    &= \frac{d-k}{d} \|w\|_2^2
\end{aligned}
\end{equation}
In conclusion, we derive an upper bound of approximation error for any linear function $f$:
\begin{equation}
    \mathbb{E}_{W_c} \left[\min_{(\tilde{w}, \tilde{b})} \mathbb E_z\; \mpar{|f(z) - \tilde{f} (z)|^2} \right] \leq \lambda_{max}^2(1 - \frac{k}{d}) \|w\|_2^2
    \label{eq:FinalBound}
\end{equation}
Look at the bound in \cref{eq:FinalBound}: $\lambda_{max}^2$ is a constant regarding the scale of data; $\|w\|_2^2$ is a constant regarding the norm of weight vector we want to approximate; $(1 - \frac{k}{d})$ is a linear term shows that the expected square error goes down linearly when we increase the number of concepts $k$, and achieves zero when $k = d$. 

\textbf{Case (II): $k \geq d$.}    
For the case that $k \geq d$, it could be derived from our main results that $E(k) = 0$. Additionally, with probability 1 we could find $\tilde{f}(x) = f(x)$ as will be derived below. As we discussed, with probability 1, $W_c$ has full rank. Given that, we have $$W_c^{+}W_cz = z,$$ where $W_c^+$ is the Moore-Penrose inverse of $W_c$. For any linear classifier $f(z) = w^\top z + b$. Let $\tilde{w} = (W_c^+)^\top w$, $\tilde{b} = b$, we have $$\tilde{f}(z) = \tilde{w}^\top  g(z) + \tilde{b} = w^\top W_c^+W_c z + b = w^\top z + b = f(z)$$ and thus $E(k) = 0$.
\end{proof}

\begin{corollary}
\label{cor:multicls}
    For $f$ and $\tilde{f}$ with $C$ output classes, i.e. $f: \mathbb{R}^d \rightarrow \mathbb{R}^C$, $\tilde{f}: \mathbb{R}^d \rightarrow \mathbb{R}^C$, $w \in \mathbb{R}^d$, $\tilde{w} \in \mathbb{R}^k$,  the expected error upper-bound is 
\begin{equation}
   E(k) \leq C\lambda_{max}(1 - \frac{k}{d}) \|w\|_2^2.
   \end{equation}
Here $
    E(k) = \mathbb{E}_{W_c} \left[min_{(\tilde{w}, \tilde{b})} \mathbb E_z\; \|f(z) - \tilde{f} (z)\|^2 \right]$ denotes the average square error. 
\end{corollary}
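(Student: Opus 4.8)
The plan is to reduce the multi-class statement to $C$ independent applications of \cref{thm:leakage}. Write $f(z) = Wz + b$ with $W \in \mathbb{R}^{C \times d}$ having rows $w_1^\top, \dots, w_C^\top$ and $b \in \mathbb{R}^C$, and similarly $\tilde{f}(z) = \tilde{W} g(z) + \tilde{b}$ with $\tilde{W} \in \mathbb{R}^{C \times k}$ having rows $\tilde{w}_1^\top, \dots, \tilde{w}_C^\top$. Expanding the Euclidean norm coordinatewise gives $\|f(z) - \tilde{f}(z)\|^2 = \sum_{c=1}^C |w_c^\top z + b_c - \tilde{w}_c^\top W_c z - \tilde{b}_c|^2$, so that the inner objective $\mathbb{E}_z \|f(z) - \tilde{f}(z)\|^2$ is a sum of $C$ terms in which the $c$-th term depends only on the parameter block $(\tilde{w}_c, \tilde{b}_c)$.

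Since these blocks are disjoint, the joint minimization separates into a per-coordinate minimization: $\min_{(\tilde{W}, \tilde{b})} \mathbb{E}_z \|f(z) - \tilde{f}(z)\|^2 = \sum_{c=1}^C \min_{(\tilde{w}_c, \tilde{b}_c)} \mathbb{E}_z |f_c(z) - \tilde{f}_c(z)|^2$, where $f_c(z) = w_c^\top z + b_c$ and $\tilde{f}_c(z) = \tilde{w}_c^\top W_c z + \tilde{b}_c$. Taking $\mathbb{E}_{W_c}$ of both sides and using linearity of expectation, $E(k) = \sum_{c=1}^C \mathbb{E}_{W_c}\bigl[\min_{(\tilde{w}_c, \tilde{b}_c)} \mathbb{E}_z |f_c(z) - \tilde{f}_c(z)|^2\bigr]$. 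The same random matrix $W_c$ appears in every summand, but this causes no difficulty, because linearity of expectation lets us bound each summand independently.

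Now apply \cref{thm:leakage} to each scalar classifier $f_c$: for $k < d$ the $c$-th summand is at most $\lambda_{max}(1 - \tfrac{k}{d})\|w_c\|_2^2$, while for $k \geq d$ it is $0$. Summing over $c$ yields $E(k) \leq (1 - \tfrac{k}{d})\lambda_{max} \sum_{c=1}^C \|w_c\|_2^2$ when $k < d$ and $E(k) = 0$ when $k \geq d$; bounding $\sum_{c=1}^C \|w_c\|_2^2 \leq C \max_c \|w_c\|_2^2$ (equivalently, writing $\|w\|_2^2$ for the common or maximal row norm, matching the notation of the statement) gives the claimed $E(k) \leq C\lambda_{max}(1 - \tfrac{k}{d})\|w\|_2^2$.

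There is essentially no hard step here: the only point requiring care is the separability of the minimization, which follows because the objective decomposes into a sum over blocks with non-overlapping variables, together with the observation that the shared randomness in $W_c$ is harmless under the expectation. An equivalent phrasing avoids even writing the separated minimization explicitly: \cref{thm:leakage} already produces, for each fixed $W_c$, a near-optimal row $\tilde{w}_c$ (and intercept $\tilde{b}_c$); stacking these rows gives one matrix $\tilde{W}$ that simultaneously attains the per-coordinate bounds, and taking expectations over $W_c$ concludes the argument.
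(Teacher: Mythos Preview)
Your argument is correct and matches the paper's own approach exactly: the paper's justification is simply the remark that one applies \cref{thm:leakage} to each output coordinate $f_i$ and sums the resulting bounds, which is precisely the decomposition you carry out (with the added, useful observation that the joint minimization separates because the parameter blocks are disjoint). Your handling of the ambiguous notation $\|w\|_2^2$ in the multi-class statement---interpreting it as a per-row (or maximal-row) norm so that the factor $C$ appears---is a reasonable reading of the corollary as written.
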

\begin{remark}
    The statement could be verified by applying \cref{thm:leakage} to each $f_i$ and $\tilde{f}_i$ output, then summing up the error. 
\end{remark}

\newpage
\clearpage

\section{Implementation details}
 \label{app:expDetail}
 \paragraph{Computational resources and codes.} Our experiments run on a server with 10 CPU cores, 64 GB RAM, and 1 Nvidia 2080Ti GPU. Our implementation builds on the open-source implementation of the LF-CBM \cite{oikarinen2023label} available: \url{https://github.com/Trustworthy-ML-Lab/Label-free-CBM}. For training the final predictive layer, we use publicly available code for GLM-SAGA \cite{wong2021leveraging}. 

 \paragraph{Hyperparameter tuning.} 
We tune the hyperparameters for our method using 10\% of the training data as validation for the CIFAR10, CIFAR100, CUB and ImageNet datasets. For Places365, we use 5\% of the training data as validation. We use CLIP(RN50) image encoder as the backbone for CIFAR10 and CIFAR100, Resnet-18\cite{he2016deep} trained on CUB for CUB dataset, and Resnet-50 pretrained for Places365 following setup similar to LF-CBM. We tune the CBL with Adam\cite{kingma2014adam} optimizer with learning rate $1\times 10^{-4}$ and weight decay $1\times 10^{-5}$. The concept dataset obtained from GroundingDINO is inherently unbalanced since there is a much lower proportion of positive datapoints for a concept. Consequently, we scale the CBL loss by multiplying it with a positive value to balance the tradeoff between precision and recall and improve the imbalance of positive data points. We set $T=0.15$ in \cref{Eq:FilterBoundingBox} in all our experiments. We seed the random number generator with a fixed seed to ensure the results can be reproduced. 

\section{Ablation Studies}
\begin{table}[h!]
\centering
\begin{tabular}{c|c|c|c|c}
\toprule
Confidence threshold & \multicolumn{2}{c|}{CUB200} & \multicolumn{2}{c}{Places365}  \\
\midrule
Metrics & ANEC-5 &  ANEC-avg&  ANEC-5 & ANEC-avg\\
\midrule
0.10 & 75.75\% & 75.75\% & 41.84\% & 42.50\% \\
0.15 & 75.75\% & 75.73\% & 41.84\% & 42.51\% \\
0.20 & 75.73\% & 75.73\% & 41.25\% & 42.15\% \\
\midrule
\bottomrule
\end{tabular}
\caption{ANEC-5 and  ANEC-avg for different confidence threshold $T$.}
\label{tab:abltationt}
\end{table}
\subsection{Ablation study for confidence threshold}
Confidence threshold $T$ in Eq \ref{Eq:FilterBoundingBox} filters concepts with bounding boxes' confidence less than $T$. In this experiment, we study the affect of T on the \algoname's accuracy. The results are shown in Table \ref{tab:abltationt}. We observe that ANEC-5 and ANEC-avg first increases (or stays constant) and then decreases. We attribute this effect to to the fact that as T increases, the number of false-positive decreases leading to better learning of concepts, however, as the number of annotations available for learning a concept decreases.

\newpage
\clearpage

\section{Evaluating annotations from Grounding DINO}
This section quantitatively evaluates concept annotations obtained from Grounding DINO. We use CUB dataset for comparison which contains ground-truth for fine-grained concepts present in each image. We use the label set from \citet{koh2020concept} which has 1:1 mapping with the ground-truth concepts in the CUB dataset. We use precision and recall metric to measure the quality of annotations from Grouding DINO for each concepts. Table \ref{tab:precision_recall_annotation_compare} present mean precision and mean recall value at different confidence threshold. We observe that the obtained annotations have a very high recall i.e if the concept is present in the image, grounding DINO is able to retrieve the object. The precision is also sufficiently high though it suffers from a relatively higher false-positive detection rate compared to false-negative detection rate. However, as demonstrated in our qualitative and quantitative studies (Table \ref{tab:MainAccResults}, Fig \ref{fig:Top5Visuailize}, \ref{fig:TopActivatingCUB}, \ref{fig:TopActivatingPlaces365}) the effect of false-positive is minimal and \algoname  is able to faithfully represent concepts in the Concept Bottleneck Layer.
  
\begin{table}[h!]
\centering
\begin{tabular}{c|c|c}
\toprule
Confidence threshold & Mean Precision & Mean Recall \\
\midrule
0.10 & $0.7150 \pm 0.07$ & $0.9930 \pm 0.08$ \\
0.15 & $0.7156 \pm 0.07$ & $0.9693 \pm 0.11$ \\
0.20 & $0.7121 \pm 0.10$ & $0.8713 \pm 0.21$ \\
\bottomrule
\end{tabular}
\caption{Quantitative evaluation of concepts obtained from Grounding DINO model with Mean Precision and Recall for concepts at different confidence thresholds.}
\label{tab:precision_recall_annotation_compare}
\end{table}

\begin{figure}[htbp]
  \centering
  \includegraphics[width=.65\linewidth]{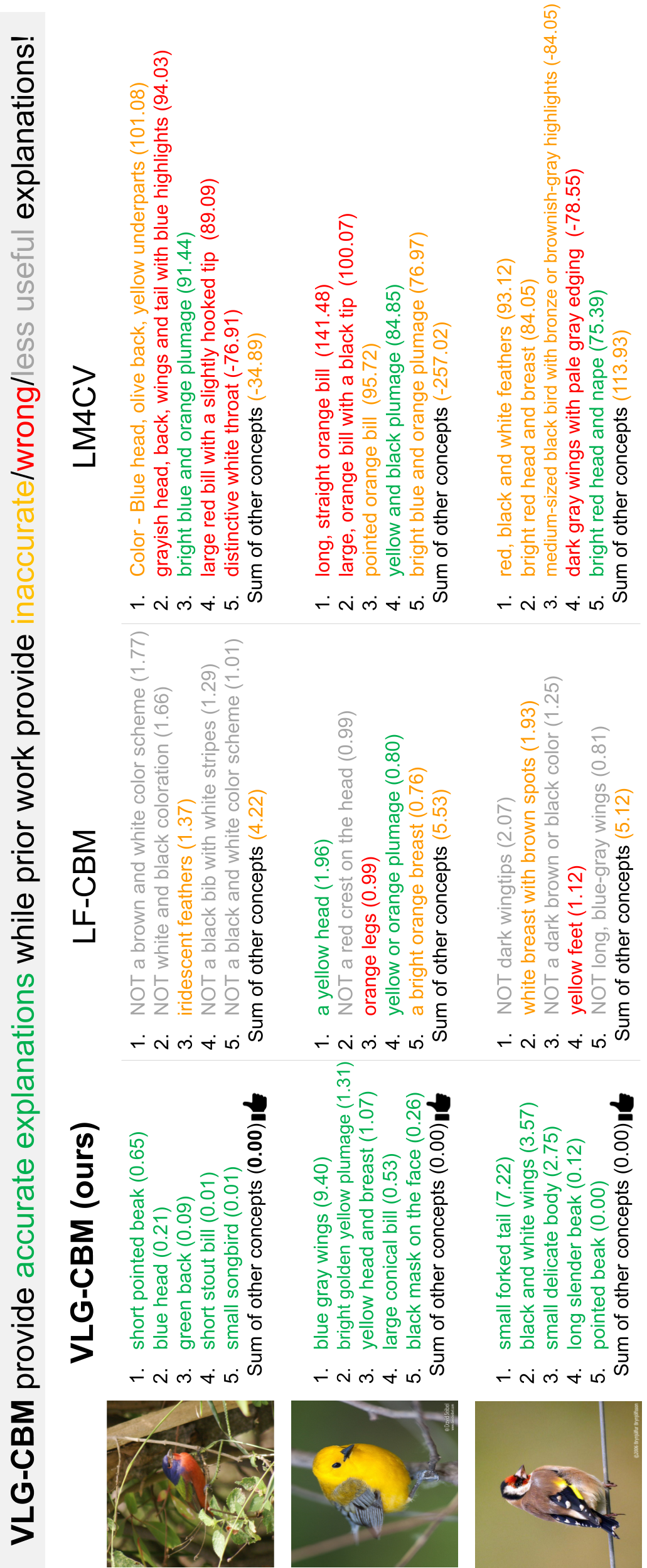}
  \caption{Full version of Fig \ref{fig:CaseStudyMain} comparing explanation of LF-CBM and LM4CV with \algoname (ours)}
  \label{fig1:zoomed_image}
  \end{figure}

\newpage
\clearpage
  
\section{Distribution of nonzero weights among class}
The NEC metric controls the average number of non-zero weights among classes. Further, we study the distribution of non-zero weight numbers between different classes. We choose our VLG-CBM model trained on CUB and places365 datasets, which have 200 and 365 classes, respectively, and plot the distribution of non-zero weights. Both models are trained to have NEC=5 The results are shown in \cref{fig:DistNonZero}. The figure suggests most classes have non-zero weight numbers around 5, while a small number of classes utilize more concepts to make decisions. 

\begin{figure}[htbp]
  \centering
  \begin{subfigure}[b]{0.45\textwidth}
    \includegraphics[width=\textwidth]{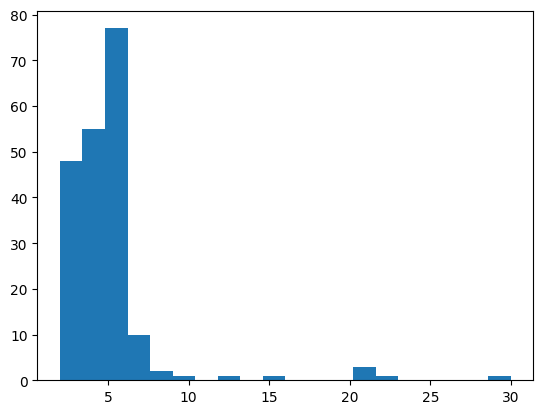}
    \caption{CUB}
  \end{subfigure}
  \hfill 
  \begin{subfigure}[b]{0.45\textwidth}
    \includegraphics[width=\textwidth]{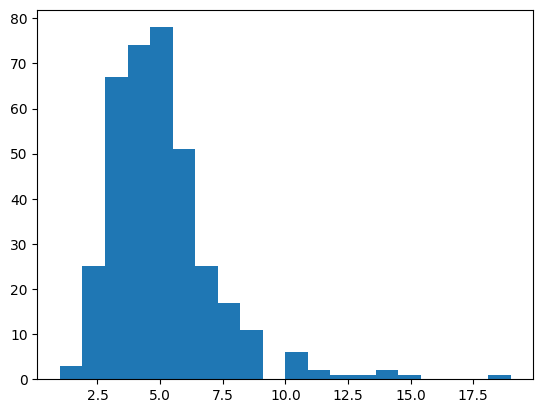}
    \caption{Places365}
  \end{subfigure}
  \caption{distribution of non-zero weight numbers from CUB and Places365 dataset. The models are trained to have NEC=5. }
  \label{fig:DistNonZero}
\end{figure}
\section{Constructing model with specified NEC}
In this section, we discuss how to construct models with specified NEC. When using methods with dense final layers (e.g. \cite{yan2023learning}), controlling NEC is simply controlling total number of concepts in the concept set. Hence, below we mainly focus on models with sparse final layers. 

When training the final linear layer, larger lambda(regularization strength) pushes the model to be sparser. Hence, we utilize GLM-SAGA\cite{wong2021leveraging}, which allows us to obtain a regularization path consists of different lambdas. To be more specific, we choose a $\lambda_{max}$ and train models with $\lambda$ in $[\lambda_{min} = \lambda_{max} / 500\lambda_{max}]$, and take 50 $\lambda$ evenly from the interval in log space. Then, we choose the weight matrix with the closest NEC and pruning the weights from smallest magnitude to largest to enforce strict NEC. Hence, the actual NEC is enforced to be exactly as prespecified ones.

\section{Additional case study examples}
\subsection{Negative concepts in reasoning}
In LF-CBM \cite{oikarinen2023label} and our VLG-CBM, normalization  is applied on concept logits before the final decision layer. Hence, a negative value of concept logits indicates corresponding concept does not appear in the image. Following LF-CBM, we mark these concepts as "NOT \{concept\}" in explaining the decision. To study the frequency of this negative reasoning, we count the times these negative concepts appear in top-5 contributing concepts on CUB dataset. The results show that, for VLG-CBM, 162 out of 28950(0.56\%) reasonings are through negative concepts. For comparison, LF-CBM utilizes 6687 out of 28950(23.10\%) negative reasoning. 
\begin{figure}[h!]
    \centering
    \includegraphics[scale=0.4]{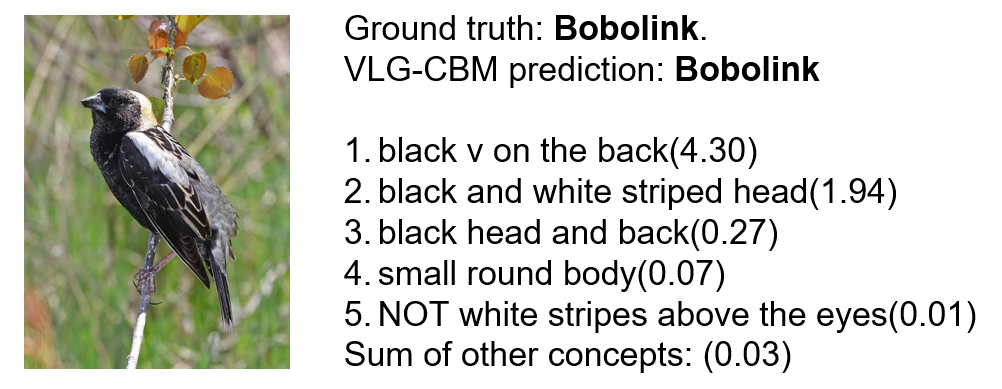}
    \caption{Image 307: An example of negative reasoning of VLG-CBM}
    \label{fig:enter-label}
\end{figure}
\subsection{Impact of NEC}
\label{app:DecisionVis}
The study in \cref{sec:CaseStudy} shows that our VLG-CBM provides more interpretable decisions than baseline methods. To better understanding where these advantages comes from, we conduct a further study to set the baselines with NEC=5 and compare the decision interpretation, see \cref{fig:NEC_study_1,fig:NEC_study_2,fig:NEC_study_3}. The results suggest setting NEC=5 alleviate the problem from non-top-5 concepts. However, wrong/inaccurate/less useful explanations still exist.

\begin{figure}[h!]
    \centering
    \includegraphics[scale=0.7]{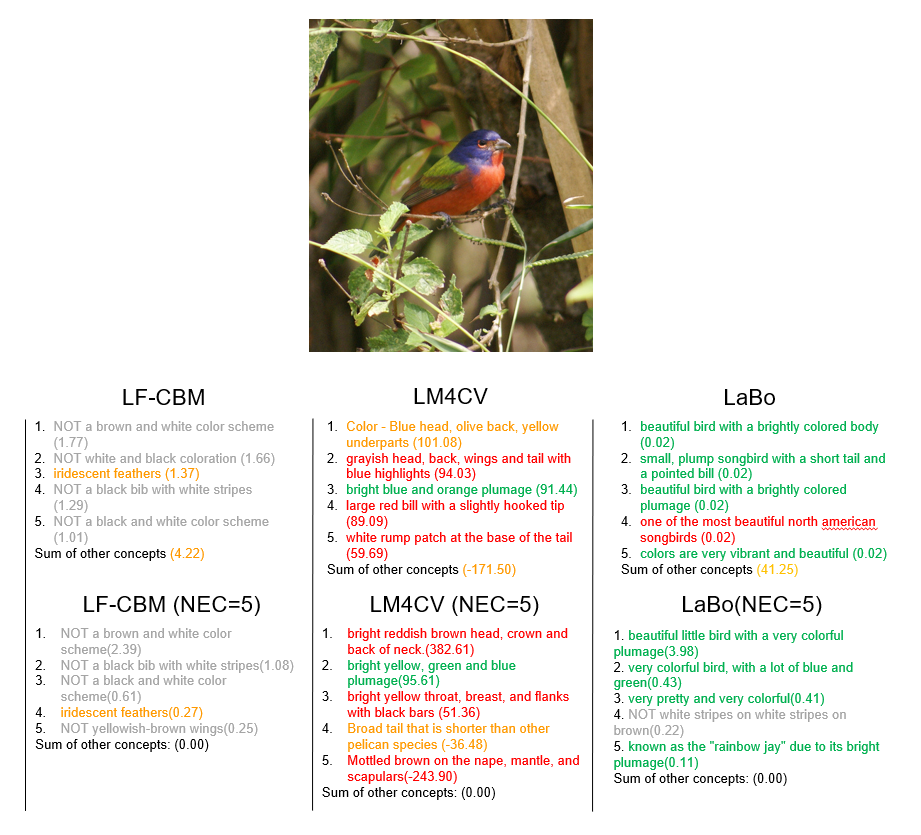}
    \caption{Comparing baselines with different NECs}
    \label{fig:NEC_study_1}
\end{figure}

\begin{figure}[h!]
    \centering
    \includegraphics[scale=0.7]{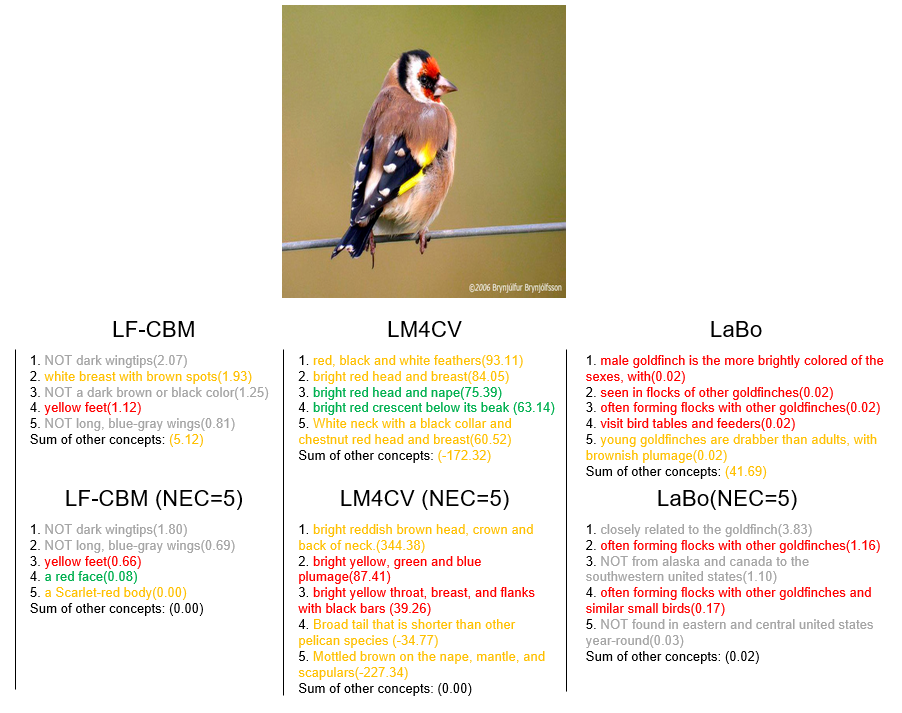}
    \caption{Comparing baselines with different NECs}
    \label{fig:NEC_study_2}
\end{figure}

\begin{figure}[h!]
    \centering
    \includegraphics[scale=0.7]{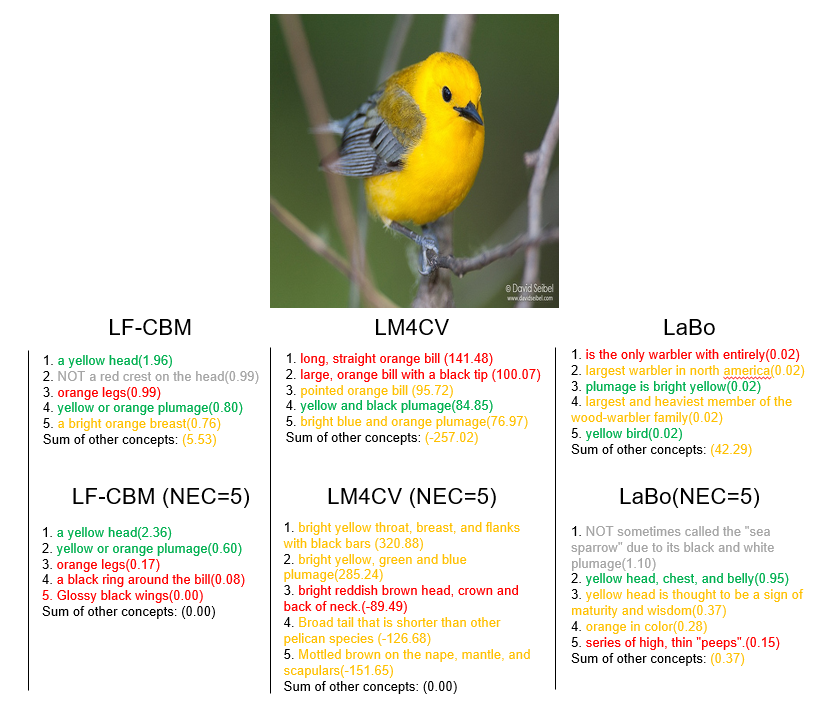}
    \caption{Comparing baselines with different NECs}
    \label{fig:NEC_study_3}
\end{figure}
\clearpage
\newpage

\section{Further discussion on decision explanations}
In this section, we further discuss some interesting phenomena observed in the decision explanations generated by different models. 
\subsection{Negative contributions}
In \cref{fig:CaseStudyMain}, we could see that LM4CV\cite{yan2023learning} generates negative contribution values, while LF-CBM\cite{oikarinen2023label} and our VLG-CBM do not. We hypothesize the reason is different training methods: LM4CV trains a dense final layer, hence the concepts irrelevant to the class may provide a negative contribution. LF-CBM and VLG-CBM, however, train a sparse final layer. To enforce sparsity, the model only captures relevant concepts for decision. Hence, it's natural to expect most contributions should be positive. 

\section{Additional experiment results}
\subsection{Generalizability to OOD datasets}
In this section, we study a question: will our VLG-CBM hurts the generalization ability of original model to Out-Of-Distribution(OOD) dataset?
To study this problem, we conduct experiment on Waterbirds dataset \cite{sagawa2019waterbird}. Waterbirds is an OOD dataset adapted from the CUB dataset, which combines bird photos from CUB with image backgrounds from Places365. We use the same ResNet model as we used in \cref{tab:MainAccResults} for CUB dataset. For VLG-CBM, we choose NEC=5 and compare the results with the standard, non-interpretable models. On this dataset, the results are shown below:
\begin{table}[h]
\centering
\begin{tabular}{|l|l|l|}
\hline
Method                     & CUB Accuracy          & Waterbirds Accuracy    \\ \hline
Standard model (black-box) & 76.70\%          & 69.83\%          \\ \hline
VLG-CBM                    & 75.79\% &  69.83\%\\ \hline
\end{tabular}
\caption{Accuracy of VLG-CBM and standard blackbox model on CUB and Waterbirds datasets.}
\end{table}
It can be seen that our VLG-CBM generalizes well as the standard model does, which shows that our VLG-CBM is competitive and has very small accuracy trade-off with the interpretability compared with the standard black-box model.
\subsection{Ablation study}
\subsubsection{Ablation on augmentation probability}
In this section, we conduct an ablation study on the probability of applying our crop-to-concept data augmentation introduced in \cref{Automated generation of CBM dataset}. The dataset we used is the CUB dataset and the backbone is ResNet as we used in the main experiment. The results are listed below.
\begin{table}[hb]
\centering
\begin{tabular}{|l|l|l|}
\hline
Crop-to-Concept-Prob & \textbf{ANEC-5} & \textbf{ANEC-avg} \\ \hline
0.0                  & 75.73              & 75.76             \\ \hline
0.2                  & \textbf{75.83}     & \textbf{75.88}    \\ \hline
0.4                  & 75.71              & 75.72             \\ \hline
0.6                  & 75.57              & 75.62             \\ \hline
0.8                  & 75.52              & 75.57             \\ \hline
1.0                  & 72.29              & 73.15             \\ \hline
\end{tabular}
\end{table}
From the table, we could see that the performance is best with augmentation probability 0.2. 

\section{Human study}
In this section, we present a human study following the practice of \citet{oikarinen2023label} on Amazon MTurk platform. To briefly summarize, we show the annotator top-5 contributing concepts of our method (VLG-CBM) and baseline (LF-CBM or LM4CV) and asking them which one is better. The scores for each method are assigned as 1-5 according to the response of annotators: 5 for the explanations from VLG-CBM is strongly more reasonable, 4 for VLG-CBM is slightly more reasonable, 3 for both models are equally reasonable, 2 for the baseline is slightly more reasonable, and 1 for the baseline is strongly more reasonable. Thus, if our model provides better explanations than the baselines, then we should see a score higher than 3. We show an example screenshot of our study in \cref{fig:mturkInterface}.

We report the average score in \cref{tab:MturkResult} for two baselines: LF-CBM and LM4CV. For each baseline, we randomly sample 200 images and collect 3 results from 3 different annotators. It can be seen that VLG-CBM has scores higher than 3 for both baselines, indicating our VLG-CBM provides better explanations than both baselines.
LaBo is excluded in our experiment due to its dense layer and large number of concepts: the top-5 concepts usually account for less than 0.01\% of final prediction.
\begin{figure}
    \centering
    \includegraphics[width=1\linewidth]{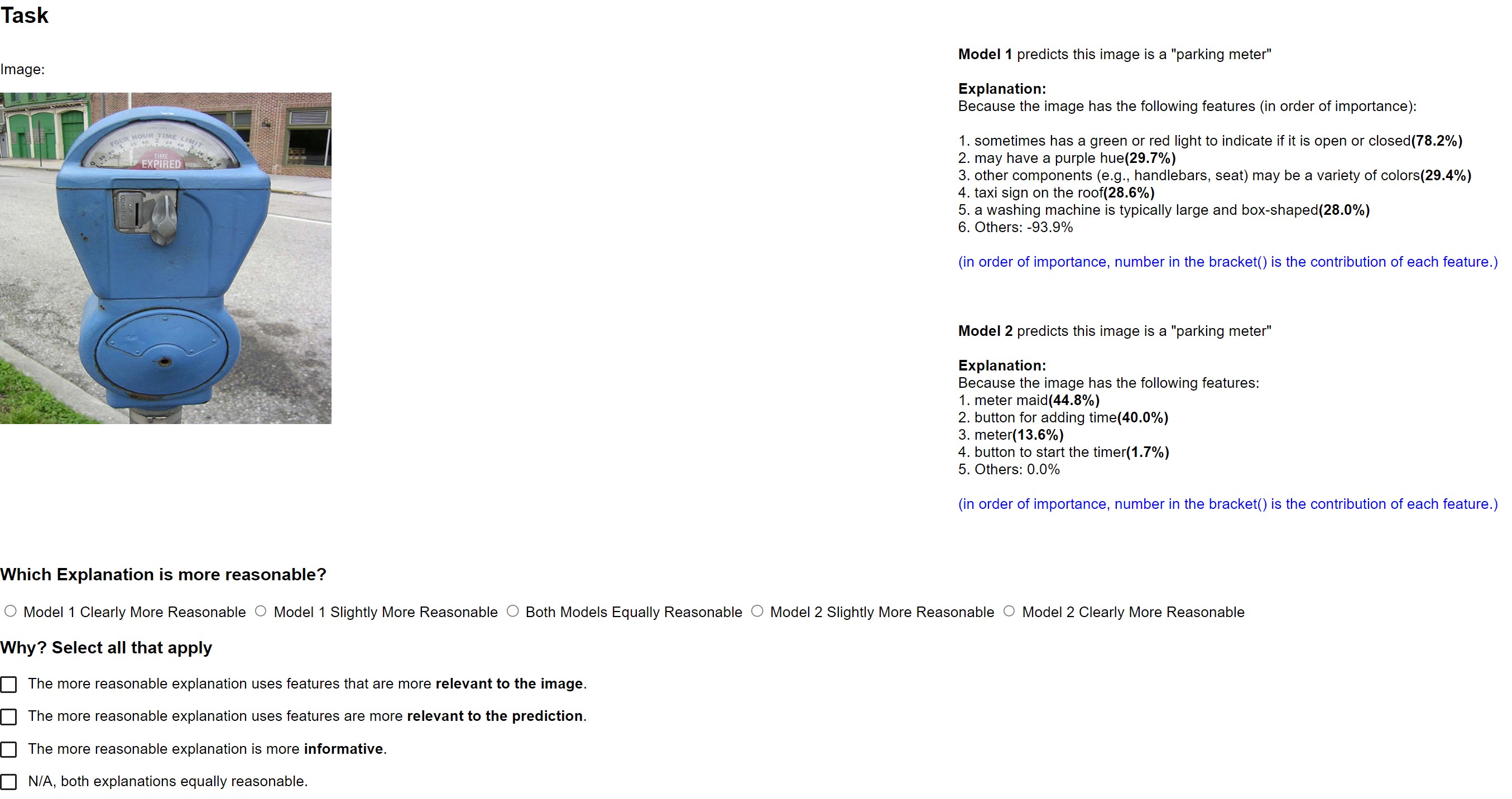}
    \caption{An example of human study interface}
    \label{fig:mturkInterface}
\end{figure}
\begin{table}[h]
\centering
\begin{tabular}{|l|l|l|}
\hline
Experiment         & Score (VLG-CBM) & Score (Baseline)     \\ \hline
VLG-CBM vs. LF-CBM & 3.33 (1.54)     & 2.67 (1.54)          \\ \hline
VLG-CBM vs. LM4CV  & 3.38 (1.54)     & \textbf{2.62 (1.54)} \\ \hline
\end{tabular}
\caption{Average Mturk score for our VLG-CBM and two baselines. }
\label{tab:MturkResult}
\end{table}

\section{Visualizing VLG-CBM explanations}
\label{app:TopVis}
This section presents an extended version of Table \ref{fig:Top5Visuailize} visualizing top-5 images for randomly picked concepts for CUB and Places365 dataset. The results are shown in Fig \ref{fig:TopActivatingPlaces365}, \ref{fig:ExplanationPlacesPart1}, \ref{fig:ExplanationPlacesPart2}, and \ref{fig:ExplanationPlacesPart3} for Places365 and \ref{fig:TopActivatingCUB}, Fig \ref{fig:ExplanationCUB1}, Fig \ref{fig:ExplanationCUB2}, and Fig \ref{fig:ExplanationCUB3} for CUB.
\begin{figure*}[ht!]
\centering
\begin{subfigure}{.49\linewidth}
    \centering
    \includegraphics[width=\linewidth]{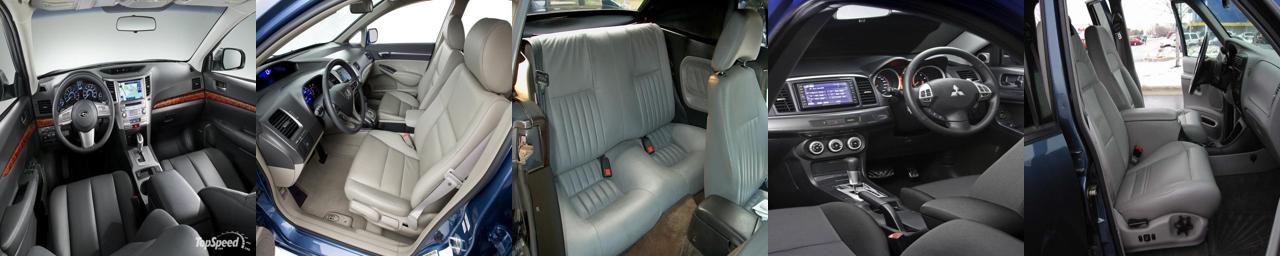}
    \caption{Concept: Armrest}
    \label{fig:sub1}
\end{subfigure}
\begin{subfigure}{.49\linewidth}
    \centering
    \includegraphics[width=\linewidth]{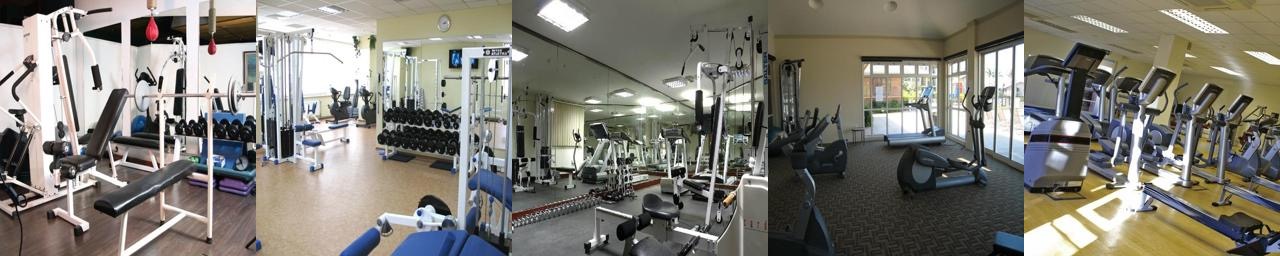}
    \caption{Concept: Balance beam}
    \label{fig:sub1}
\end{subfigure}
\begin{subfigure}{.49\linewidth}
    \centering
    \includegraphics[width=\linewidth]{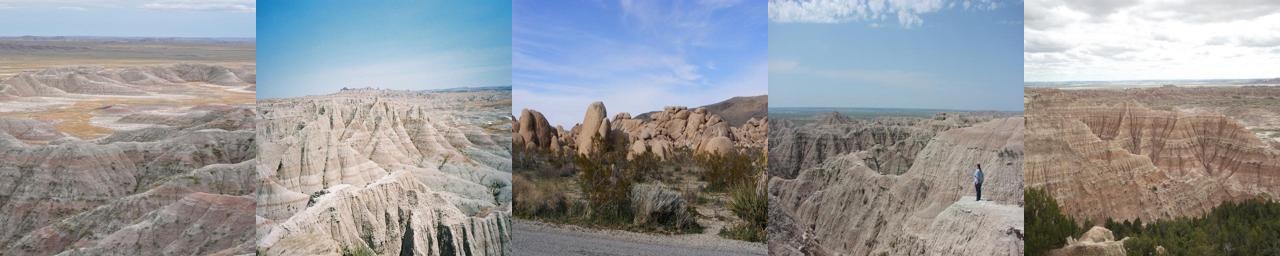}
    \caption{Concept: Barren Landscape}
    \label{fig:sub1}
\end{subfigure}
\begin{subfigure}{.49\linewidth}
    \centering
    \includegraphics[width=\linewidth]{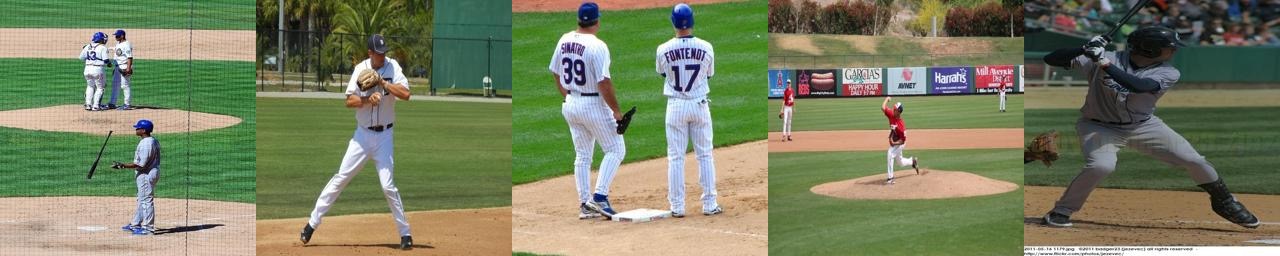}
    \caption{Concept: Baseball}
    \label{fig:sub1}
\end{subfigure}
\begin{subfigure}{.49\linewidth}
    \centering
    \includegraphics[width=\linewidth]{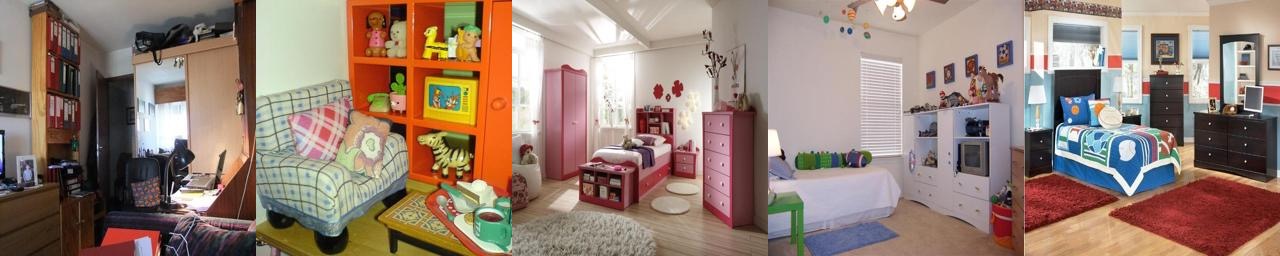}
    \caption{Concept: Bookcase}
    \label{fig:sub1}
\end{subfigure}
\begin{subfigure}{.49\linewidth}
    \centering
    \includegraphics[width=\linewidth]{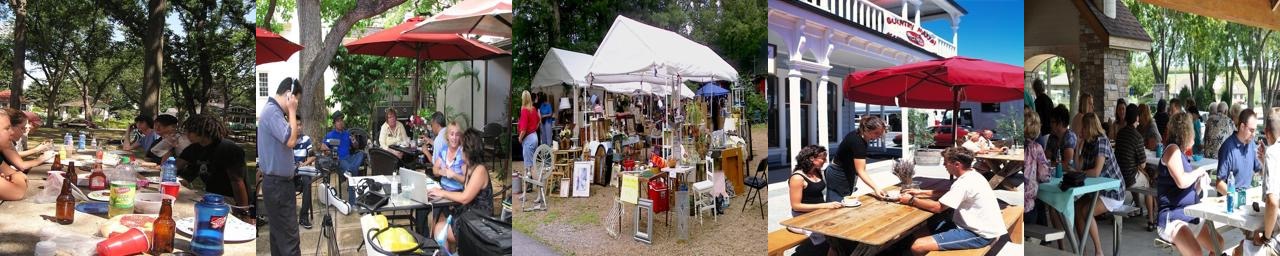}
    \caption{Concept: Bar or counter}
    \label{fig:sub1}
\end{subfigure}
\begin{subfigure}{.49\linewidth}
    \centering
    \includegraphics[width=\linewidth]{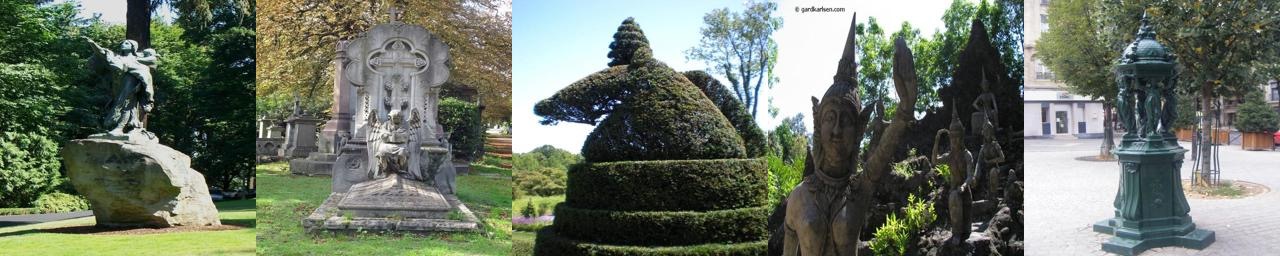}
    \caption{Concept: Statue}
    \label{fig:sub1}
\end{subfigure}
\begin{subfigure}{.49\linewidth}
    \centering
    \includegraphics[width=\linewidth]{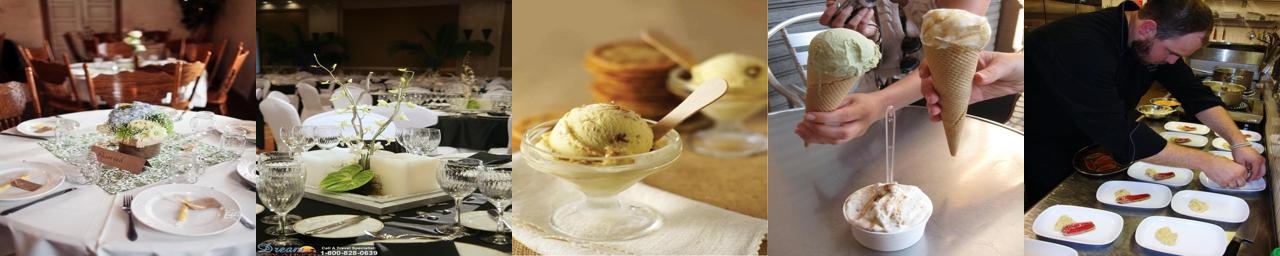}
    \caption{Concept: Spoon}
    \label{fig:sub1}
\end{subfigure}
\begin{subfigure}{.49\linewidth}
    \centering
    \includegraphics[width=\linewidth]{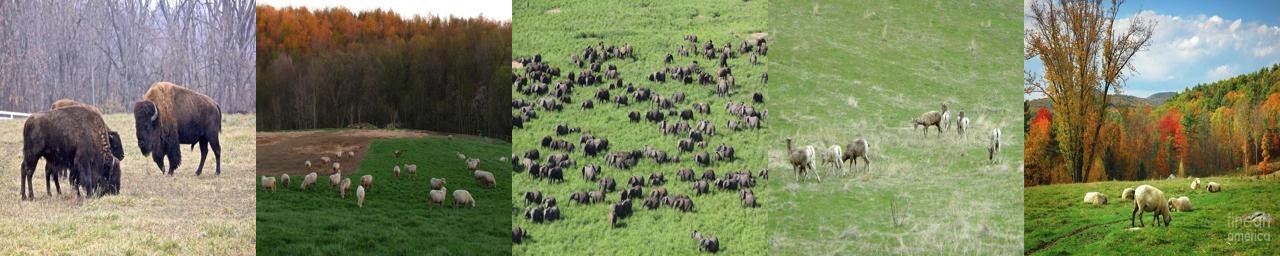}
    \caption{Concept: Used for grazing animals}
    \label{fig:sub1}
\end{subfigure}
\begin{subfigure}{.49\linewidth}
    \centering
    \includegraphics[width=\linewidth]{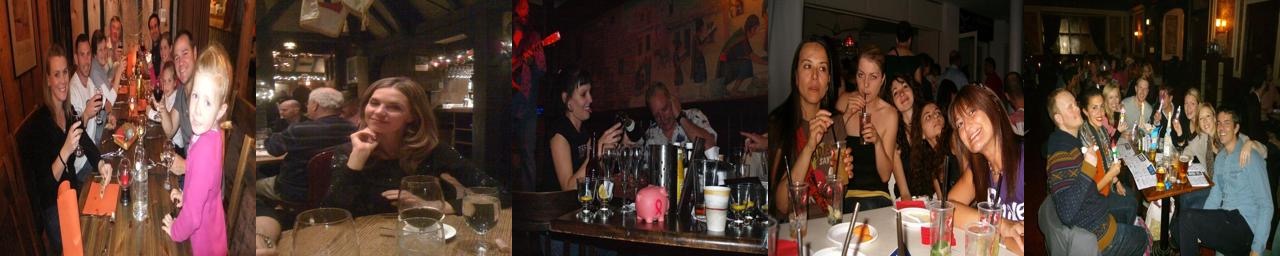}
    \caption{Concept: Pool Table}
    \label{fig:sub1}
\end{subfigure}
\begin{subfigure}{.49\linewidth}
    \centering
    \includegraphics[width=\linewidth]{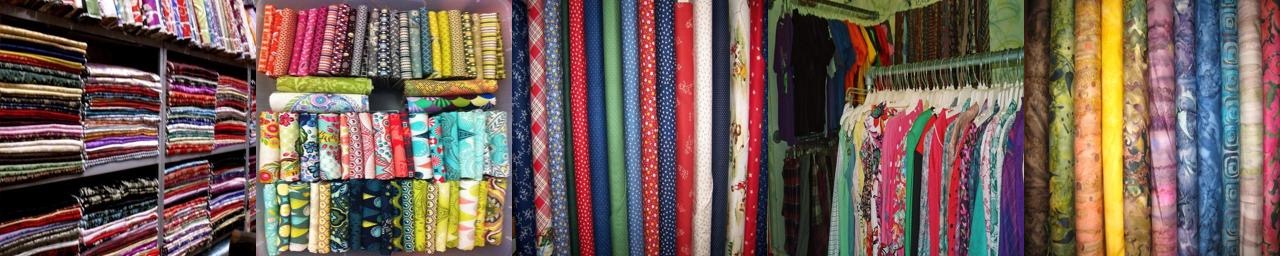}
    \caption{Concept: Shelves of Sewing supplies}
    \label{fig:sub1}
\end{subfigure}
\begin{subfigure}{.49\linewidth}
    \centering
    \includegraphics[width=\linewidth]{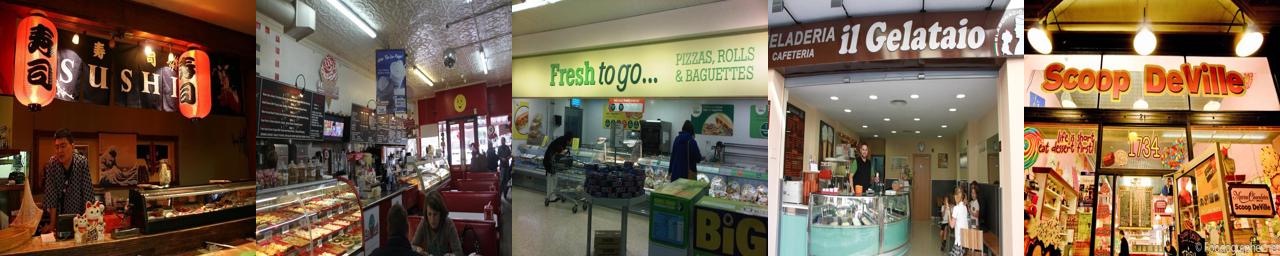}
    \caption{Concept: Plastic table and chairs}
    \label{fig:sub1}
\end{subfigure}
\begin{subfigure}{.49\linewidth}
    \centering
    \includegraphics[width=\linewidth]{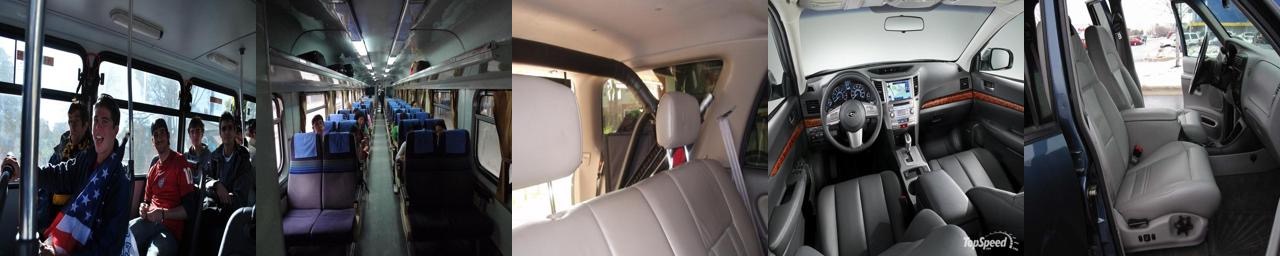}
    \caption{Concept: Steering Wheel}
    \label{fig:sub1}
\end{subfigure}
\begin{subfigure}{.49\linewidth}
    \centering
    \includegraphics[width=\linewidth]{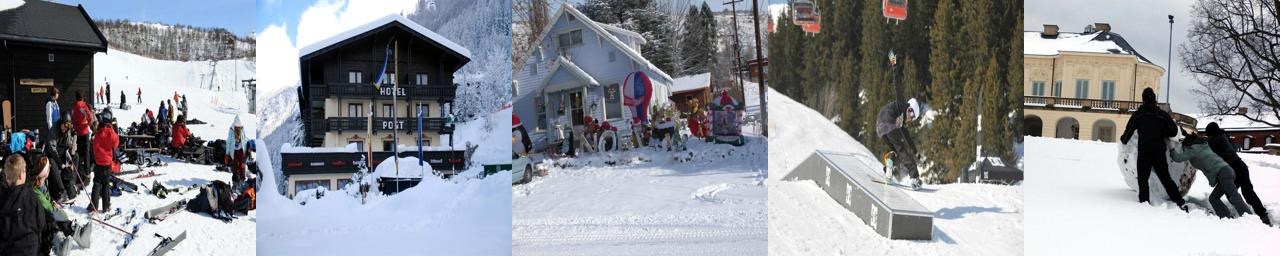}
    \caption{Concept: Vacation}
    \label{fig:sub1}
\end{subfigure}
\begin{subfigure}{.49\linewidth}
    \centering
    \includegraphics[width=\linewidth]{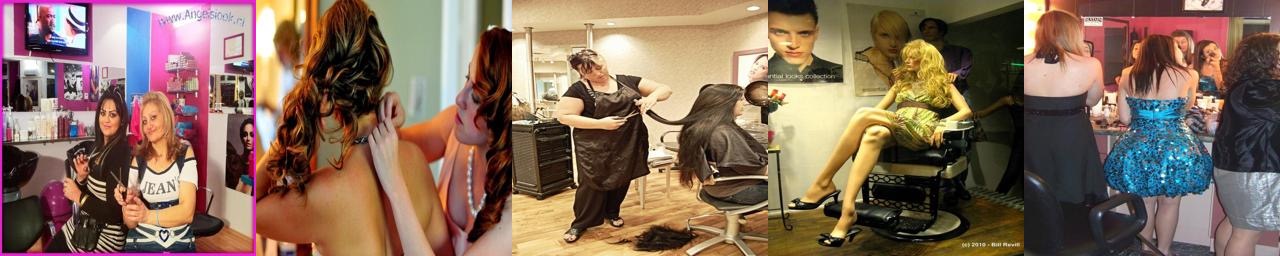}
    \caption{Concept: Barbershop}
    \label{fig:sub1}
\end{subfigure}
\begin{subfigure}{.49\linewidth}
    \centering
    \includegraphics[width=\linewidth]{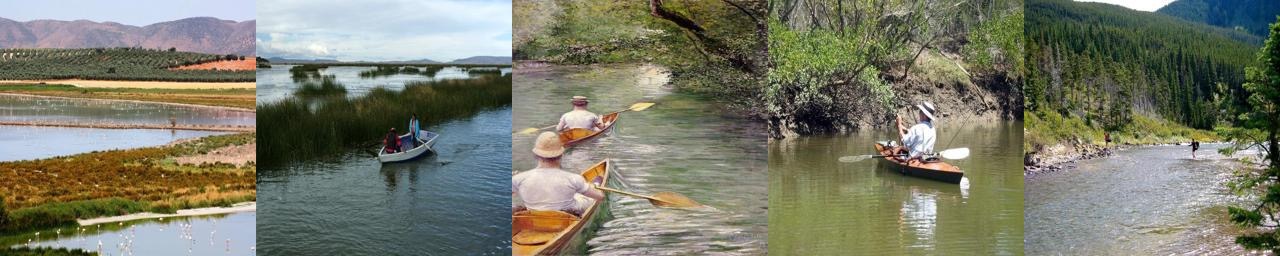}
    \caption{Concept: Bank}
    \label{fig:sub1}
\end{subfigure}
\caption{Top-5 activating images for randomly selected Places365 concepts}
\label{fig:TopActivatingPlaces365}
\end{figure*}
\begin{figure*}[ht!]
\centering
\begin{subfigure}{.49\linewidth}
    \centering
    \includegraphics[width=\linewidth]{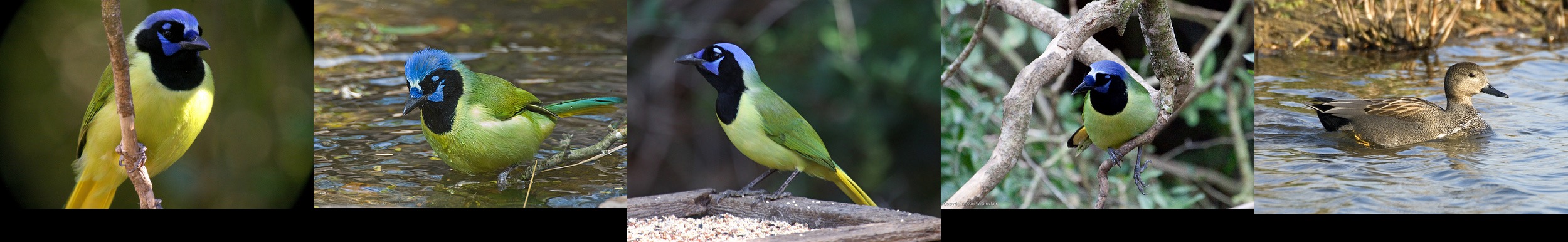}
    \caption{Concept: Black and White head}
    \label{fig:sub1}
\end{subfigure}
\begin{subfigure}{.49\linewidth}
    \centering
    \includegraphics[width=\linewidth]{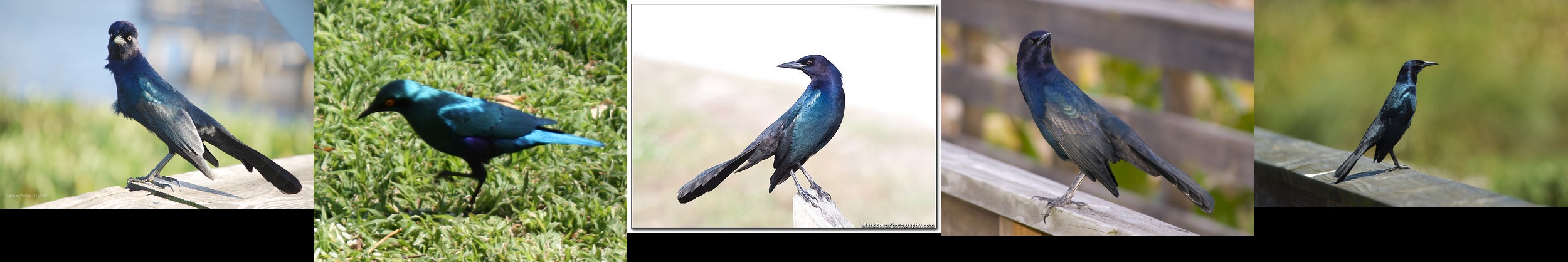}
    \caption{Concept: Black feathers}
    \label{fig:sub1}
\end{subfigure}
\begin{subfigure}{.49\linewidth}
    \centering
    \includegraphics[width=\linewidth]{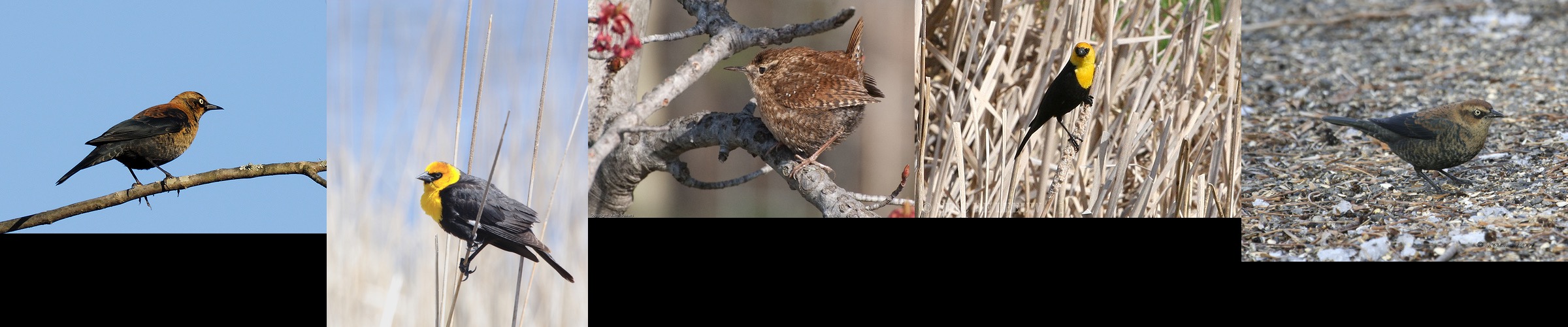}
    \caption{Concept: Black legs}
    \label{fig:sub1}
\end{subfigure}
\begin{subfigure}{.49\linewidth}
    \centering
    \includegraphics[width=\linewidth]{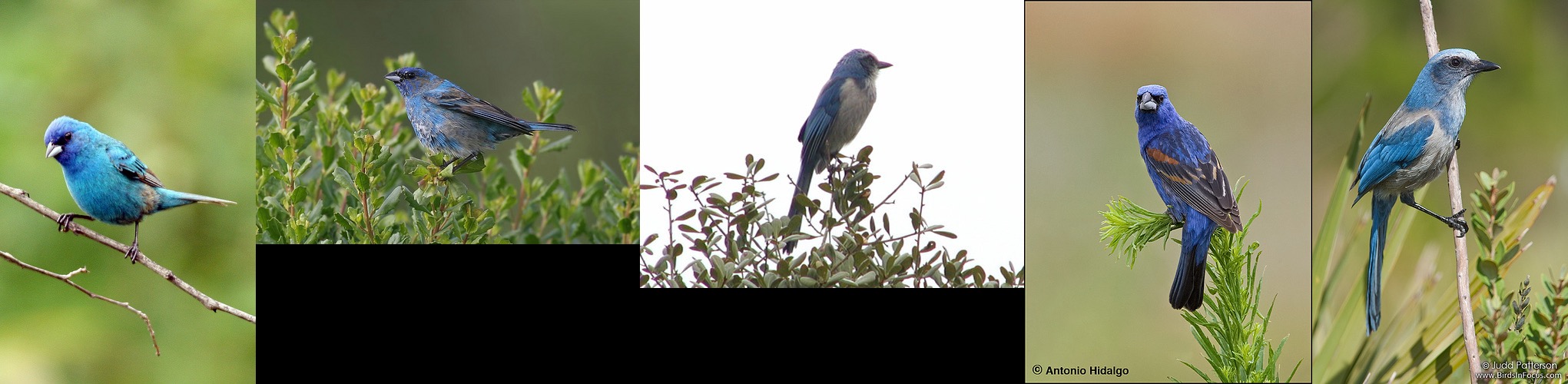}
    \caption{Concept: Blue body}
    \label{fig:sub1}
\end{subfigure}
\begin{subfigure}{.49\linewidth}
    \centering
    \includegraphics[width=\linewidth]{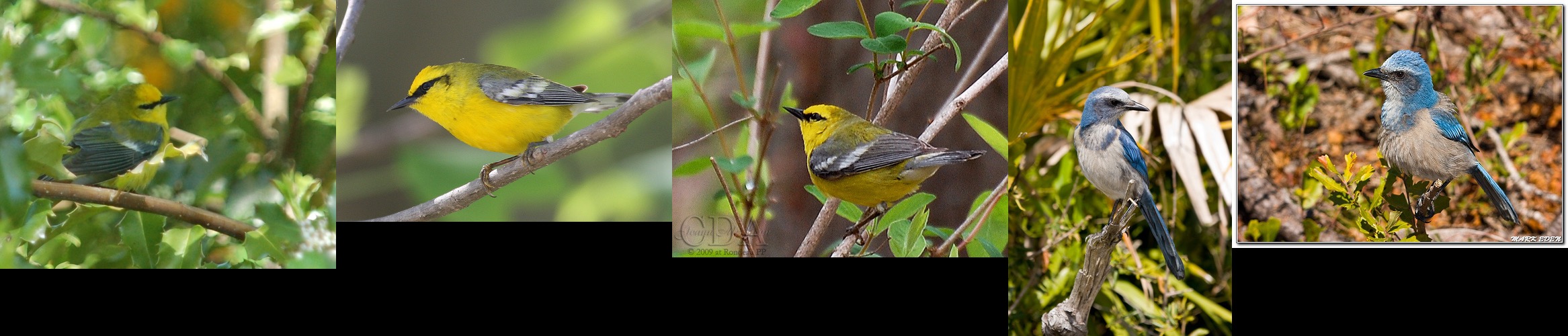}
    \caption{Concept: Blue wings}
    \label{fig:sub1}
\end{subfigure}
\begin{subfigure}{.49\linewidth}
    \centering
    \includegraphics[width=\linewidth]{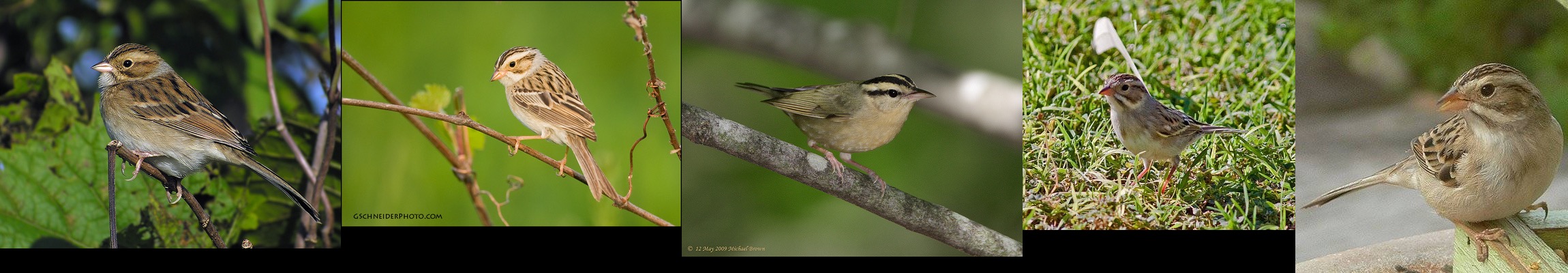}
    \caption{Concept: Brown cap}
    \label{fig:sub1}
\end{subfigure}
\begin{subfigure}{.49\linewidth}
    \centering
    \includegraphics[width=\linewidth]{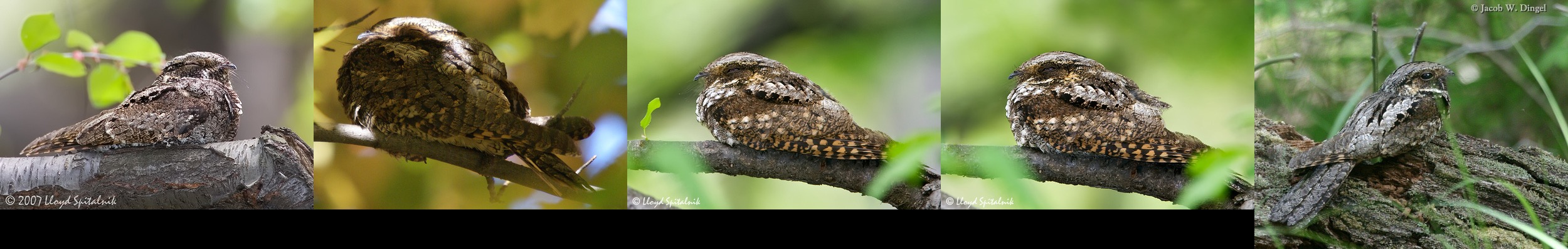}
    \caption{Concept: Brown tail}
    \label{fig:sub1}
\end{subfigure}
\begin{subfigure}{.49\linewidth}
    \centering
    \includegraphics[width=\linewidth]{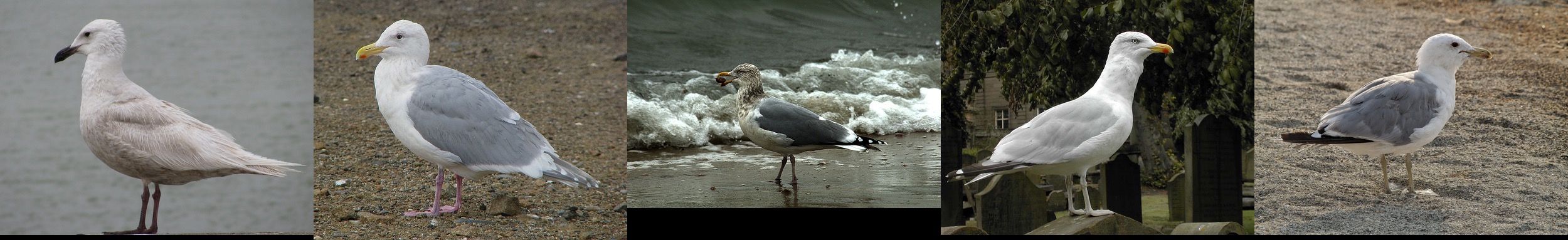}
    \caption{Concept: Yellow bill with a red spot}
    \label{fig:sub1}
\end{subfigure}
\begin{subfigure}{.49\linewidth}
    \centering
    \includegraphics[width=\linewidth]{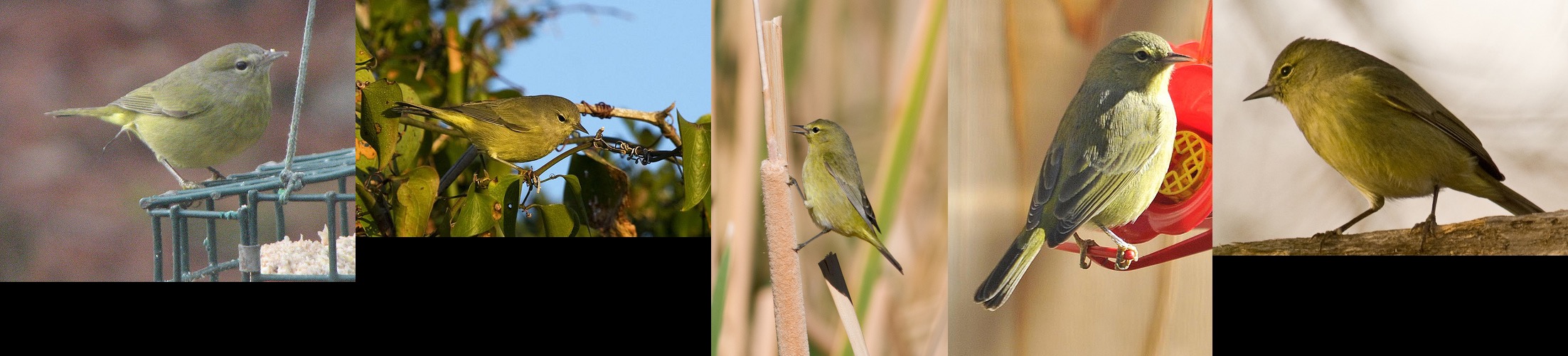}
    \caption{Concept: Thin beak}
    \label{fig:sub1}
\end{subfigure}
\begin{subfigure}{.49\linewidth}
    \centering
    \includegraphics[width=\linewidth]{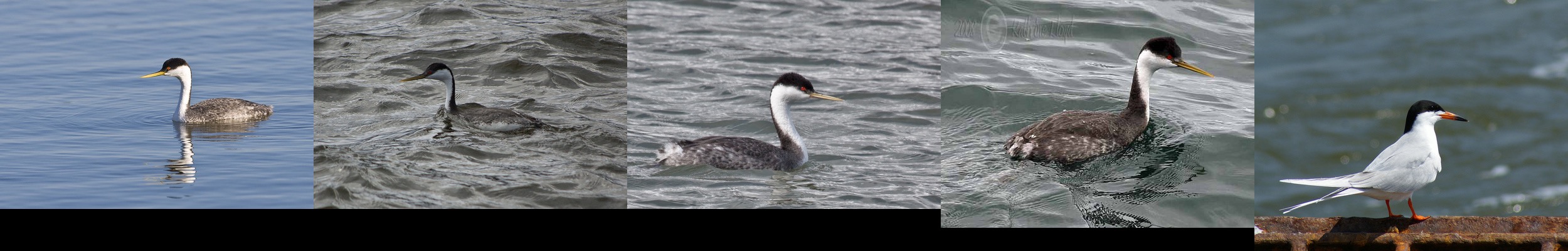}
    \caption{Concept: White body}
    \label{fig:sub1}
\end{subfigure}
\begin{subfigure}{.49\linewidth}
    \centering
    \includegraphics[width=\linewidth]{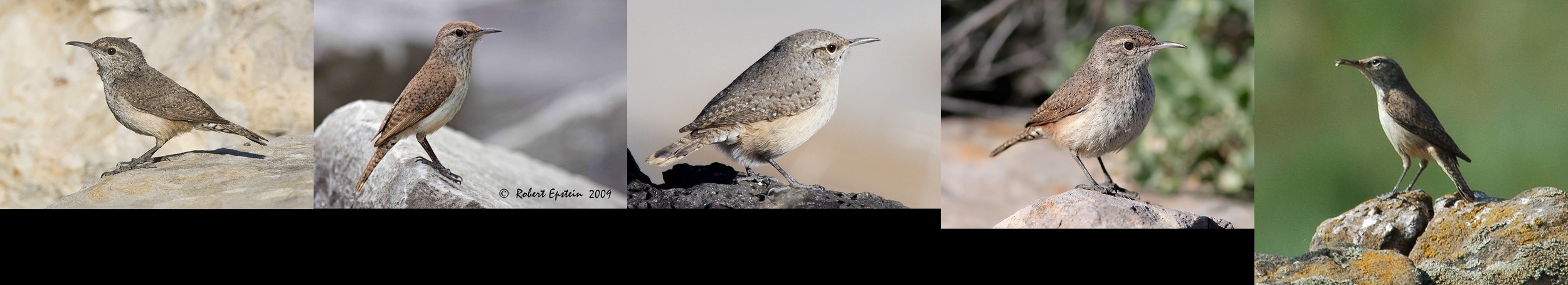}
    \caption{Concept: White stripes above the eyes}
    \label{fig:sub1}
\end{subfigure}
\begin{subfigure}{.49\linewidth}
    \centering
    \includegraphics[width=\linewidth]{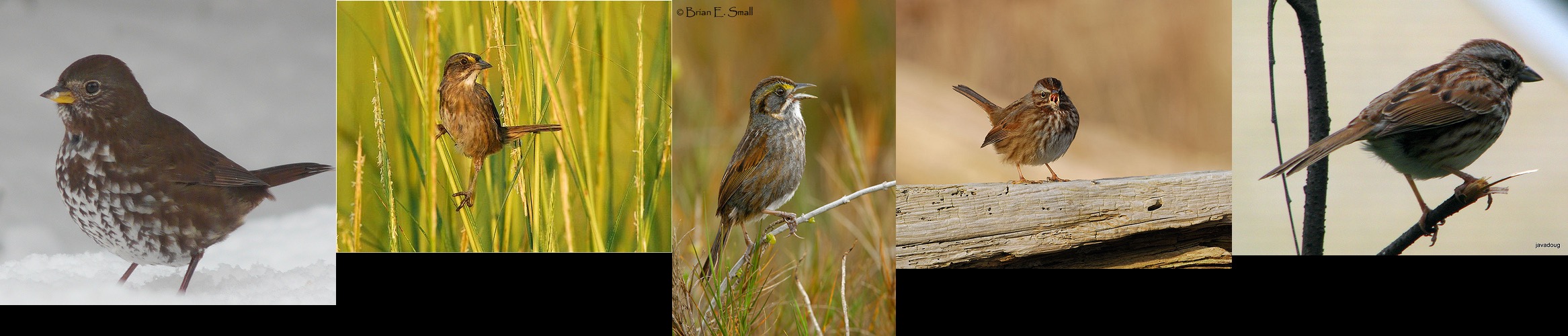}
    \caption{Concept: Streaked breast}
    \label{fig:sub1}
\end{subfigure}
\begin{subfigure}{.49\linewidth}
    \centering
    \includegraphics[width=\linewidth]{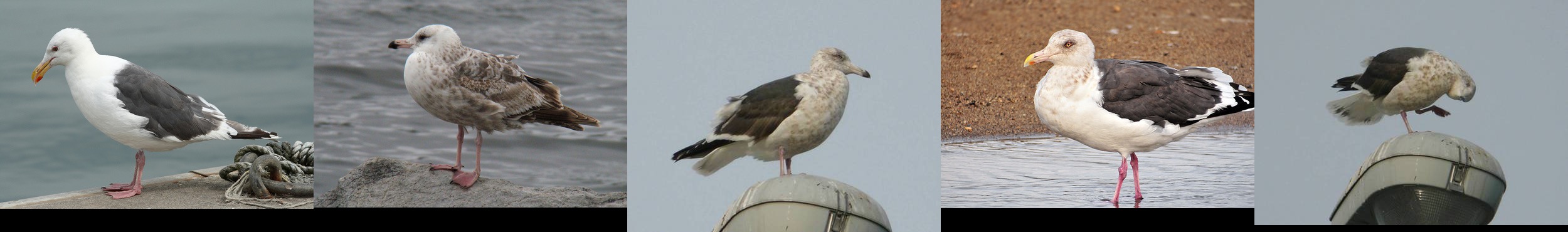}
    \caption{Concept: White underwings}
    \label{fig:sub1}
\end{subfigure}
\begin{subfigure}{.49\linewidth}
    \centering
    \includegraphics[width=\linewidth]{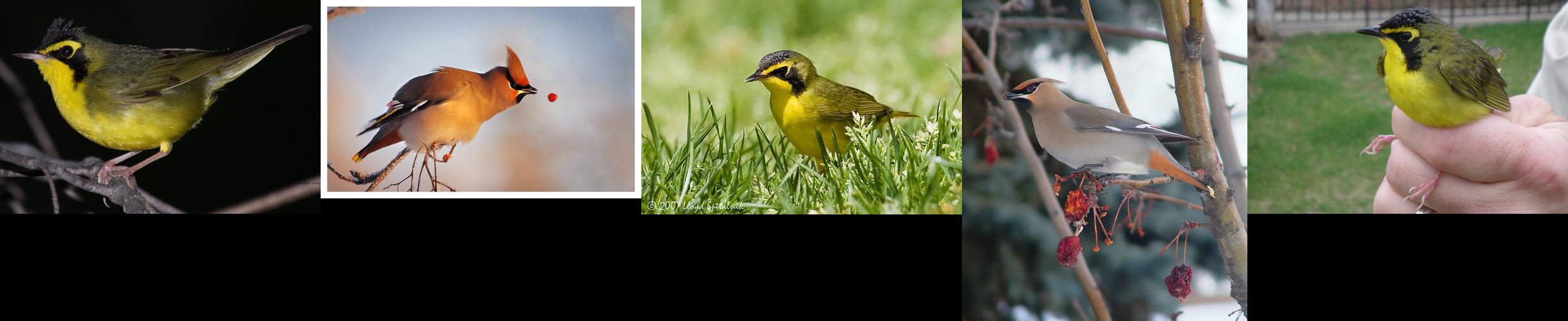}
    \caption{Concept: Small slender body}
    \label{fig:sub1}
\end{subfigure}
\begin{subfigure}{.49\linewidth}
    \centering
    \includegraphics[width=\linewidth]{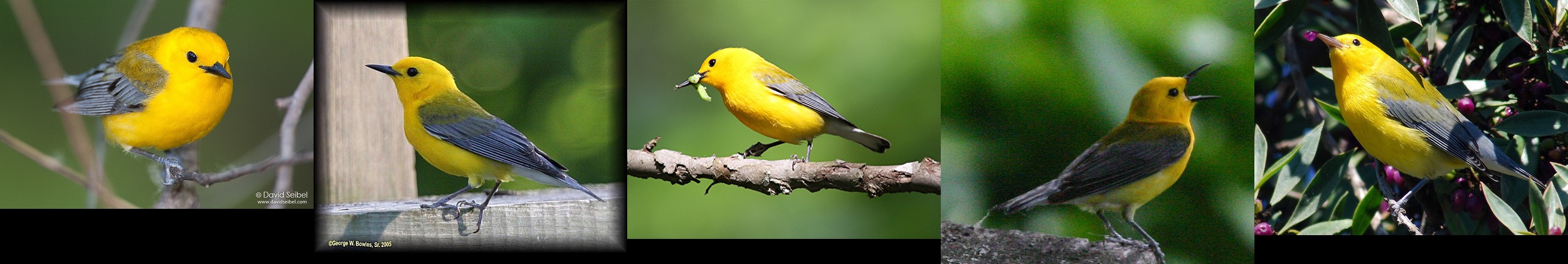}
    \caption{Concept: Black mask on the face}
    \label{fig:sub1}
\end{subfigure}
\begin{subfigure}{.49\linewidth}
    \centering
    \includegraphics[width=\linewidth]{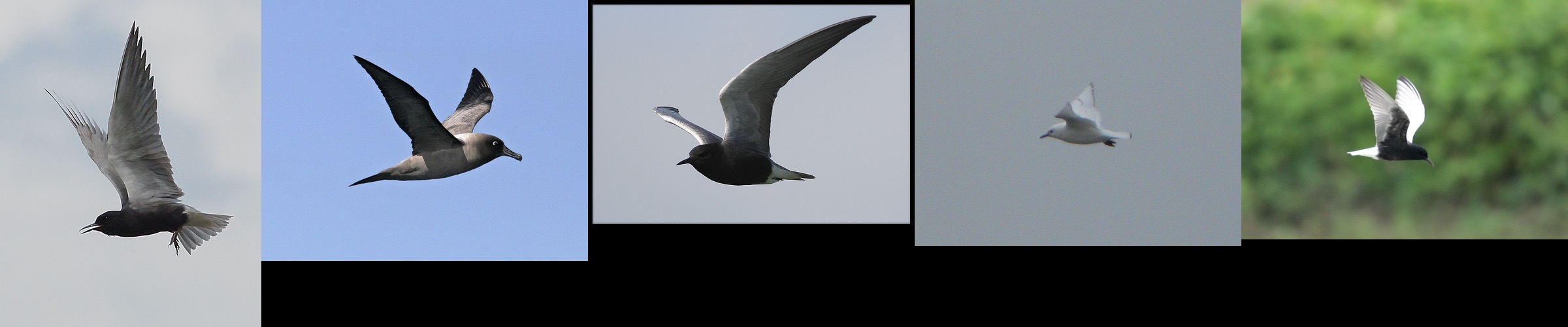}
    \caption{Concept: Small dark body}
    \label{fig:sub1}
\end{subfigure}
\caption{Top-5 activating images for randomly selected CUB concepts}
\label{fig:TopActivatingCUB}
\end{figure*}
\begin{figure*}[ht!]
\centering
\begin{subfigure}{.25\linewidth}
    \centering
    \includegraphics[width=\linewidth]{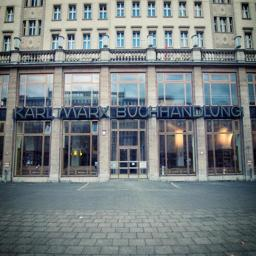}
    \label{fig:sub1}
\end{subfigure}
\begin{subfigure}{.70\linewidth}
    \centering
    \includegraphics[width=\linewidth]{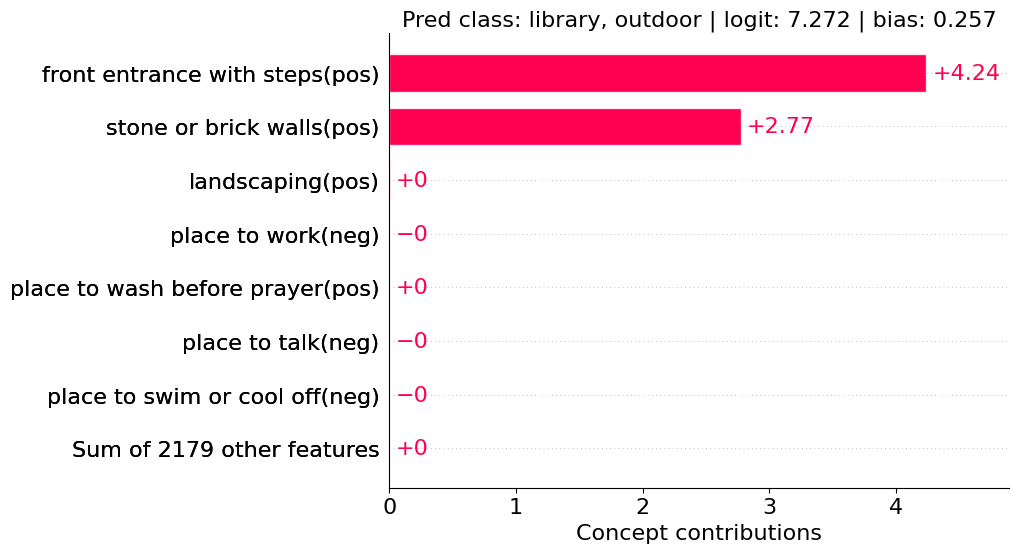}
    \label{fig:sub2}
\end{subfigure}
\vfill 
\begin{subfigure}{.25\linewidth}
    \centering
    \includegraphics[width=\linewidth]{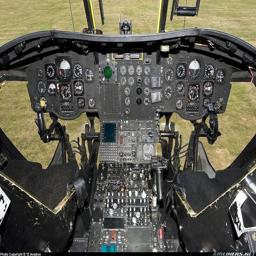}
    \label{fig:sub3}
\end{subfigure}
\begin{subfigure}{.70\linewidth}
    \centering
    \includegraphics[width=\linewidth]{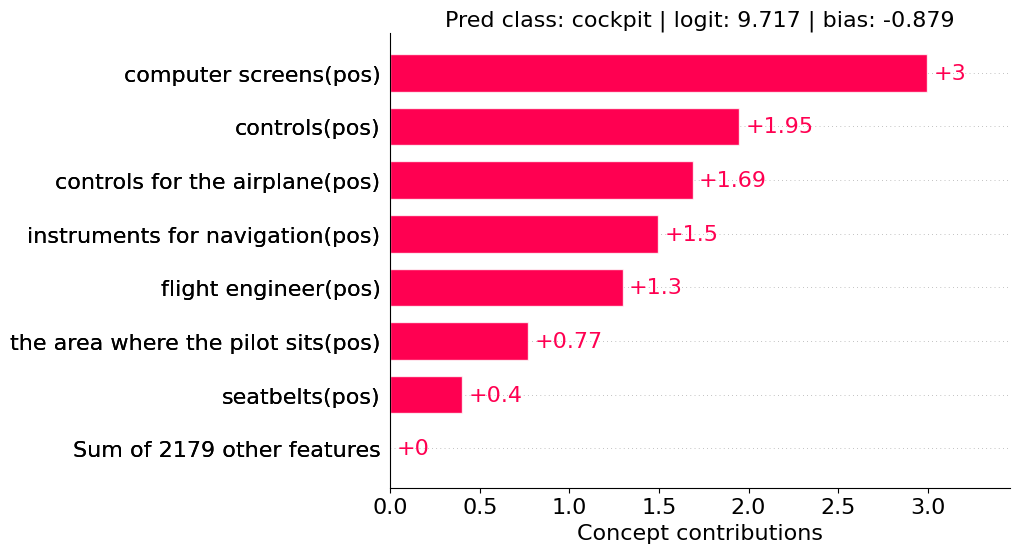}
    \label{fig:sub4}
\end{subfigure}
\vfill
\begin{subfigure}{.25\linewidth}
    \centering
    \includegraphics[width=\linewidth]{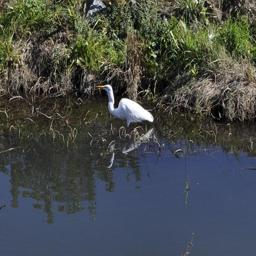}
    \label{fig:sub5}
\end{subfigure}
\begin{subfigure}{.70\linewidth}
    \centering
    \includegraphics[width=\linewidth]{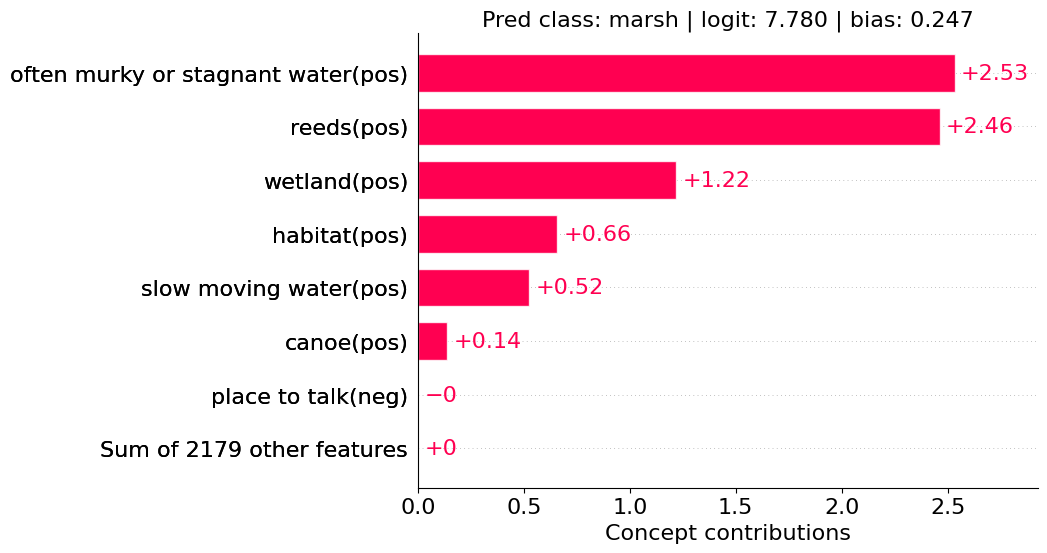}
    \label{fig:sub6}
\end{subfigure}
\vfill
\begin{subfigure}{.25\linewidth}
    \centering
    \includegraphics[width=\linewidth]{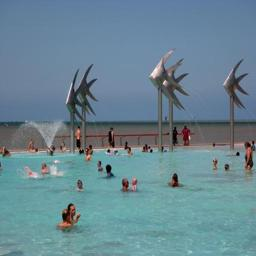}
    \label{fig:sub7}
\end{subfigure}
\begin{subfigure}{.70\linewidth}
    \centering
    \includegraphics[width=\linewidth]{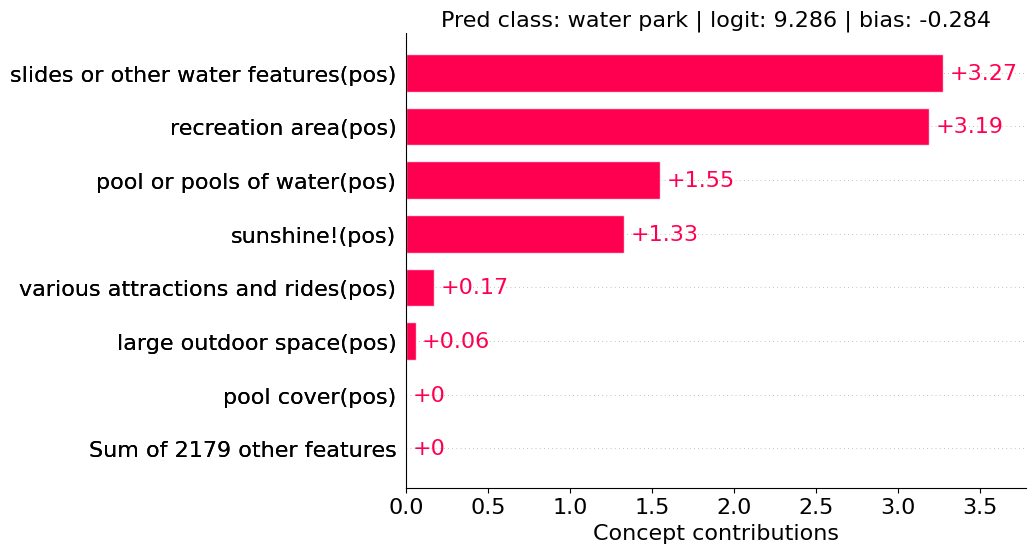}
    \label{fig:sub8}
\end{subfigure}
\caption{Randomly selected explanations for Places365 (Part 1)}
\label{fig:ExplanationPlacesPart1}
\end{figure*}

\begin{figure*}[ht!]
\centering
\begin{subfigure}{.25\linewidth}
    \centering
    \includegraphics[width=\linewidth]{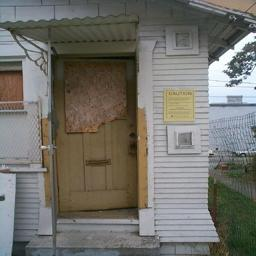}
    \label{fig:sub9}
\end{subfigure}
\begin{subfigure}{.70\linewidth}
    \centering
    \includegraphics[width=\linewidth]{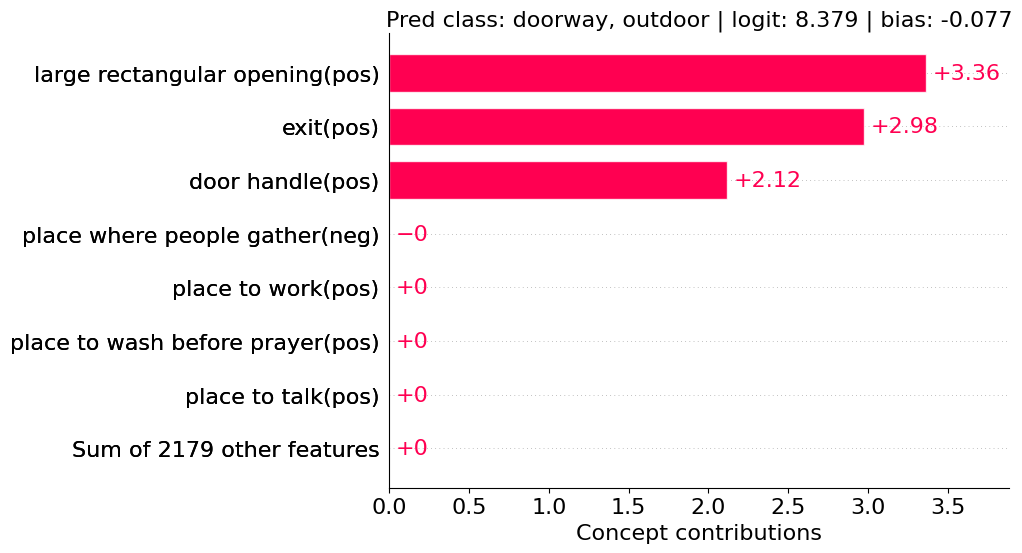}
    \label{fig:sub10}
\end{subfigure}
\vfill
\begin{subfigure}{.25\linewidth}
    \centering
    \includegraphics[width=\linewidth]{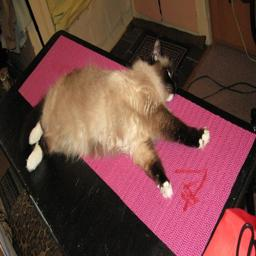}
    \label{fig:sub11}
\end{subfigure}
\begin{subfigure}{.70\linewidth}
    \centering
    \includegraphics[width=\linewidth]{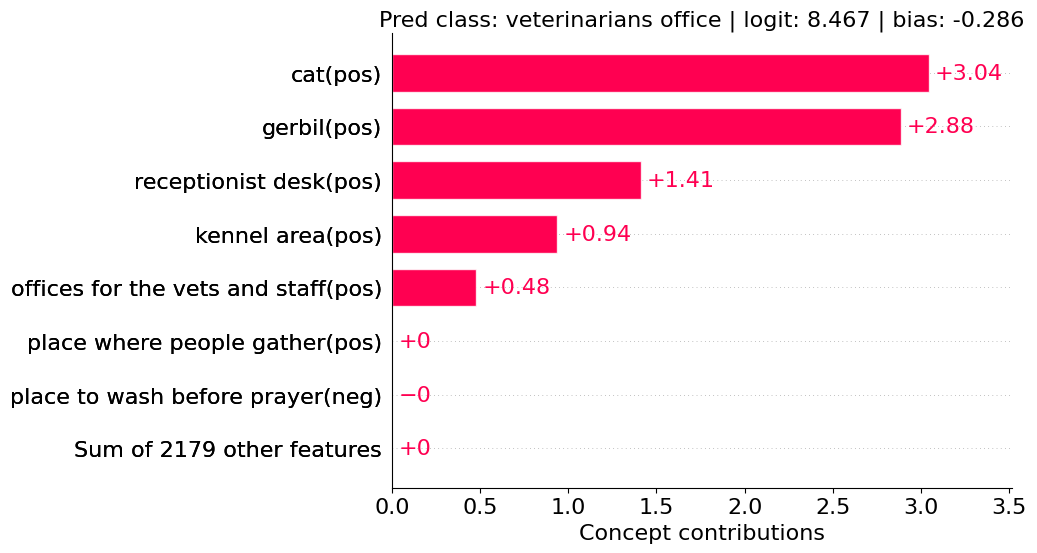}
    \label{fig:sub12}
\end{subfigure}
\vfill
\begin{subfigure}{.25\linewidth}
    \centering
    \includegraphics[width=\linewidth]{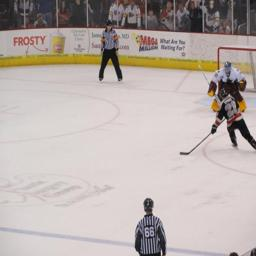}
    \label{fig:sub13}
\end{subfigure}
\begin{subfigure}{.70\linewidth}
    \centering
    \includegraphics[width=\linewidth]{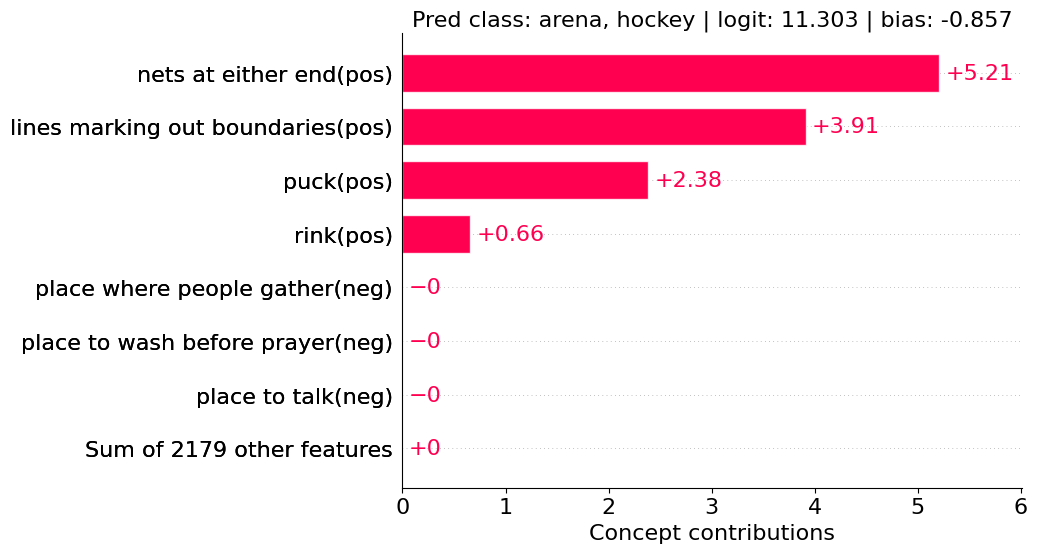}
    \label{fig:sub14}
\end{subfigure}
\vfill
\begin{subfigure}{.25\linewidth}
    \centering
    \includegraphics[width=\linewidth]{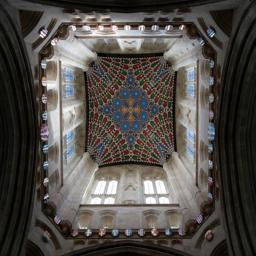}
    \label{fig:sub15}
\end{subfigure}
\begin{subfigure}{.70\linewidth}
    \centering
    \includegraphics[width=\linewidth]{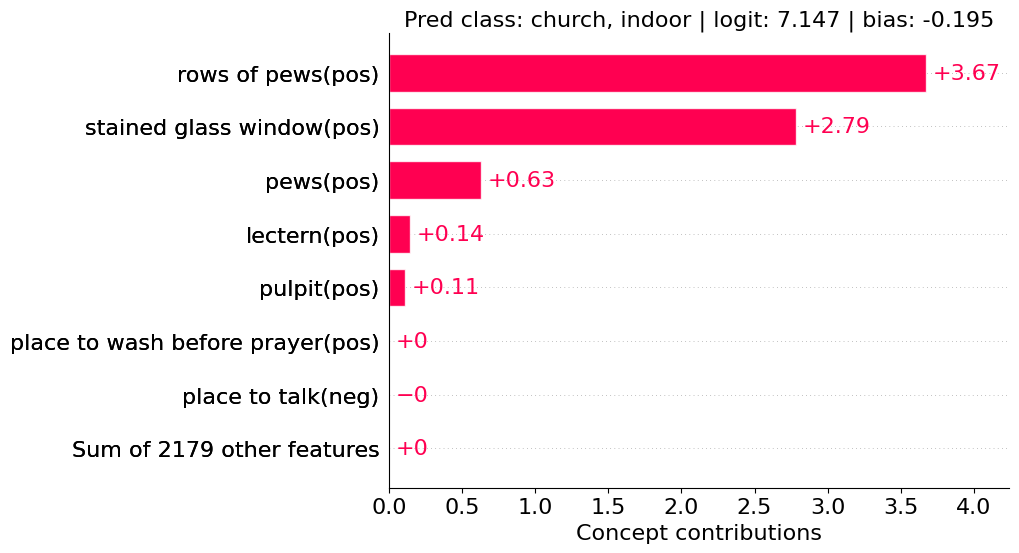}
    \label{fig:sub16}
\end{subfigure}
\caption{Randomly selected explanations for Places365 (Part 2)}
\label{fig:ExplanationPlacesPart2}
\end{figure*}

\begin{figure*}[ht!]
\centering
\vfill
\begin{subfigure}{.25\linewidth}
    \centering
    \includegraphics[width=\linewidth]{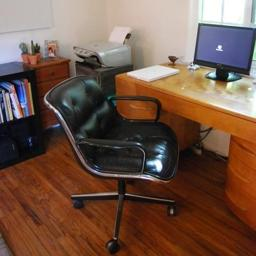}
    \label{fig:sub17}
\end{subfigure}
\begin{subfigure}{.70\linewidth}
    \centering
    \includegraphics[width=\linewidth]{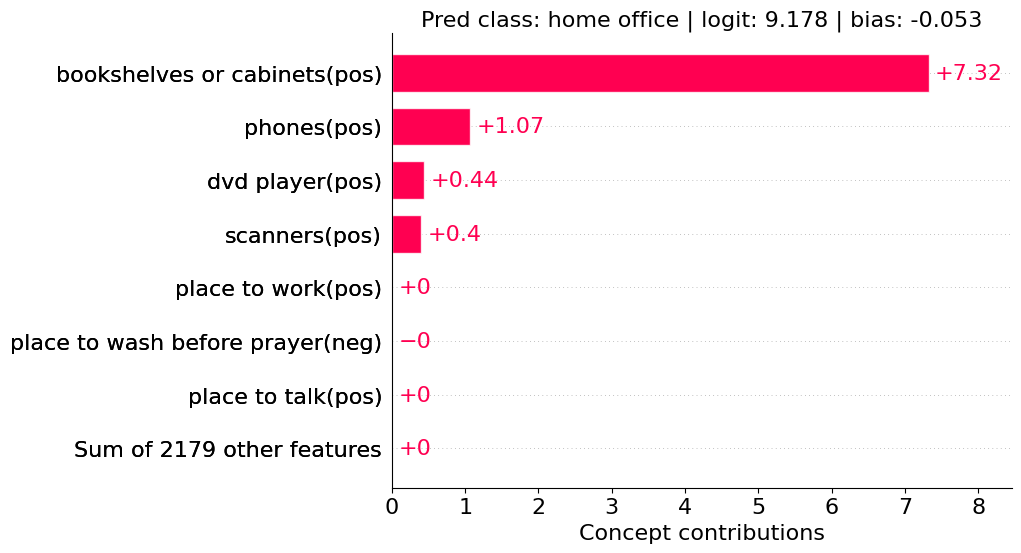}
    \label{fig:sub18}
\end{subfigure}
\vfill
\begin{subfigure}{.25\linewidth}
    \centering
    \includegraphics[width=\linewidth]{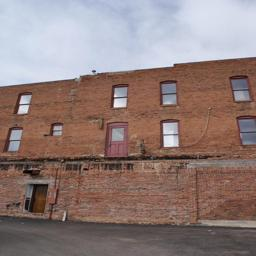}
    \label{fig:sub19}
\end{subfigure}
\begin{subfigure}{.70\linewidth}
    \centering
    \includegraphics[width=\linewidth]{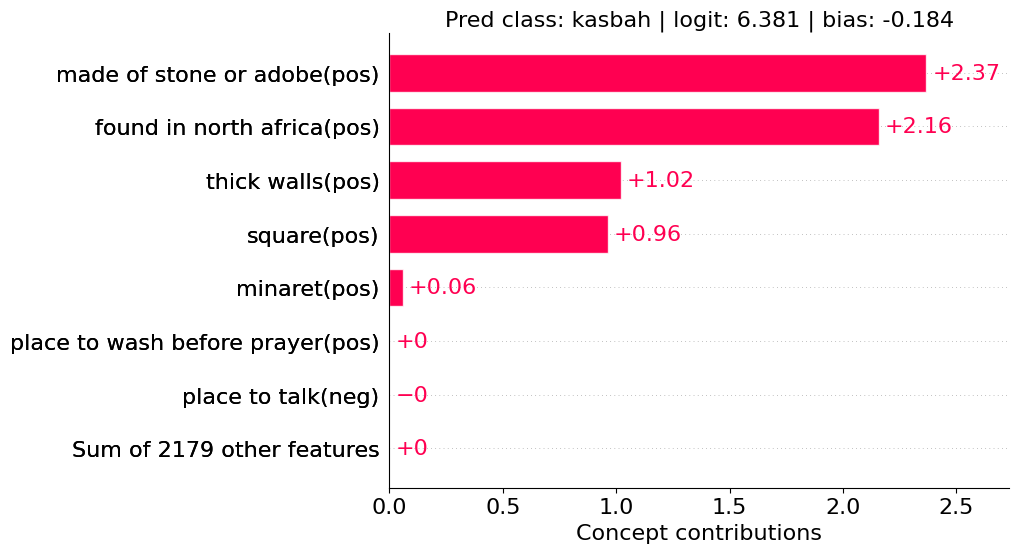}
    \label{fig:sub20}
\end{subfigure}
\caption{Randomly selected explanations for Places365 (Part 3)}
\label{fig:ExplanationPlacesPart3}
\end{figure*}

\begin{figure*}[ht!]
\centering
\begin{subfigure}{.25\linewidth}
    \centering
    \includegraphics[width=\linewidth]{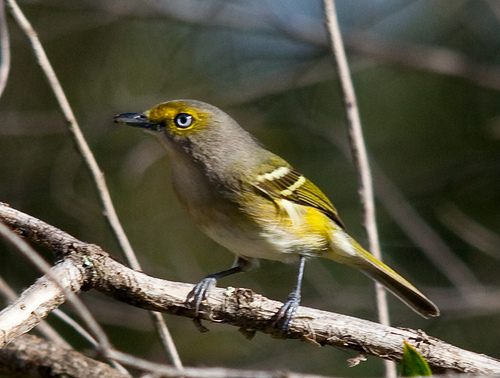}
    \label{fig:sub1}
\end{subfigure}
\begin{subfigure}{.70\linewidth}
    \centering
    \includegraphics[width=\linewidth]{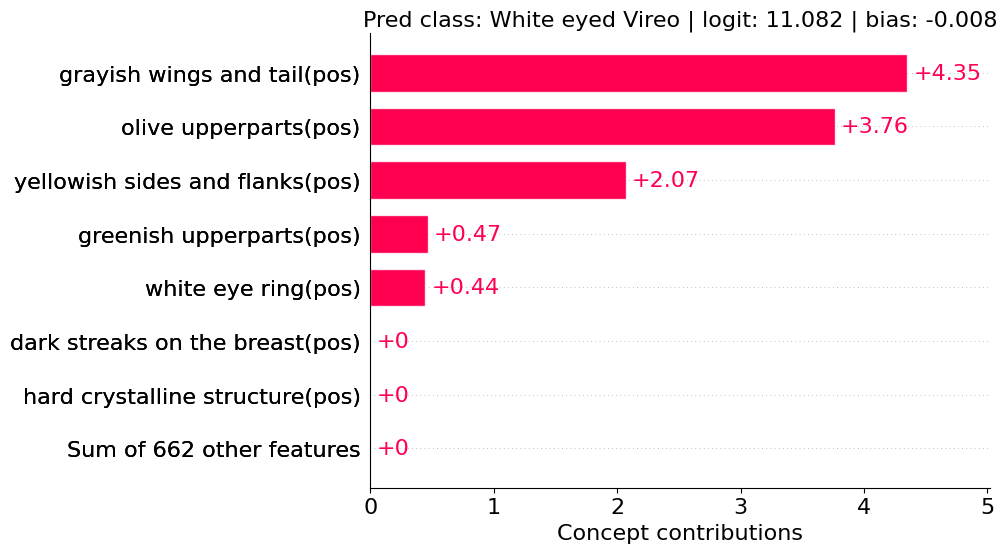}
    \label{fig:sub2}
\end{subfigure}
\vfill
\begin{subfigure}{.25\linewidth}
    \centering
    \includegraphics[width=\linewidth]{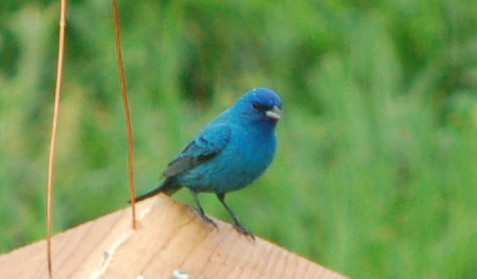}
    \label{fig:sub3}
\end{subfigure}
\begin{subfigure}{.70\linewidth}
    \centering
    \includegraphics[width=\linewidth]{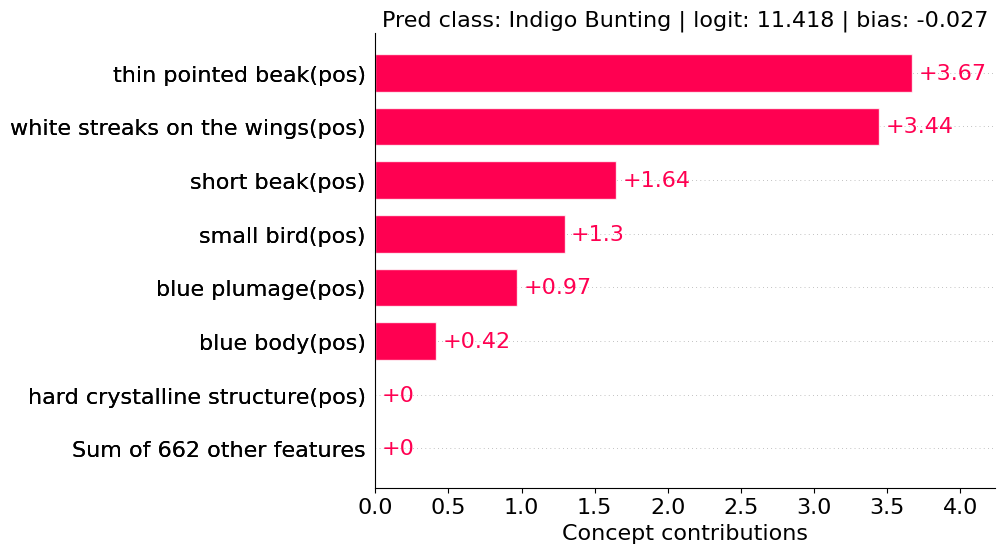}
    \label{fig:sub4}
\end{subfigure}
\vfill
\begin{subfigure}{.25\linewidth}
    \centering
    \includegraphics[width=\linewidth]{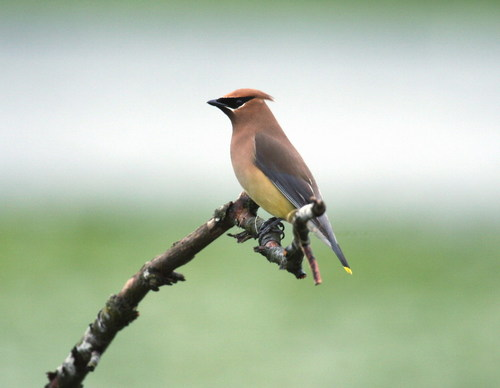}
    \label{fig:sub5}
\end{subfigure}
\begin{subfigure}{.70\linewidth}
    \centering
    \includegraphics[width=\linewidth]{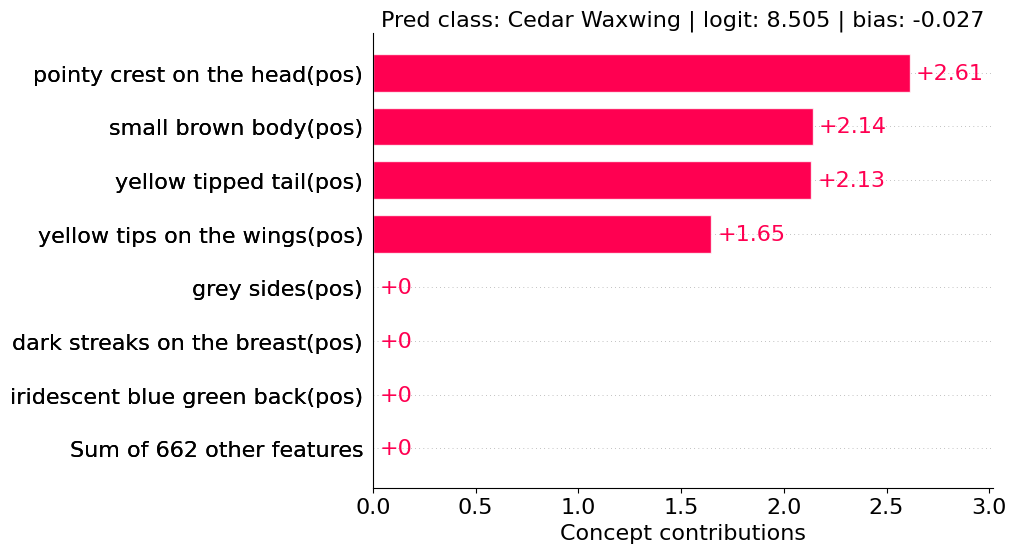}
    \label{fig:sub6}
\end{subfigure}
\vfill
\begin{subfigure}{.25\linewidth}
    \centering
    \includegraphics[width=\linewidth]{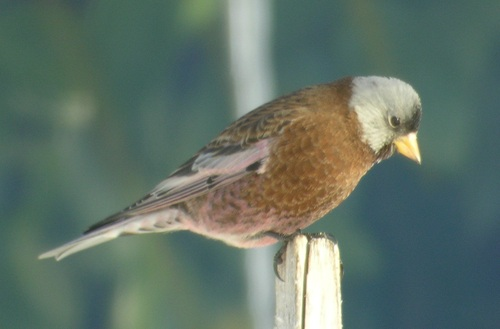}
    \label{fig:sub7}
\end{subfigure}
\begin{subfigure}{.70\linewidth}
    \centering
    \includegraphics[width=\linewidth]{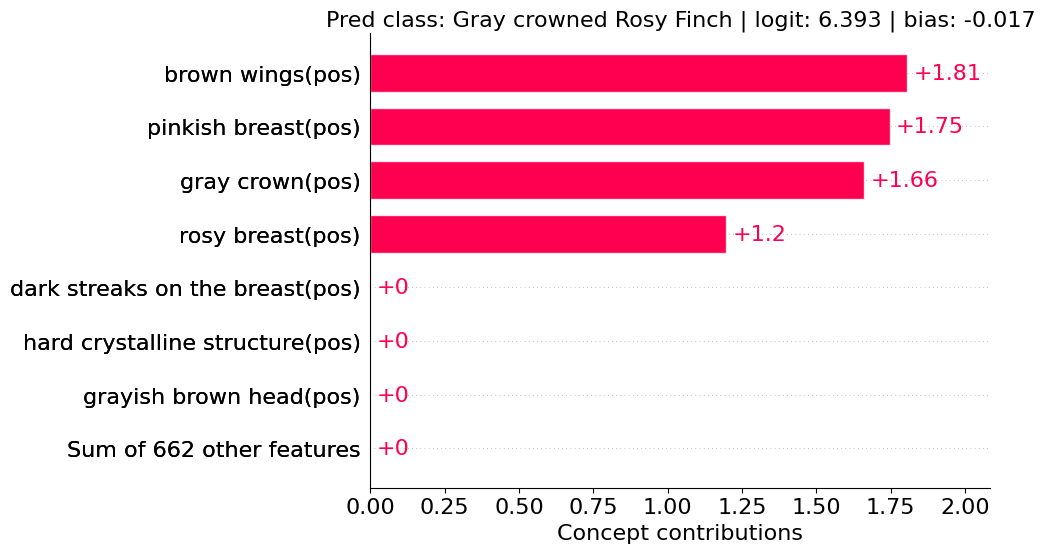}
    \label{fig:sub8}
\end{subfigure}
\caption{Randomly selected explanations for CUB (Part 1)}
\label{fig:ExplanationCUB1}
\end{figure*}

\begin{figure*}[ht!]
\centering
\begin{subfigure}{.25\linewidth}
    \centering
    \includegraphics[width=\linewidth]{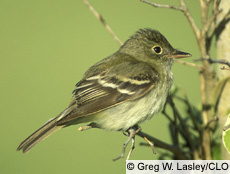}
    \label{fig:sub9}
\end{subfigure}
\begin{subfigure}{.70\linewidth}
    \centering
    \includegraphics[width=\linewidth]{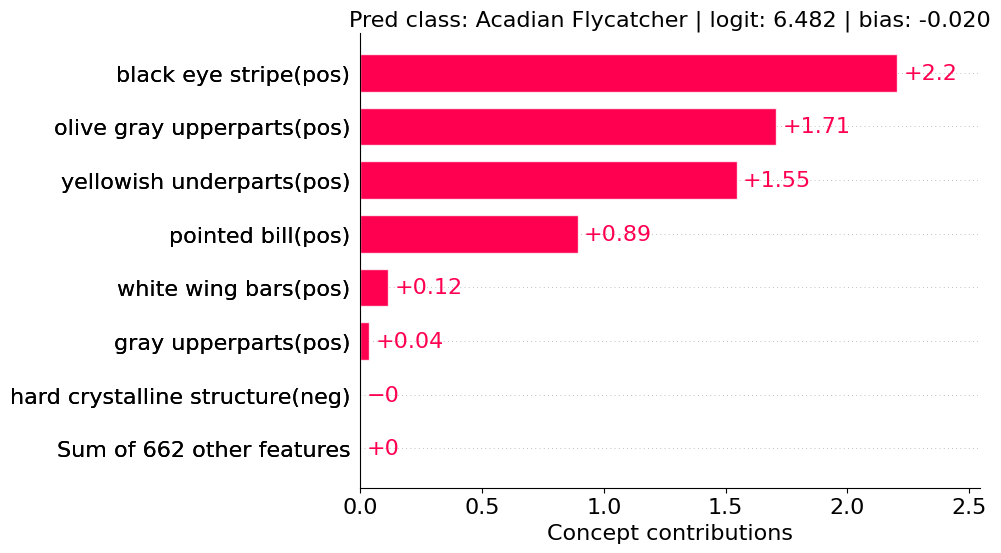}
    \label{fig:sub10}
\end{subfigure}
\vfill
\begin{subfigure}{.25\linewidth}
    \centering
    \includegraphics[width=\linewidth]{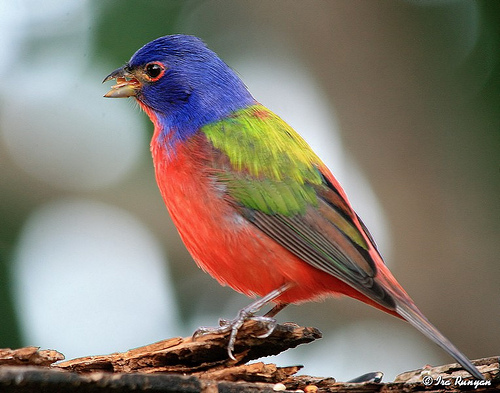}
    \label{fig:sub11}
\end{subfigure}
\begin{subfigure}{.70\linewidth}
    \centering
    \includegraphics[width=\linewidth]{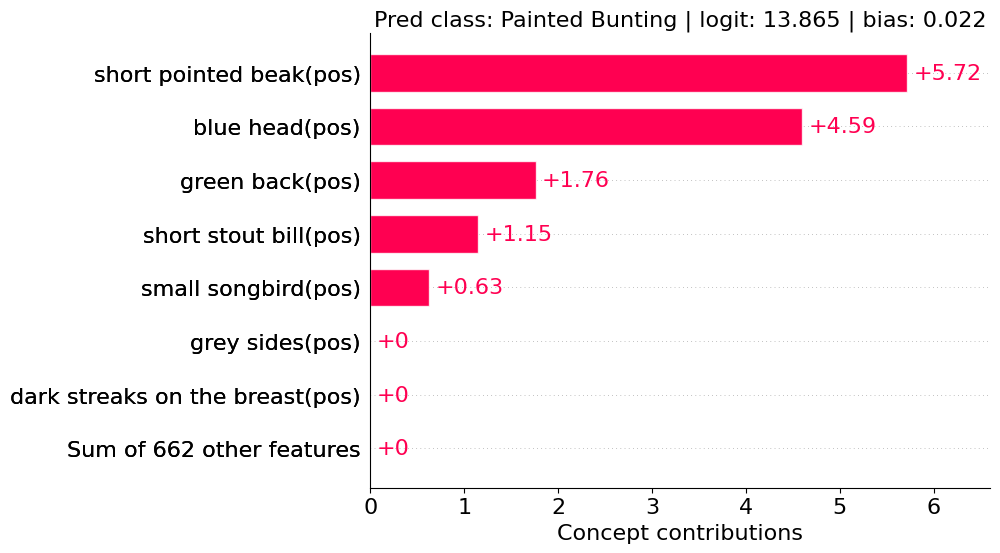}
    \label{fig:sub12}
\end{subfigure}
\vfill
\begin{subfigure}{.25\linewidth}
    \centering
    \includegraphics[width=\linewidth]{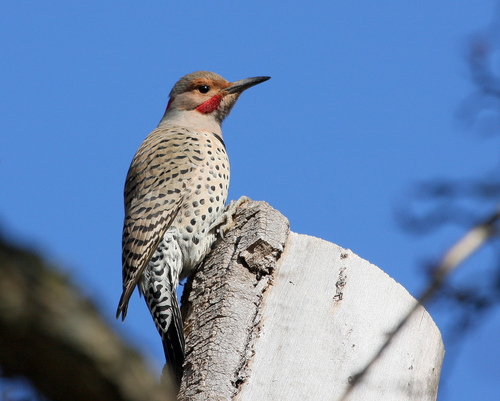}
    \label{fig:sub13}
\end{subfigure}
\begin{subfigure}{.70\linewidth}
    \centering
    \includegraphics[width=\linewidth]{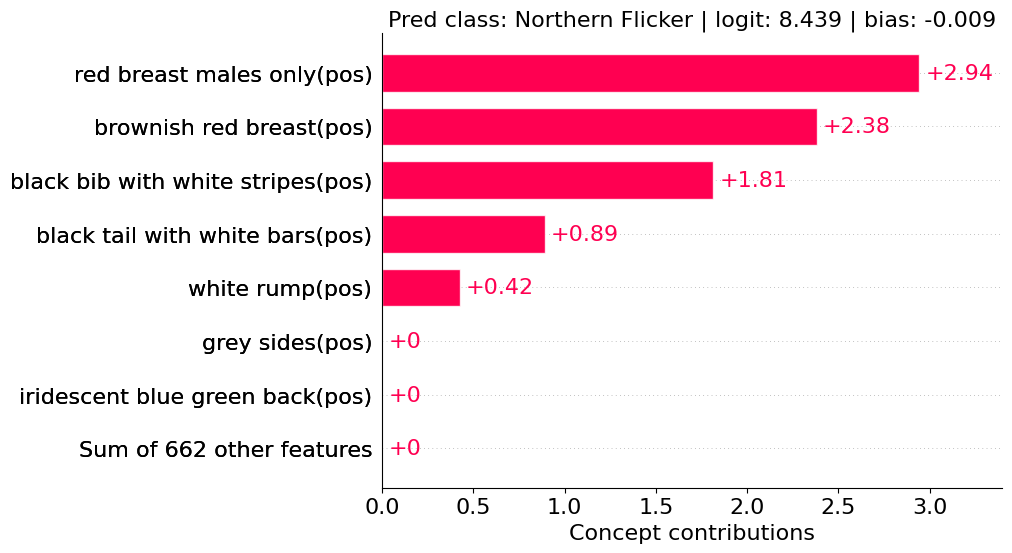}
    \label{fig:sub14}
\end{subfigure}
\vfill
\begin{subfigure}{.25\linewidth}
    \centering
    \includegraphics[width=\linewidth]{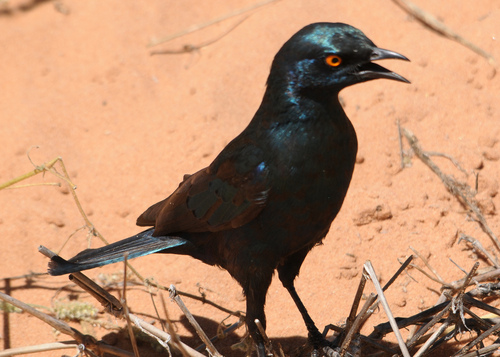}
    \label{fig:sub15}
\end{subfigure}
\begin{subfigure}{.70\linewidth}
    \centering
    \includegraphics[width=\linewidth]{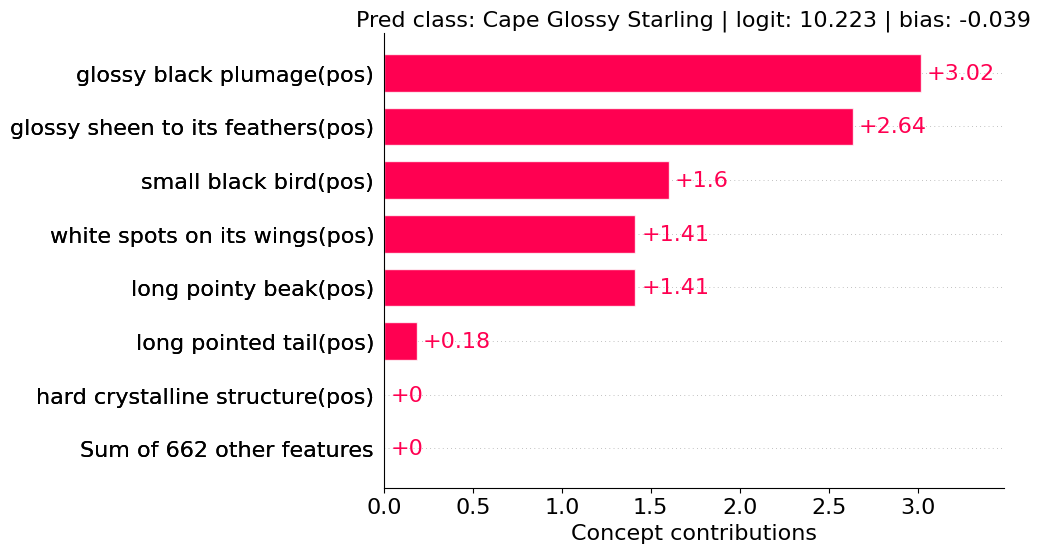}
    \label{fig:sub16}
\end{subfigure}
\caption{Randomly selected explanations for CUB (Part 2)}
\label{fig:ExplanationCUB2}
\end{figure*}

\begin{figure*}[ht!]
\centering
\begin{subfigure}{.25\linewidth}
    \centering
    \includegraphics[width=\linewidth]{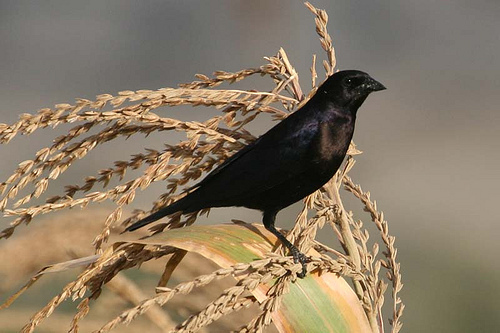}
    \label{fig:sub17}
\end{subfigure}
\begin{subfigure}{.70\linewidth}
    \centering
    \includegraphics[width=\linewidth]{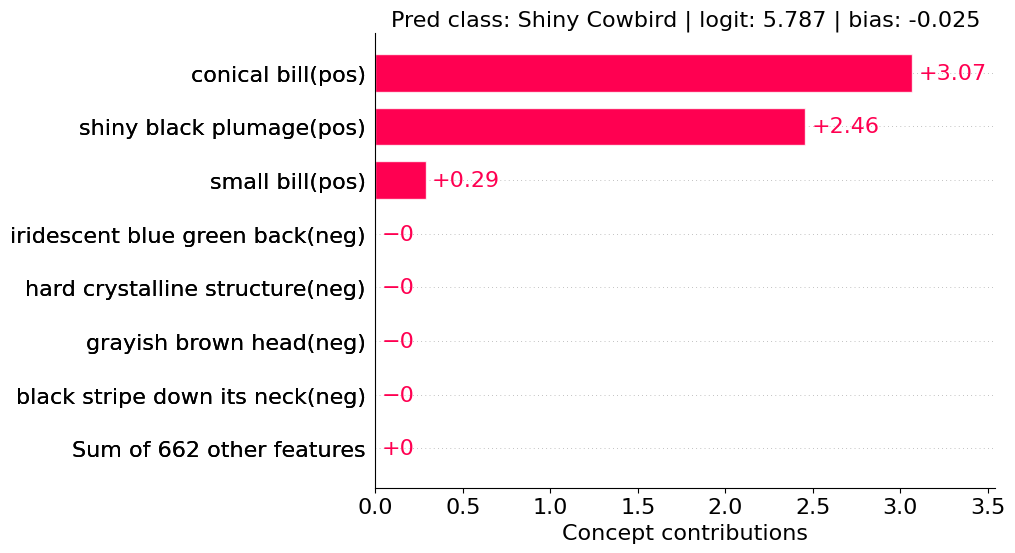}
    \label{fig:sub18}
\end{subfigure}
\vfill
\begin{subfigure}{.25\linewidth}
    \centering
    \includegraphics[width=\linewidth]{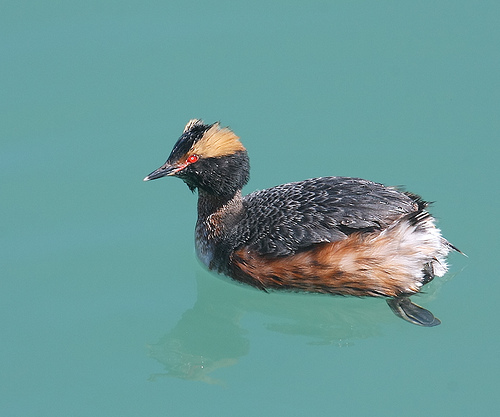}
    \label{fig:sub19}
\end{subfigure}
\begin{subfigure}{.70\linewidth}
    \centering
    \includegraphics[width=\linewidth]{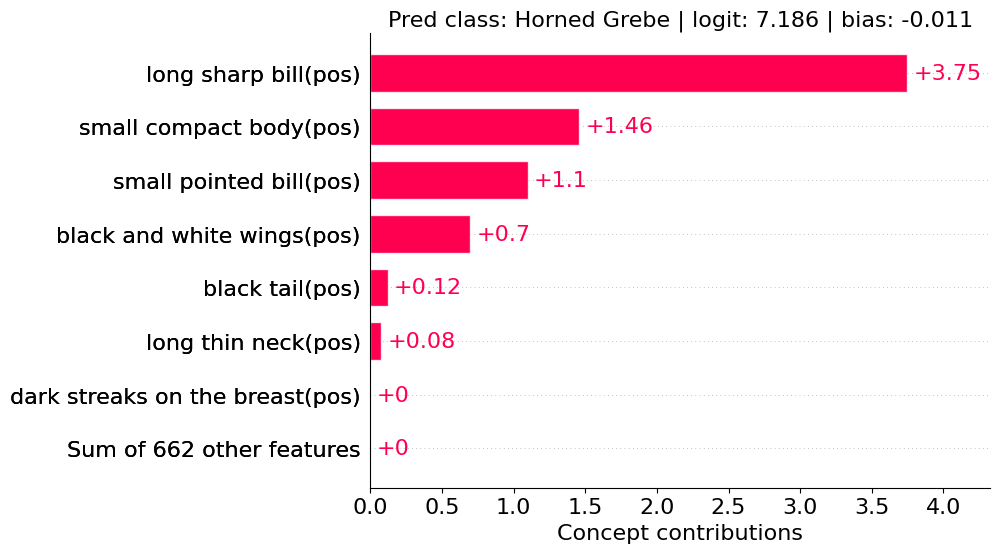}
    \label{fig:sub20}
\end{subfigure}
\caption{Randomly selected explanations for CUB (Part 3)}
\label{fig:ExplanationCUB3}
\end{figure*}

\newpage
\clearpage


\clearpage
\newpage

\section*{NeurIPS Paper Checklist}
\begin{enumerate}
\item {\bf Claims}
    \item[] Question: Do the main claims made in the abstract and introduction accurately reflect the paper's contributions and scope?
    \item[] Answer: \answerYes{} 
    \item[] Justification: Our introduction in \cref{sec:intro} summarizes the contribution and scope of this paper accurately. 
    \item[] Guidelines:
    \begin{itemize}
        \item The answer NA means that the abstract and introduction do not include the claims made in the paper.
        \item The abstract and/or introduction should clearly state the claims made, including the contributions made in the paper and important assumptions and limitations. A No or NA answer to this question will not be perceived well by the reviewers. 
        \item The claims made should match theoretical and experimental results, and reflect how much the results can be expected to generalize to other settings. 
        \item It is fine to include aspirational goals as motivation as long as it is clear that these goals are not attained by the paper. 
    \end{itemize}

\item {\bf Limitations}
    \item[] Question: Does the paper discuss the limitations of the work performed by the authors?
    \item[] Answer: \answerYes{} 
    \item[] Justification: We discuss the limitations in \cref{sec:conclusion}.
    \item[] Guidelines:
    \begin{itemize}
        \item The answer NA means that the paper has no limitation while the answer No means that the paper has limitations, but those are not discussed in the paper. 
        \item The authors are encouraged to create a separate "Limitations" section in their paper.
        \item The paper should point out any strong assumptions and how robust the results are to violations of these assumptions (e.g., independence assumptions, noiseless settings, model well-specification, asymptotic approximations only holding locally). The authors should reflect on how these assumptions might be violated in practice and what the implications would be.
        \item The authors should reflect on the scope of the claims made, e.g., if the approach was only tested on a few datasets or with a few runs. In general, empirical results often depend on implicit assumptions, which should be articulated.
        \item The authors should reflect on the factors that influence the performance of the approach. For example, a facial recognition algorithm may perform poorly when image resolution is low or images are taken in low lighting. Or a speech-to-text system might not be used reliably to provide closed captions for online lectures because it fails to handle technical jargon.
        \item The authors should discuss the computational efficiency of the proposed algorithms and how they scale with dataset size.
        \item If applicable, the authors should discuss possible limitations of their approach to address problems of privacy and fairness.
        \item While the authors might fear that complete honesty about limitations might be used by reviewers as grounds for rejection, a worse outcome might be that reviewers discover limitations that aren't acknowledged in the paper. The authors should use their best judgment and recognize that individual actions in favor of transparency play an important role in developing norms that preserve the integrity of the community. Reviewers will be specifically instructed to not penalize honesty concerning limitations.
    \end{itemize}

\item {\bf Theory Assumptions and Proofs}
    \item[] Question: For each theoretical result, does the paper provide the full set of assumptions and a complete (and correct) proof?
    \item[] Answer: \answerYes{} 
    \item[] Justification: We provide a full set of assumptions in our \cref{thm:leakage} and we provide complete and correct proof in  \cref{app:ThmProof}.
    \item[] Guidelines:
    \begin{itemize}
        \item The answer NA means that the paper does not include theoretical results. 
        \item All the theorems, formulas, and proofs in the paper should be numbered and cross-referenced.
        \item All assumptions should be clearly stated or referenced in the statement of any theorems.
        \item The proofs can either appear in the main paper or the supplemental material, but if they appear in the supplemental material, the authors are encouraged to provide a short proof sketch to provide intuition. 
        \item Inversely, any informal proof provided in the core of the paper should be complemented by formal proofs provided in appendix or supplemental material.
        \item Theorems and Lemmas that the proof relies upon should be properly referenced. 
    \end{itemize}

    \item {\bf Experimental Result Reproducibility}
    \item[] Question: Does the paper fully disclose all the information needed to reproduce the main experimental results of the paper to the extent that it affects the main claims and/or conclusions of the paper (regardless of whether the code and data are provided or not)?
    \item[] Answer: \answerYes{} 
    \item[] Justification: In \cref{sec:exp}, we present the settings of our experiments. We present more implementation details on \cref{app:expDetail}.
    \item[] Guidelines:
    \begin{itemize}
        \item The answer NA means that the paper does not include experiments.
        \item If the paper includes experiments, a No answer to this question will not be perceived well by the reviewers: Making the paper reproducible is important, regardless of whether the code and data are provided or not.
        \item If the contribution is a dataset and/or model, the authors should describe the steps taken to make their results reproducible or verifiable. 
        \item Depending on the contribution, reproducibility can be accomplished in various ways. For example, if the contribution is a novel architecture, describing the architecture fully might suffice, or if the contribution is a specific model and empirical evaluation, it may be necessary to either make it possible for others to replicate the model with the same dataset, or provide access to the model. In general. releasing code and data is often one good way to accomplish this, but reproducibility can also be provided via detailed instructions for how to replicate the results, access to a hosted model (e.g., in the case of a large language model), releasing of a model checkpoint, or other means that are appropriate to the research performed.
        \item While NeurIPS does not require releasing code, the conference does require all submissions to provide some reasonable avenue for reproducibility, which may depend on the nature of the contribution. For example
        \begin{enumerate}
            \item If the contribution is primarily a new algorithm, the paper should make it clear how to reproduce that algorithm.
            \item If the contribution is primarily a new model architecture, the paper should describe the architecture clearly and fully.
            \item If the contribution is a new model (e.g., a large language model), then there should either be a way to access this model for reproducing the results or a way to reproduce the model (e.g., with an open-source dataset or instructions for how to construct the dataset).
            \item We recognize that reproducibility may be tricky in some cases, in which case authors are welcome to describe the particular way they provide for reproducibility. In the case of closed-source models, it may be that access to the model is limited in some way (e.g., to registered users), but it should be possible for other researchers to have some path to reproducing or verifying the results.
        \end{enumerate}
    \end{itemize}

\item {\bf Open access to data and code}
    \item[] Question: Does the paper provide open access to the data and code, with sufficient instructions to faithfully reproduce the main experimental results, as described in supplemental material?
    \item[] Answer: \answerYes{} 
    \item[] Justification: The link to the project webpage is provided in the abstract and the code will be released by poster deadline.
    \item[] Guidelines:
    \begin{itemize}
        \item The answer NA means that paper does not include experiments requiring code.
        \item Please see the NeurIPS code and data submission guidelines (\url{https://nips.cc/public/guides/CodeSubmissionPolicy}) for more details.
        \item While we encourage the release of code and data, we understand that this might not be possible, so “No” is an acceptable answer. Papers cannot be rejected simply for not including code, unless this is central to the contribution (e.g., for a new open-source benchmark).
        \item The instructions should contain the exact command and environment needed to run to reproduce the results. See the NeurIPS code and data submission guidelines (\url{https://nips.cc/public/guides/CodeSubmissionPolicy}) for more details.
        \item The authors should provide instructions on data access and preparation, including how to access the raw data, preprocessed data, intermediate data, and generated data, etc.
        \item The authors should provide scripts to reproduce all experimental results for the new proposed method and baselines. If only a subset of experiments are reproducible, they should state which ones are omitted from the script and why.
        \item At submission time, to preserve anonymity, the authors should release anonymized versions (if applicable).
        \item Providing as much information as possible in supplemental material (appended to the paper) is recommended, but including URLs to data and code is permitted.
    \end{itemize}

\item {\bf Experimental Setting/Details}
    \item[] Question: Does the paper specify all the training and test details (e.g., data splits, hyperparameters, how they were chosen, type of optimizer, etc.) necessary to understand the results?
    \item[] Answer: \answerYes{} 
    \item[] Justification: We include experimental details in \cref{sec:exp,app:expDetail}.
    \item[] Guidelines:
    \begin{itemize}
        \item The answer NA means that the paper does not include experiments.
        \item The experimental setting should be presented in the core of the paper to a level of detail that is necessary to appreciate the results and make sense of them.
        \item The full details can be provided either with the code, in appendix, or as supplemental material.
    \end{itemize}

\item {\bf Experiment Statistical Significance}
    \item[] Question: Does the paper report error bars suitably and correctly defined or other appropriate information about the statistical significance of the experiments?
    \item[] Answer: \answerNo{} 
    \item[] Justification: The experiments are very computationally expensive to repeat and measure the error bar. 
    \item[] Guidelines:
    \begin{itemize}
        \item The answer NA means that the paper does not include experiments.
        \item The authors should answer "Yes" if the results are accompanied by error bars, confidence intervals, or statistical significance tests, at least for the experiments that support the main claims of the paper.
        \item The factors of variability that the error bars are capturing should be clearly stated (for example, train/test split, initialization, random drawing of some parameter, or overall run with given experimental conditions).
        \item The method for calculating the error bars should be explained (closed form formula, call to a library function, bootstrap, etc.)
        \item The assumptions made should be given (e.g., Normally distributed errors).
        \item It should be clear whether the error bar is the standard deviation or the standard error of the mean.
        \item It is OK to report 1-sigma error bars, but one should state it. The authors should preferably report a 2-sigma error bar than state that they have a 96\% CI, if the hypothesis of Normality of errors is not verified.
        \item For asymmetric distributions, the authors should be careful not to show in tables or figures symmetric error bars that would yield results that are out of range (e.g. negative error rates).
        \item If error bars are reported in tables or plots, The authors should explain in the text how they were calculated and reference the corresponding figures or tables in the text.
    \end{itemize}

\item {\bf Experiments Compute Resources}
    \item[] Question: For each experiment, does the paper provide sufficient information on the computer resources (type of compute workers, memory, time of execution) needed to reproduce the experiments?
    \item[] Answer: \answerYes{} 
    \item[] Justification: We provide computational resources in \cref{app:expDetail}.
    \item[] Guidelines:
    \begin{itemize}
        \item The answer NA means that the paper does not include experiments.
        \item The paper should indicate the type of compute workers CPU or GPU, internal cluster, or cloud provider, including relevant memory and storage.
        \item The paper should provide the amount of compute required for each of the individual experimental runs as well as estimate the total compute. 
        \item The paper should disclose whether the full research project required more compute than the experiments reported in the paper (e.g., preliminary or failed experiments that didn't make it into the paper). 
    \end{itemize}
    
\item {\bf Code Of Ethics}
    \item[] Question: Does the research conducted in the paper conform, in every respect, with the NeurIPS Code of Ethics \url{https://neurips.cc/public/EthicsGuidelines}?
    \item[] Answer: \answerYes{} 
    \item[] Justification: Our research is conducted following NeurIPS Code of Ethics.
    \item[] Guidelines:
    \begin{itemize}
        \item The answer NA means that the authors have not reviewed the NeurIPS Code of Ethics.
        \item If the authors answer No, they should explain the special circumstances that require a deviation from the Code of Ethics.
        \item The authors should make sure to preserve anonymity (e.g., if there is a special consideration due to laws or regulations in their jurisdiction).
    \end{itemize}

\item {\bf Broader Impacts}
    \item[] Question: Does the paper discuss both potential positive societal impacts and negative societal impacts of the work performed?
    \item[] Answer: \answerNA{} 
    \item[] Justification:This work focus on technical development of making neural network models more interpretable.
    \item[] Guidelines:
    \begin{itemize}
        \item The answer NA means that there is no societal impact of the work performed.
        \item If the authors answer NA or No, they should explain why their work has no societal impact or why the paper does not address societal impact.
        \item Examples of negative societal impacts include potential malicious or unintended uses (e.g., disinformation, generating fake profiles, surveillance), fairness considerations (e.g., deployment of technologies that could make decisions that unfairly impact specific groups), privacy considerations, and security considerations.
        \item The conference expects that many papers will be foundational research and not tied to particular applications, let alone deployments. However, if there is a direct path to any negative applications, the authors should point it out. For example, it is legitimate to point out that an improvement in the quality of generative models could be used to generate deepfakes for disinformation. On the other hand, it is not needed to point out that a generic algorithm for optimizing neural networks could enable people to train models that generate Deepfakes faster.
        \item The authors should consider possible harms that could arise when the technology is being used as intended and functioning correctly, harms that could arise when the technology is being used as intended but gives incorrect results, and harms following from (intentional or unintentional) misuse of the technology.
        \item If there are negative societal impacts, the authors could also discuss possible mitigation strategies (e.g., gated release of models, providing defenses in addition to attacks, mechanisms for monitoring misuse, mechanisms to monitor how a system learns from feedback over time, improving the efficiency and accessibility of ML).
    \end{itemize}
    
\item {\bf Safeguards}
    \item[] Question: Does the paper describe safeguards that have been put in place for responsible release of data or models that have a high risk for misuse (e.g., pretrained language models, image generators, or scraped datasets)?
    \item[] Answer: \answerNA{} 
    \item[] Justification: The paper poses no such risks.
    \item[] Guidelines:
    \begin{itemize}
        \item The answer NA means that the paper poses no such risks.
        \item Released models that have a high risk for misuse or dual-use should be released with necessary safeguards to allow for controlled use of the model, for example by requiring that users adhere to usage guidelines or restrictions to access the model or implementing safety filters. 
        \item Datasets that have been scraped from the Internet could pose safety risks. The authors should describe how they avoided releasing unsafe images.
        \item We recognize that providing effective safeguards is challenging, and many papers do not require this, but we encourage authors to take this into account and make a best faith effort.
    \end{itemize}

\item {\bf Licenses for existing assets}
    \item[] Question: Are the creators or original owners of assets (e.g., code, data, models), used in the paper, properly credited and are the license and terms of use explicitly mentioned and properly respected?
    \item[] Answer: \answerYes{} 
    \item[] Justification: We properly cite the dataset and models we used. 
    \item[] Guidelines:
    \begin{itemize}
        \item The answer NA means that the paper does not use existing assets.
        \item The authors should cite the original paper that produced the code package or dataset.
        \item The authors should state which version of the asset is used and, if possible, include a URL.
        \item The name of the license (e.g., CC-BY 4.0) should be included for each asset.
        \item For scraped data from a particular source (e.g., website), the copyright and terms of service of that source should be provided.
        \item If assets are released, the license, copyright information, and terms of use in the package should be provided. For popular datasets, \url{paperswithcode.com/datasets} has curated licenses for some datasets. Their licensing guide can help determine the license of a dataset.
        \item For existing datasets that are re-packaged, both the original license and the license of the derived asset (if it has changed) should be provided.
        \item If this information is not available online, the authors are encouraged to reach out to the asset's creators.
    \end{itemize}

\item {\bf New Assets}
    \item[] Question: Are new assets introduced in the paper well documented and is the documentation provided alongside the assets?
    \item[] Answer: \answerNA{} 
    \item[] Justification: The paper does not release new assets.
    \item[] Guidelines:
    \begin{itemize}
        \item The answer NA means that the paper does not release new assets.
        \item Researchers should communicate the details of the dataset/code/model as part of their submissions via structured templates. This includes details about training, license, limitations, etc. 
        \item The paper should discuss whether and how consent was obtained from people whose asset is used.
        \item At submission time, remember to anonymize your assets (if applicable). You can either create an anonymized URL or include an anonymized zip file.
    \end{itemize}

\item {\bf Crowdsourcing and Research with Human Subjects}
    \item[] Question: For crowdsourcing experiments and research with human subjects, does the paper include the full text of instructions given to participants and screenshots, if applicable, as well as details about compensation (if any)? 
    \item[] Answer: \answerNA{} 
    \item[] Justification: The paper does not involve crowdsourcing nor research with human subjects.
    \item[] Guidelines:
    \begin{itemize}
        \item The answer NA means that the paper does not involve crowdsourcing nor research with human subjects.
        \item Including this information in the supplemental material is fine, but if the main contribution of the paper involves human subjects, then as much detail as possible should be included in the main paper. 
        \item According to the NeurIPS Code of Ethics, workers involved in data collection, curation, or other labor should be paid at least the minimum wage in the country of the data collector. 
    \end{itemize}

\item {\bf Institutional Review Board (IRB) Approvals or Equivalent for Research with Human Subjects}
    \item[] Question: Does the paper describe potential risks incurred by study participants, whether such risks were disclosed to the subjects, and whether Institutional Review Board (IRB) approvals (or an equivalent approval/review based on the requirements of your country or institution) were obtained?
    \item[] Answer: \answerNA{} 
    \item[] Justification: The paper does not involve crowdsourcing nor research with human subjects.
    \item[] Guidelines:
    \begin{itemize}
        \item The answer NA means that the paper does not involve crowdsourcing nor research with human subjects.
        \item Depending on the country in which research is conducted, IRB approval (or equivalent) may be required for any human subjects research. If you obtained IRB approval, you should clearly state this in the paper. 
        \item We recognize that the procedures for this may vary significantly between institutions and locations, and we expect authors to adhere to the NeurIPS Code of Ethics and the guidelines for their institution. 
        \item For initial submissions, do not include any information that would break anonymity (if applicable), such as the institution conducting the review.
    \end{itemize}

\end{enumerate}
\end{document}